\theoremstyle{definition}
\newtheorem{definition}{Definition}
\newtheorem{problem}{Problem}
\newtheorem{thm}{Theorem}
\newtheorem{obs}{Observation}
\newcommand{\mat}[1]{\bm{#1}}
\newcommand{\mrow}[2]{\bm{#1}(#2, :)}
\newcommand{\mcol}[2]{\bm{#1}(:, #2)}
\newcommand{\mt}[1]{{#1}'}
\newcommand{\mi}[1]{{#1}^\dagger}
\newcommand{\matri}[2]{{#1}_{(#2)}}
\newcommand{\order}{M}
\newcommand{\modeSize}[1]{N_{#1}}
\newcommand{\reconst}[1]{\widetilde{#1}}
\newcommand{\fm}[2]{{#1^{(#2)}}}
\newcommand{\fmInF}[2]{\bm{#1}^{(#2)}}
\newcommand{\rank}{R}
\newcommand{\ltwotext}{\ell^{2}}
\newcommand{\khatri}{\odot}
\newcommand{\hada}{\ast}
\newcommand{\multiKha}[3]{\odot_{m \neq #2}^{#3} \bm{#1}^{(m)}}
\newcommand{\multiKhaEq}[3]{\odot_{m = #2}^{#3} \bm{#1}^{(m)}}
\newcommand{\multiKhaN}[3]{\odot_{n \neq #2}^{#3} \bm{#1}^{(n)}}
\newcommand{\ata}[2]{\mt{\fm{#1}{#2}} \fm{#1}{#2}}
\newcommand{\multiHada}[3]{\ast_{m \neq #2}^{#3} \ata{#1}{m}}
\newcommand{\multiHadaEq}[3]{\ast_{m = #2}^{#3} \ata{#1}{m}}
\newcommand{\multiHadaN}[3]{\ast_{n \neq #2}^{#3} \ata{#1}{n}}
\newcommand{\multiHadaNCustom}[3]{\ast_{n \neq #2}^{#3} #1}
\newcommand{\dims}[2]{\mathbb{R}^{#1 \times #2}}
\newcommand{\threeDims}[3]{\mathbb{R}^{#1 \times #2 \times \cdots \times #3}}
\newcommand{\threeDimsBack}[3]{\mathbb{R}^{#1 \times \cdots \times #2 \times #3}}
\newcommand{\twoDims}[2]{\mathbb{R}^{#1 \times \cdots \times #2}}
\newcommand{\cpdWord}{CPD\xspace} 
\newcommand{\cpd}[1]{\llbracket #1 \rrbracket}
\newcommand{\tensorInF}[1]{\bm{\mathcal{#1}}}
\newcommand{\sumP}[2]{\sum\nolimits_{#1=1}^{#2}}
\newcommand{\ltwosmall}[1]{\|#1\|_{2}}
\newcommand{\lfro}[1]{\Big\|#1\Big\|_{F}}
\newcommand{\lfrosmall}[1]{\|#1\|_{F}} 
\newcommand{\lfrof}{\|\cdot\|_{F}} 
\newcommand{\batch}{T}
\newcommand{\tLen}{W}
\newcommand{\tensorWindow}[2]{\tensorInF{#1}\left(#2, \tLen\right)}
\newcommand{\matset}[2]{\{#1\}_{m=1}^{#2}}
\newcommand{\degree}[1]{deg(#1)}
\newcommand{\method}{\textsc{SliceNStitch}\xspace}
\newcommand{\wals}{\textsc{SNS\textsubscript{mat}}\xspace}
\newcommand{\selals}{\textsc{SNS\textsubscript{vec}}\xspace}
\newcommand{\selccd}{\textsc{SNS\textsuperscript{+}\textsubscript{vec}}\xspace}
\newcommand{\hyals}{\textsc{SNS\textsubscript{rnd}}\xspace}
\newcommand{\hyccd}{\textsc{SNS\textsuperscript{+}\textsubscript{rnd}}\xspace}
\newcommand{\walslong}{\textsc{SliceNStitch-Matrix}\xspace}
\newcommand{\selalslong}{\textsc{SliceNStitch-Vector}\xspace}
\newcommand{\hyalslong}{\textsc{SliceNStitch-Random}\xspace}
\newcommand{\ccdlong}{\textsc{SliceNStitch-Stable}\xspace}
\newcommand\blue[1]{\textcolor{black}{#1}}
\definecolor{forestgreen}{rgb}{0.13, 0.55, 0.13}
\newcommand\green[1]{\textcolor{black}{#1}}
\newcommand{\smallsection}[1]{{\vspace{0.05in} \noindent {\bf{\underline{\smash{#1}:}}}}}
\newcommand{\cp}{CANDECOMP/PARAFAC\xspace}
\newcommand{\tuplen}{e_{n}{=}(i_{1},\cdot\cdot\cdot,i_{\order-1},v_n)}
\newcommand{\pairn}{\left(\tuplen,t_n\right)}
\newcommand{\pairnshort}{\left(e_n,t_n\right)}
\newcommand{\entryi}{x_{i_{1},i_{2},\cdots,i_{\order}}}
\newcommand{\indexi}{(i_{1},i_{2},\cdots,i_{\order})}
\newcommand{\indeximinus}{(i_{1},\cdots,i_{\order-1})}
\newcommand{\indexiW}{(i_{1},\cdots,i_{\order-1},W)}
\newcommand{\entryiW}{x_{i_{1},\cdots,i_{\order-1},W}}
\newcommand{\indexiw}{(i_{1},\cdots,i_{\order-1},w)}
\newcommand{\entryiWw}{x_{i_{1},\cdots,i_{\order-1},W-w}}
\newcommand{\entryiWwo}{x_{i_{1},\cdots,i_{\order-1},W-w+1}}
\newcommand{\entryione}{x_{i_{1},\cdots,i_{\order-1},1}}
\newcommand{\indexj}{j_{1},\cdots,j_{\order}}
\newcommand{\entryjshort}{x_{J}}
\newcommand{\entryjbarshort}{\bar{x}_{J}}
\newcommand{\entryjtildeshort}{\tilde{x}_{J}}
\newcommand{\indexiParam}[1]{i_{1},\cdots,i_{\order-1},#1}
\newcommand{\tabbad}[1]{{#1 (\faThumbsODown)}}
\newcommand{\tabgood}[1]{{#1 (\faThumbsOUp)}}
\newcommand{\concat}{||}
\newcommand{\thre}{\theta}
\newcommand{\clipThre}{\eta}
\def\BibTeX{{\rm B\kern-.05em{\sc i\kern-.025em b}\kern-.08em
    T\kern-.1667em\lower.7ex\hbox{E}\kern-.125emX}}
\def\endthebibliography{%
	\def\@noitemerr{\@latex@warning{Empty `thebibliography' environment}}%
	\endlist
}
\begin{document}

\title{SliceNStitch: Continuous CP Decomposition of \blue{Sparse} Tensor Streams}

\author{
\IEEEauthorblockN{Taehyung Kwon\IEEEauthorrefmark{1}\IEEEauthorrefmark{2}, Inkyu Park\IEEEauthorrefmark{1}\IEEEauthorrefmark{2}, Dongjin Lee\IEEEauthorrefmark{4}, and Kijung Shin\IEEEauthorrefmark{2}\IEEEauthorrefmark{4}}
\IEEEauthorblockA{\IEEEauthorrefmark{2}Graduate School of AI and \IEEEauthorrefmark{4}School of Electrical Engineering, KAIST, Daejeon, South Korea
\\\{taehyung.kwon, inkyupark, dongjin.lee, kijungs\}@kaist.ac.kr}
}


\maketitle

\begingroup\renewcommand\thefootnote{*}
\footnotetext{Equal Contribution}
\endgroup

\begin{abstract}
	Consider traffic data (i.e., triplets in the form of source-destination-timestamp) that grow over time.
Tensors (i.e., multi-dimensional arrays) with a time mode are widely used for modeling and analyzing such multi-aspect data streams.
In such tensors, however, new entries are added only once per period, which is often an hour, a day, or even a year.
This discreteness of tensors has limited their usage for real-time applications, where new data should be analyzed instantly as it arrives.

How can we analyze time-evolving multi-aspect \green{sparse} data `continuously' using tensors where time is `discrete'?
We propose \method for continuous CANDECOMP/PARAFAC (CP)
decomposition, which has numerous time-critical applications, including anomaly detection, recommender systems, and stock market prediction.
\method changes the starting point of each period adaptively, based on the current time, and updates factor matrices (i.e., outputs of CP decomposition) instantly as new data arrives.
We show, theoretically and experimentally, that \method is (1) \textit{`Any time'}: updating factor matrices immediately without having to wait until the current time period ends, (2) \textit{Fast}: with constant-time updates up to \blue{$464\times$} faster than online methods, and (3) \textit{Accurate}: with \blue{fitness} comparable (specifically, $72-100\%$) to offline methods.

\end{abstract}


\section{Introduction}
\label{sec:intro}
Tensors (i.e., multidimensional arrays) are simple but powerful tools widely used for representing time-evolving multi-aspect data from various fields, including bioinformatics \cite{zhao2005tricluster}, data mining \cite{koutra2012tensorsplat,cai2015facets}, text mining \cite{bader2008discussion}, and cybersecurity \cite{bruns2016cyber, fanaee2016tensor}.
For example, consider a traffic dataset given as triplets in the form of (source, destination, timestamp).
The dataset is naturally represented as a $3$-mode tensor whose three modes correspond to sources, destinations, and time ranges, respectively (see Fig.~\ref{subfig:coarse} and \ref{subfig:fine}).
Each $(i,j,k)$-th entry of the tensor represents the amount of traffic from the $i$-th source to the $j$-th destination during the $k$-th time range.

\begin{figure}[t]
    \vspace{-2.5mm}
    \centering
    \subfloat[Coarse-grained Tensor]{\includegraphics[width=0.417\linewidth]{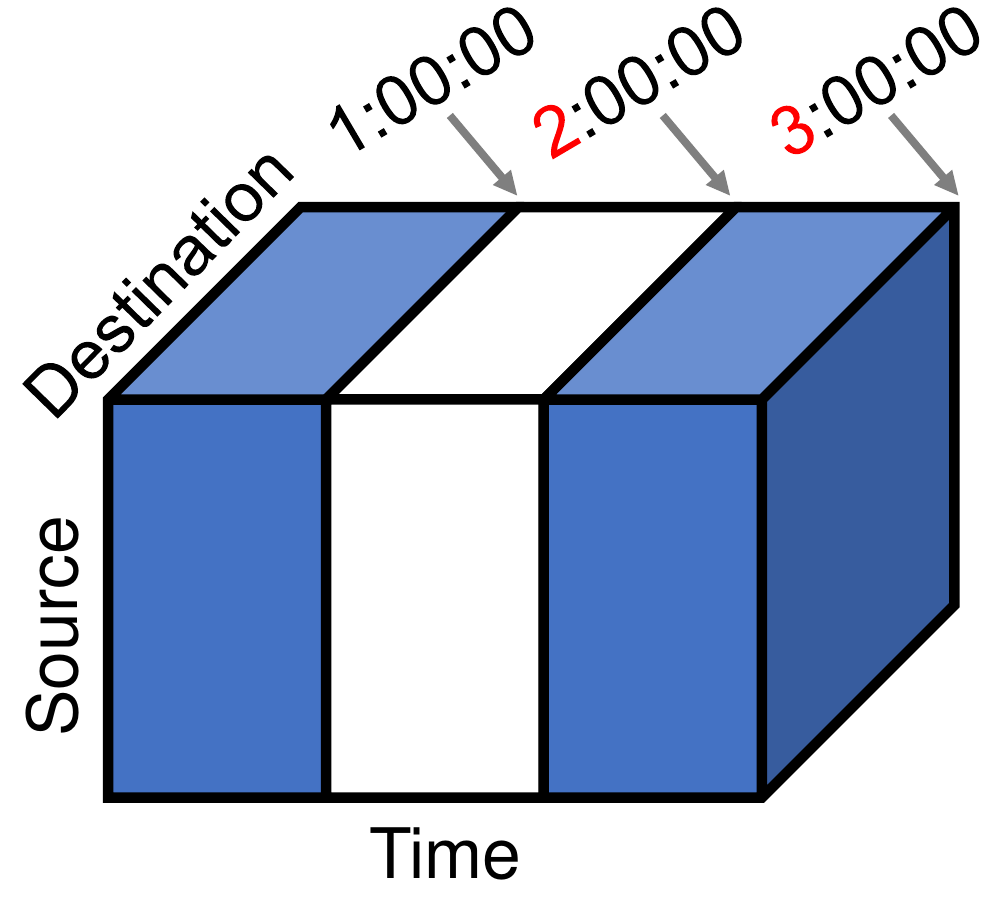}\label{subfig:coarse}}
    \subfloat[Fine-grained Tensor]{\includegraphics[width=0.4155\linewidth]{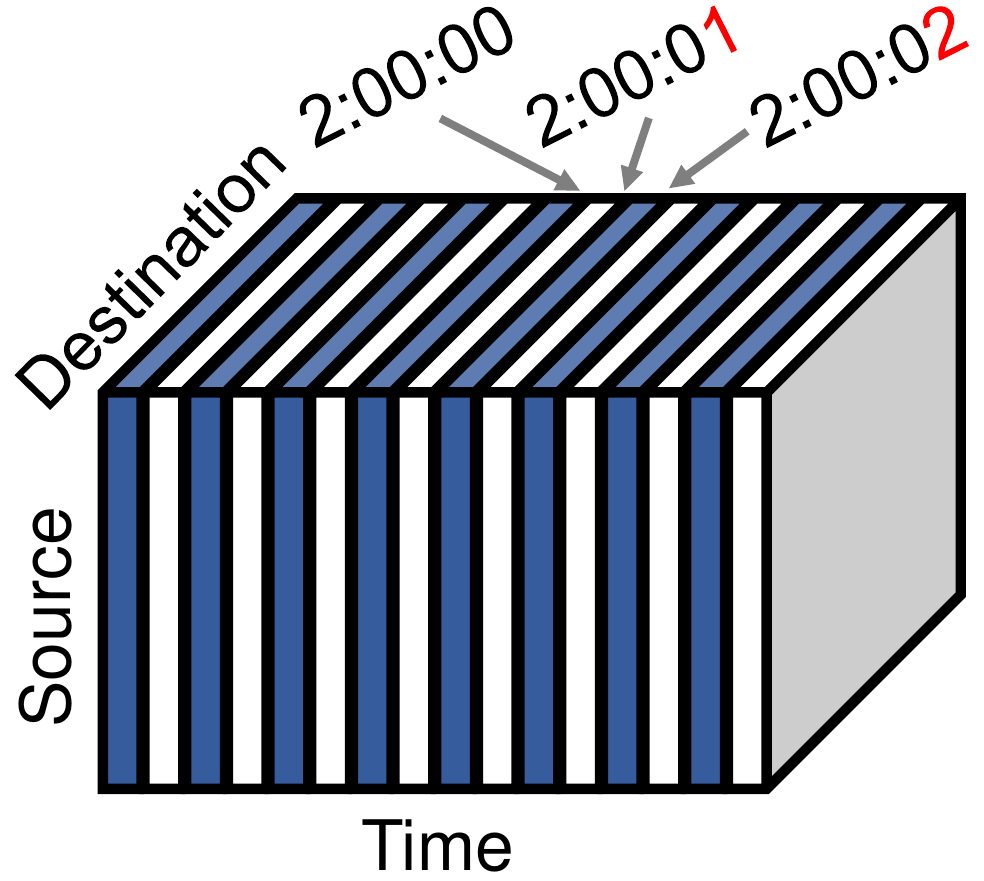}\label{subfig:fine}} \\ \vspace{-3mm}
    \subfloat{\includegraphics[width= 0.85\linewidth]{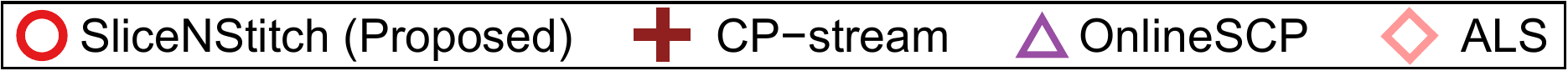}} \\ \vspace{-3mm} \addtocounter{subfigure}{-1}
    \subfloat[\blue{Average Fitness}]{\includegraphics[width=0.33\linewidth]{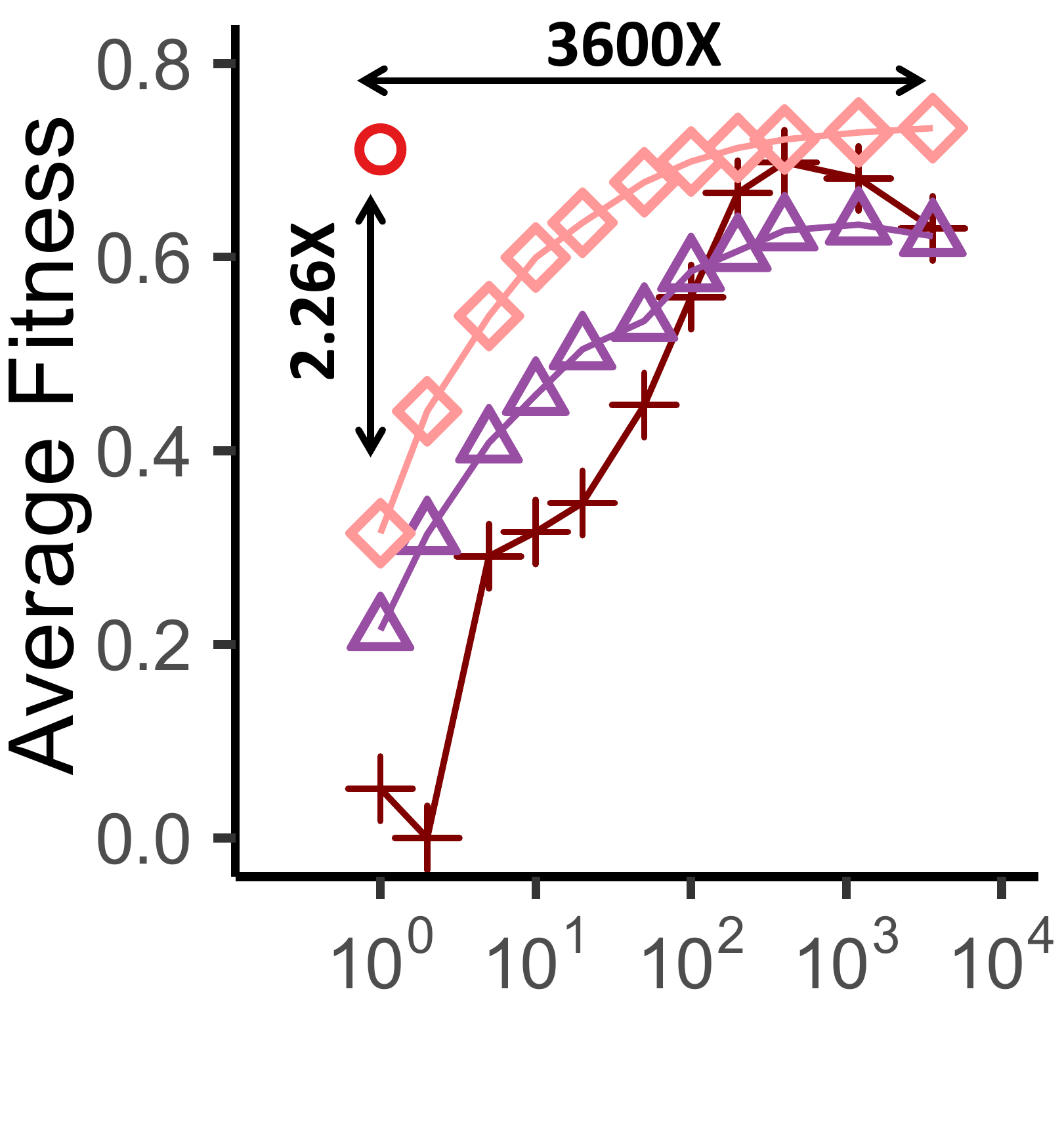}\label{subfig:simple_cp:fitness}}
    \subfloat[\green{Number of Parameters}]{\includegraphics[width=0.33\linewidth]{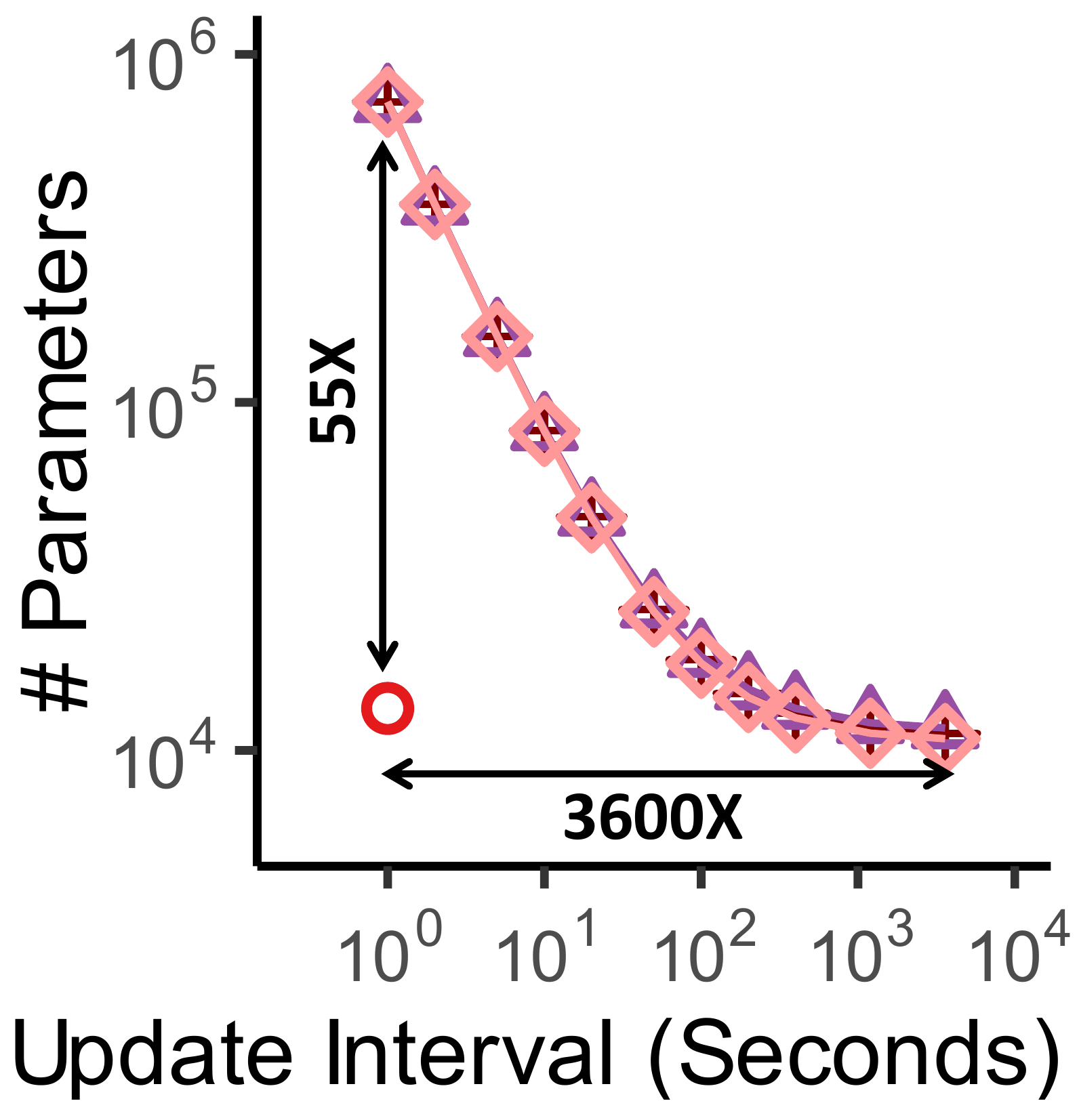}\label{subfig:simple_cp:param}}
    \subfloat[\blue{Runtime per Update}]{\includegraphics[width=0.33\linewidth]{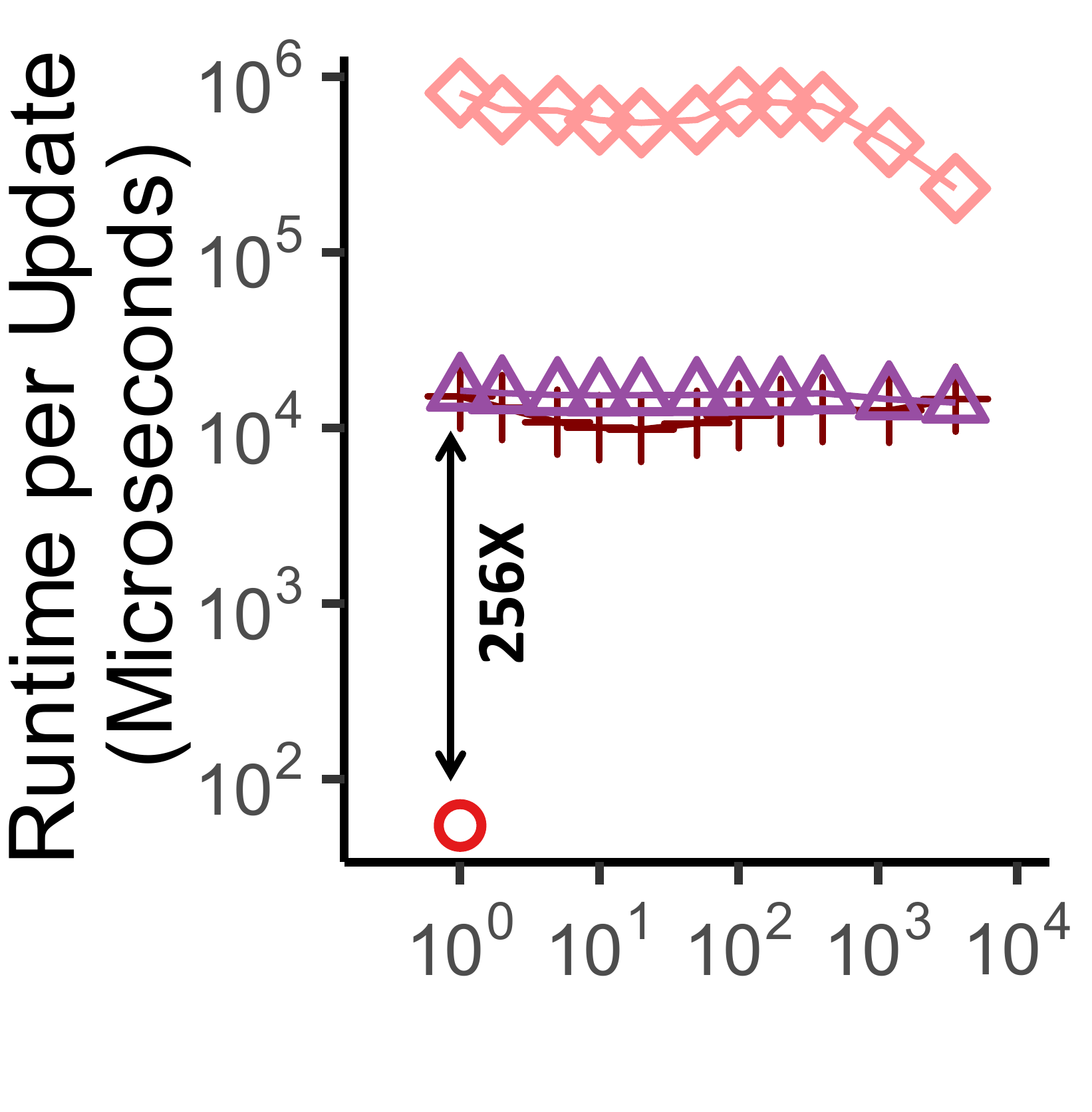}\label{subfig:simple_cp:time}} \\ \vspace{-2mm}
    \subfloat[Summary: \method is fast, space-efficient, and accurate.]{
        \scalebox{0.74}{
            \begin{tabular}{l||c|c||c}
                \toprule
                                & \multicolumn{2}{c||}{\green{Methods Based on Conventional CPD}} & \textbf{\method}                            \\
                                & (Coarse-grained)                                         & (Fine-grained)   & \textbf{(Proposed)}      \\
                \midrule
                Update Interval & \tabbad{Long}                                            & \tabgood{Short}  & \textbf{\tabgood{Short}} \\
                Parameters      & \tabgood{Few}                                            & \tabbad{Many}    & \textbf{\tabgood{Few}}   \\
                \green{Fitness}        & \tabgood{High}                                           & \tabbad{Low}     & \textbf{\tabgood{High}}  \\
                \bottomrule
            \end{tabular}
        }
        \label{Tab:summary}
    }   \\
    \caption{ \label{fig:motivation}
    	\blue{
        \textbf{Advantages of \method.}
        \protect\subref{subfig:coarse} \protect\subref{subfig:fine} Coarse-grained and fine-grained tensors where update intervals are $1$ hour and $1$ second, respectively.
        \protect\subref{subfig:simple_cp:fitness} \protect\subref{subfig:simple_cp:param}        
		Given a tensor stream (see Section~\ref{sec:exp:continuous} for detailed experimental settings), \method updates factor matrices (i.e., outputs of CPD) instantly (i.e., with a short update interval) while achieving high fitness with a small number of parameters.
		\protect\subref{subfig:simple_cp:time} Even runtime per update is shorter in \method than in the three considered methods based on conventional CPD.
		\protect\subref{Tab:summary} A summary of the comparisons in \protect\subref{subfig:simple_cp:fitness} and \protect\subref{subfig:simple_cp:param}.
    }}
\end{figure}

Once we represent data as a tensor, many tools \cite{hitchcock1927expression,carroll1970analysis,harshman1972parafac2,tucker1966some} are available for tensor analysis, and \cp decomposition (CPD) \cite{hitchcock1927expression} is one of the most widely used tools.
Given an $\order$-mode tensor, CPD gives a low-rank approximation, specifically a sum of few outer products of $\order$ vectors, which form $\order$ factor matrices. 
CPD has been used for various applications, and many of them, including anomaly detection \cite{koutra2012tensorsplat}, recommendation \cite{yao2015context,shin2016fully}, stock market prediction \cite{spelta2017financial}, and weather forecast \cite{xu2018muscat}, are time-critical.

While tensors and CPD are powerful tools, they are not suitable for real-time applications since
time in them advances in a discrete manner, specifically once per period.
For example, in the tensor in Fig.~\ref{subfig:coarse}, each slice represents the amounts of traffic for one hour, and thus the tensor grows with a new slice only once per hour.
That is, it may take one hour for new traffic to be applied to the tensor.
For instance, traffic occurring at 2:00:01 is applied to the tensor at 3:00:00.
Due to this discreteness, the outputs of CPD (i.e., factor matrices) are updated also only once per period even if incremental algorithms \cite{smith2018streaming, zhou2018online, gujral2018sambaten} are used.




How can we perform CPD `continuously' for real-time applications?
A potential solution is to make the granularity of the time mode extremely fine.
According to our preliminary studies, however, it causes the following problems:
\vspace{-0.5mm}
\begin{itemize}[leftmargin=*]
    \item \textbf{Degradation of \green{Fitness} (Fig.~\ref{subfig:simple_cp:fitness})}
          An extremely fine-grained time mode results in an extremely sparse tensor, which is known to be of high rank \cite{pasricha2019adaptive}, and thus it degrades the \green{fitness} of the low-rank approximation, such as conventional CPD. 
          As shown in Fig.~\ref{subfig:simple_cp:fitness}, the finer the time mode is, the lower the fitness of CPD is.

    \item \textbf{Increase in the Number of Parameters (Fig.~\ref{subfig:simple_cp:param}):}
          The parameters of CPD are the entries of factor matrices, as explained in Section~\ref{sec:prelim}, and the size of each factor matrix is proportional to the length of the corresponding mode of the input tensor.
          An extremely fine-grained time mode leads to an extremely long time mode and thus extremely many parameters, which require huge computational and storage resources.
          As shown in Fig.~\ref{subfig:simple_cp:param}, the finer the time mode is, the more parameters CPD requires.
\end{itemize}
In this work, we propose \method for continuous CPD without increasing the number of parameters.
It consists of a data model and online algorithms for CPD.
From the \textbf{data model aspect}, we propose the continuous tensor model for time-evolving tensors. In the model, the starting point of each period changes adaptively, based on the current time, so that newly arrived data are applied to the input tensor instantly.
From the \textbf{algorithmic aspect}, we propose a family of online algorithms for CPD of \green{sparse tensors} in the continuous tensor model.
They update factor matrices instantly in response to each change in an entry of the input tensor.
To the best of our knowledge, they are the first algorithms for this purpose, and existing online CPD algorithms \cite{smith2018streaming, zhou2018online, gujral2018sambaten} update factor matrices only once per period.
We summarize our contributions as follows:
\begin{itemize}[leftmargin=*]
    \item{\textbf{New data model}}: We propose the continuous tensor model, which allows for processing time-evolving tensors continuously in real-time for time-critical applications.
    \item{\textbf{Fast online algorithms}}: We propose the first online algorithms that update outputs of CPD instantly in response to changes in an entry.
          Their \green{fitness} is comparable (specifically, $72-100\%$) even to offline competitors, and
          an update by them is up to \blue{$464\times$} faster than that of online competitors. 

    \item{\textbf{Extensive experiments}}: We extensively evaluate our algorithms on 4 real-world \green{sparse} tensors, and based on the results, we provide practitioner's guides to algorithm selection and hyperparameter tuning.
\end{itemize}

\textbf{Reproducibility}: The code and datasets used in the paper are available at \url{https://github.com/DMLab-Tensor/SliceNStitch}.

\textbf{Remarks:} \blue{CPD may not be the best decomposition model for tensors with time modes, and there exist a number of alternatives, such as Tucker, INDSCAL, and DEDICOM (see \cite{kolda2009tensor}).
	Nevertheless, as a first step, we focus on making CPD `continuous' due to its prevalence and simplicity.
	We leave extending our approach to more models as future work.
}

In Section~\ref{sec:prelim}, we introduce some notations and preliminaries.
In Section~\ref{sec:problem}, we provide a formal problem definition.
In Sections \ref{sec:model} and \ref{sec:algo}, we present the model and optimization algorithms of \method, respectively.
In Section \ref{sec:exp}, we review our experiments.
After discussing some related works in Section \ref{sec:related}, we conclude in Section \ref{sec:conclusion}.

\section{Preliminaries}
\label{sec:prelim}
In this section, we introduce some notations and preliminaries.
Some frequently-used symbols are listed in Table~\ref{Tab:Tos}.

\begin{table}
	\centering
	\caption{Table of frequently-used symbols} \label{Tab:Tos}
	\scalebox{1.0}{
	\begin{tabular}{c|l}
		\toprule
		\textbf{Symbol} & \textbf{Definition} \\
		\midrule
		$\mat{A}$ & a matrix \\
		
		$\mrow{A}{i}, \mcol{A}{i}$ & $i$-th row of $\mat{A}$, $i$-th column of $\mat{A}$\\
		
		$\mt{\mat{A}}$, $\mi{\mat{A}}$ & transpose of $\mat{A}$, pseudoinverse of $\mat{A}$ \\ 		
		
		$\khatri$, $\hada$ & Khatri-Rao product, Hadamard product \\	
		
		\midrule	
		
		$\tensorInF{X}$ & a tensor \\
		
		$\order$ & order of $\tensorInF{X}$ \\
		
		$\modeSize{m}$ & number of indices in the $m$-th mode of $\tensorInF{X}$\\
		
		$\entryi$ & $\indexi$-th entry of $\tensorInF{X}$ \\		
		
		$|\tensorInF{X}|$ & number of non-zeros of $\tensorInF{X}$ \\
		$\lfrosmall{\tensorInF{X}}$ & Frobenius norm of $\tensorInF{X}$ \\
		$\matri{\mat{X}}{m}$ & mode-$m$ matricization of $\tensorInF{X}$\\		
		
		\midrule		
		$\rank$ & rank of CPD \\
		$\fm{\mat{A}}{m}$ & factor matrix of the $m$-th mode \\
		$\reconst{\tensorInF{X}}$ & an approximation of $\tensorInF{X}$ by CPD\\
		
		\midrule
		$\tensorWindow{D}{t}$ & tensor window at time $t$ \\
		$\Delta \tensorInF{X}$ & a change in $\tensorInF{X}$\\
		$\tLen$ & number of indices in the time mode\\
		$\degree{m, i_m}$ & number of non-zeros with $m$-th mode index $i_m$ \\
		$a^{(m)}_{ij}$ & $(i, j)$-th entry of $\fmInF{A}{m}$ \\
		\bottomrule
	\end{tabular}
	}
\end{table}

\smallsection{Basic Notations}
Consider a matrix $\mat{A}$.
We denote its $i$-th row by $\mrow{\mat{A}}{i}$ and its $i$-th column by $\mcol{\mat{A}}{i}$.
We denote the transpose of $\mat{A}$ by $\mt{\mat{A}}$ and the Moore-Penrose pseudoinverse of $\mat{A}$ by $\mi{\mat{A}}$. 
We denote the Khatri-Rao and Hadamard products by $\khatri$ and $\hada$, respectively. 
See Section I of \cite{appendix} for the definitions of the Moore-Penrose pseudoinverse and both products.

Consider an $\order$-mode \blue{sparse} tensor $\tensorInF{X} \in \threeDims{\modeSize{1}}{\modeSize{2}}{\modeSize{\order}}$, where $\modeSize{m}$ denotes the length of the $m$-th mode. 
We denote each $\indexi$-th entry of $\tensorInF{X}$ by $\entryi$.
We let $|\tensorInF{X}|$ be the number of non-zero entries in $\tensorInF{X}$, and we let $\ltwosmall{\tensorInF{X}}$ be the Frobenius norm  of $\tensorInF{X}$.
We let $\matri{\mat{X}}{m}$ be the matrix obtained by matricizing $\tensorInF{X}$ along the $m$-th mode. 
See Section I of \cite{appendix} for the definitions of the Frobenius norm and matricization.


\smallsection{CANDECOMP/PARAFAC Decomposition (CPD)} Given an $\order$-mode tensor $\tensorInF{X}\in \threeDims{\modeSize{1}}{\modeSize{2}}{\modeSize{\order}}$ and rank $R\in \mathbb{N}$, 
CANDECOMP/PARAFAC Decomposition (CPD) \cite{hitchcock1927expression} gives a rank-$R$ approximation of $\tensorInF{X}$, expressed as the sum of $R$ rank-$1$ tensors (i.e., outer products of vectors) as follows:
\begin{align}
	\tensorInF{X} & \approx \sumP{r}{R} \bm{a}^{\left(1\right)}_{r} \circ \bm{a}^{\left(2\right)}_{r} \circ \cdots \circ \bm{a}^{\left(M\right)}_{r}, \nonumber \\
	& \equiv \sumP{r}{\rank} \fm{\mat{A}}{1}(:, r) \circ \fm{\mat{A}}{2}(:, r) \circ \cdots \circ \fm{\mat{A}}{\order}(:, r), \nonumber \\
	& \equiv \cpd{\fm{\mat{A}}{1}, \fm{\mat{A}}{2}, \cdots , \fm{\mat{A}}{\order}} \equiv \reconst{\tensorInF{X}}, \label{eq:reconstruct}
\end{align}
where $\bm{a}^{\left(m\right)}_{r} \in \mathbb{R}^{N_m}$ for all $r\in\{1,2,\cdots,R\}$, and $\circ$ denotes the outer product (see Section I of \cite{appendix} for the definition). 
Each $\fm{\mat{A}}{m}\equiv [\bm{a}^{\left(m\right)}_{1} ~ \bm{a}^{\left(m\right)}_{2} ~ \cdots \bm{a}^{\left(m\right)}_{R}] \in \mathbb{R}^{N_m \times R}$ is called the \textit{factor matrix} of the $m$-th mode.



CP decomposition aims to find factor matrices that minimize the difference between the input tensor $\tensorInF{X}$ and its approximation $\reconst{\tensorInF{X}}$. That is, it aims to solve Eq.~\eqref{eqn:obj}.
\begin{equation} 
\min\nolimits_{\fmInF{A}{1}, \cdots, \fmInF{A}{\order}} \lfro{\tensorInF{X} - \cpd{\fmInF{A}{1}, \fmInF{A}{2}, \cdots , \fmInF{A}{\order}}},\label{eqn:obj}
\end{equation}
where $\lfrof$ is the Frobenius norm (see Section I of \cite{appendix} for the definition).

\smallsection{Alternating Least Squares (ALS)}
Alternating least squares (ALS) \cite{carroll1970analysis,harshman1970foundations} is a standard algorithm for computing \cpdWord of a static tensor.
For each $n$-th mode, $\matri{\cpd{\fmInF{A}{1}, \fmInF{A}{2}, \cdots , \fmInF{A}{\order}}}{n} 
= \fmInF{A}{n}\mt{(\multiKha{A}{n}{\order})}$, and thus
the mode-$n$ matricization of Eq.~\eqref{eqn:obj} becomes
\begin{equation}
\min\nolimits_{\fmInF{A}{1}, \cdots, \fmInF{A}{\order}} \lfro{\matri{\mat{X}}{n} - \fmInF{A}{n} \mt{(\multiKha{A}{n}{\order})}} \label{eqn:obj_matricized}.
\end{equation}
While the objective function in Eq.~\eqref{eqn:obj_matricized} is non-convex, solving Eq.~\eqref{eqn:obj_matricized} only for $\fm{\mat{A}}{n}$ while fixing all other factor matrices is a least-square problem. 
Finding $\fm{\mat{A}}{n}$ that makes the derivative of the objective function in Eq.~\eqref{eqn:obj_matricized} with respect to $\fm{\mat{A}}{n}$ zero leads to the following update rule for  $\fm{\mat{A}}{n}$:
\begin{equation}
\label{eqn:sol_analytic}
	\fm{\mat{A}}{n} \leftarrow \matri{\mat{X}}{n} (\multiKha{A}{n}{\order}) \mi{\{\multiHada{\mat{A}}{n}{\order}\}}.
\end{equation}

ALS performs \cpdWord by alternatively updating each factor matrix  $\fm{\mat{A}}{n}$ using Eq.~\eqref{eqn:sol_analytic} until convergence.

\section{Problem Definition}
\label{sec:problem}
In this section, we first define multi-aspect data streams.
Then, we describe how it is typically modeled as a tensor and discuss the limitation of the common tensor modeling method.
Lastly, we introduce the problem considered in this work.

\smallsection{Multi-aspect Data Stream and Examples}
We define a multi-aspect data stream, which we aim to analyze, as follows:
\begin{definition}[Multi-aspect Data Stream] \label{defn:stream}
    A {\it multi-aspect data stream} is defined as a sequence of timestamped $M$-tuples $\left\{\pairn\right\}_{n\in\mathbb{N}}$, where
    $i_{1},\cdots,i_{M-1}$ are categorical variables, $v_n\in \mathbb{R}$ is a numerical variable, and $t_n \in \mathbb{N}$ is the time (e.g., Unix timestamp)
    when the $n$-th tuple $e_n$ occurs.
    We assume that the sequence is chronological, i.e., 
    $$t_n \leq t_m ~ \text{if} ~ n<m.$$
    For simplicity, we also assume that the categorical variables are also (assigned to) integers, i.e., 
    $$i_{m}\in\{1,\cdots,\modeSize{m}\}, ~ \forall m\in \{1,\cdots,\order-1\}.$$
\end{definition} 

Time-evolving data from various domains are naturally represented as a multi-aspect data stream as follows:
\begin{itemize}[leftmargin=*]
    \item {\bf Traffic History}: 
    Each $3$-tuple $e_{n}$ = (source, destination, $1$) indicates a trip that started at time $t_n$. 
    \item {\bf Crime History}:
    Each $3$-tuple $e_{n}$ = (location, type, $1$) indicates an incidence of crime at time $t_n$ 
    \item {\bf Purchase History}: 
    Each $4$-tuple $e_{n}$ = (user, product, color, quantity) indicates a purchase at time $t_n$.
\end{itemize}

\smallsection{Common Tensor Modeling Method and its Limitations}
Multi-aspect data streams are commonly modeled as tensors to benefit from powerful tensor analysis tools (e.g., CP decomposition)
\cite{sun2006window,sun2008incremental,xu2019anomaly,nion2009adaptive,zhang2018variational,zhou2016accelerating,zhou2018online,gujral2018sambaten}.
Specifically, a multi-aspect data stream is modeled as an $\order$-mode tensor $\tensorInF{X} \in \threeDimsBack{\modeSize{1}}{\modeSize{\order-1}}{W}$, where $W$ is the number of indices in the time mode (i.e., the $\order$-th mode).
For each $w\in\{1,\cdots,W\}$, let $\tensorInF{G}_w\in  \threeDims{\modeSize{1}}{\modeSize{2}}{\modeSize{\order-1}}$ be the $(\order-1)$-mode tensor obtained from $\tensorInF{X}$  by fixing the $\order$-th mode index to $w$.
That is, $\tensorInF{X} \equiv \tensorInF{G}_{1} ~\concat~ \cdots ~\concat~ \tensorInF{G}_{W-1} ~\concat~ \tensorInF{G}_{W},$
where $\concat$ denotes concatenation.
Each tensor $\tensorInF{G}_{w}$ is the sum over $T$ time units (i.e., $T$ is the period) ending at $wT$.
That is, each $(\indexj)$-th entry of $\tensorInF{G}_{w}$ is the sum of $v_n$ over all $\order$-tuples $e_n$ where $i_m=j_m$ for all $m\in\{1,\cdots,\order-1\}$ and $t_n\in (wT-T,wT]$.
See Figs.~\ref{subfig:coarse} and \ref{subfig:fine} for examples where $T$ is an hour and a second, respectively.
As new tuples in the multi-aspect data stream arrive, a new $(\order-1)$-mode tensor is added to $\tensorInF{X}$ once per period $T$.

Additionally, in many previous studies on window-based tensor analysis \cite{sun2006window,sun2008incremental,xu2019anomaly,zhang2018variational}, the oldest $(\order-1)$-mode tensor is removed from $\tensorInF{X}$ once per period $T$ to fix the number of indices in the time mode to $W$.
That is, 
at time $t=W'T$ where $W'\in \{W+1, W+2, \cdots\}$, $\tensorInF{X} \equiv \tensorInF{G}_{W'-W+1} ~\concat~ \cdots ~\concat~ \tensorInF{G}_{W'-1} ~\concat~ \tensorInF{G}_{W'}$.

A serious limitation of this widely-used tensor model is that the tensor $\tensorInF{X}$ changes only once per period $T$ while the input multi-aspect data stream grows continuously.
Thus, it is impossible to analyze multi-aspect data streams continuously in real time in response to the arrival of each new tuple.

\smallsection{Problem Definition}
How can we continuously analyze multi-aspect data streams using tensor-analysis tools, specifically, \cpdWord?
We aim to answer this question, as stated in Problem~\ref{defn:problem:informal}. 
\begin{problem}[Continuous CP Decomposition] \label{defn:problem:informal}
	\textbf{(1) Given:} a multi-aspect data stream, \textbf{(2) to update:} its CP decomposition instantly in response to each new tuple in the stream, \textbf{(3) without having to wait} for the current period to end. 
\end{problem}
Note that, as discussed in Section~\ref{sec:intro} and shown in Fig.~\ref{fig:motivation}, an extremely short period $T$ cannot be a proper solution since it extremely decreases the \blue{fitness} of \cpdWord and at the same time increases the number of parameters.

\section{Proposed Data Model and Implementation}
\label{sec:model}
In this section, we propose the continuous tensor model and its efficient implementation.
This data model is one component of \method. See Section~\ref{sec:algo} for the other component.
\subsection{Proposed Data Model: Continuous Tensor Model}
\vspace{-0.5mm}
\label{sec:model:defn}

We first define several terms which our model is based on.
\begin{definition}[Tensor Slice] \label{dfn:slice}
    Given a multi-aspect data stream (Definition~\ref{defn:stream}), for each timestamped $\order$-tuple $\pairn$, we define the \textit{tensor slice} 
	$\tensorInF{Z}_n \in \twoDims{\modeSize{1}}{\modeSize{\order-1}}$ as an $\left(\order-1\right)$-mode tensor where the $\indeximinus$-th entry is $v_{n}$ and the other entries are zero.
\end{definition} 



\begin{definition}[Tensor Unit] \label{defn:unit}
	Given a multi-aspect data stream, a time $t$, and the period $\batch$, we define the \textit{tensor unit} $\tensorInF{Y}_t$ as 
	\begin{align*}
		\tensorInF{Y}_t \equiv \sum\nolimits_{t_n \in \left(t- \batch, t\right]} \tensorInF{Z}_n.
	\end{align*}
\end{definition}

\noindent That is, $\tensorInF{Y}_t \in \twoDims{\modeSize{1}}{\modeSize{\order-1}}$ is an aggregation of the tuples occurred within the half-open interval $\left(t-\batch, t\right]$.
	
\begin{definition}[Tensor Window] \label{defn:tensor_window}
    Given a multi-aspect data stream, a time $t$, the period $\batch$, and the number of time-mode indices $W$,
    we define the \textit{tensor window} $\tensorWindow{D}{t}$ as 
	\begin{align*}
		\tensorWindow{D}{t} \equiv \tensorInF{Y}_{t-\left(\tLen-1\right)\batch} ~\concat~ \cdots ~\concat~ \tensorInF{Y}_{t-\batch} ~\concat~ \tensorInF{Y}_t,
	\end{align*}
	where $\concat$ denotes concatenation.
\end{definition}
\noindent 	That is, $\tensorWindow{D}{t} \in \threeDimsBack{\modeSize{1}}{\modeSize{\order-1}}{\tLen}$ concatenates the $\tLen$ latest tensor units. 

\begin{figure}[t]
	\centering
	\vspace{-2mm}
	\includegraphics[width=\linewidth]{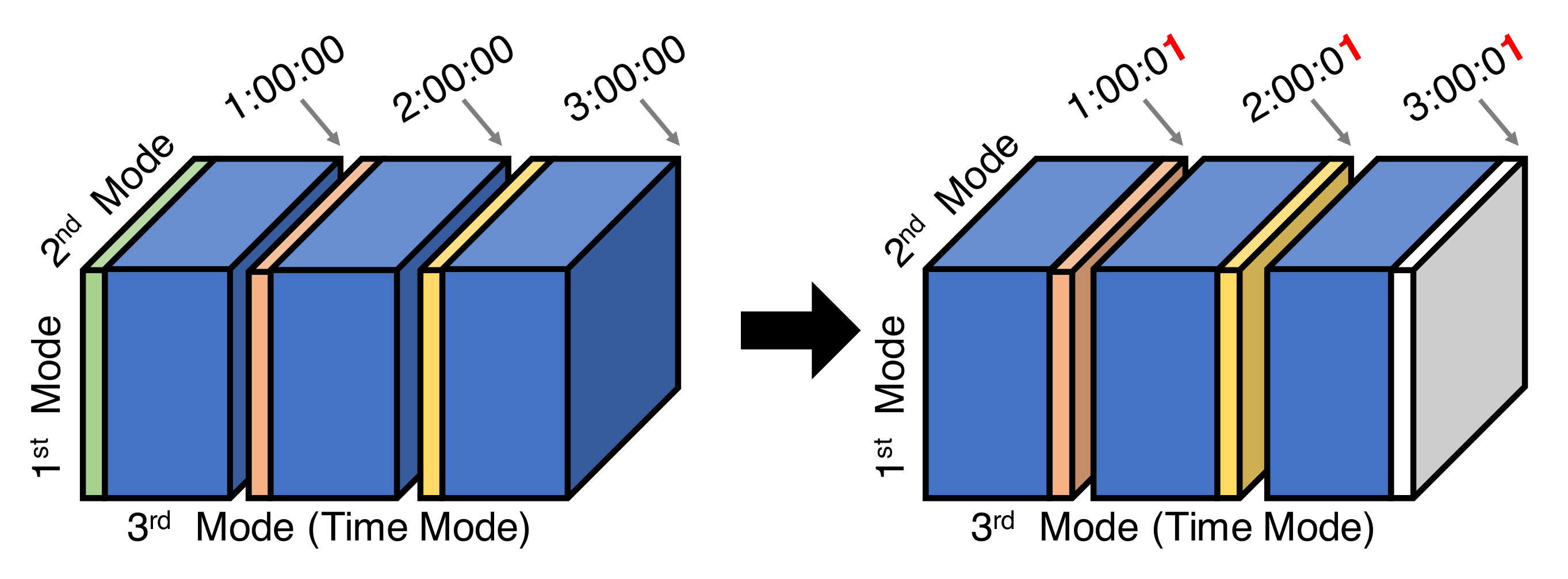}
	\caption{\label{fig:changes_tensor_window}
		{\bf Example of the continuous tensor model.}
		The starting points of tensor units (whose length is an hour) change adaptively based on the current time (which is 3:00:01).
	}
\end{figure}


The main idea of the continuous tensor model is to adaptively adjust the starting point (or equally the end point) of each tensor unit based on the current time, as described in Definition~\ref{defn:model} and Fig.~\ref{fig:changes_tensor_window}.
\begin{definition}[Continuous Tensor Model] \label{defn:model}
    In the {\it continuous tensor model}, given a multi-aspect data stream, the period $\batch$, and the number of time-mode indices $W$,
    the modeled tensor evolves from $\tensorWindow{D}{t-dt}$ to $\tensorWindow{D}{t}$ at each time $t$, where $dt$ represents the infinitesimal change in time.
\end{definition}

Note that in the continuous tensor model, the modeled tensor changes `continuously', i.e., once per minimum time unit in the input multi-aspect data stream (e.g., a millisecond), while the modeled tensor changes once per period $\batch$ in the typical models, discussed in Section~\ref{sec:problem}.

\subsection{Event-driven Implementation of Continuous Tensor Model}
\vspace{-0.5mm}
\label{sec:model:impl}

In the continuous tensor model, it is crucial to efficiently update the modeled tensor, or in other words, efficiently compute the change in $\tensorWindow{D}{t}$ in the modeled tensor.
This is because repeatedly rebuilding the modeled tensor $\tensorWindow{D}{t}$ at every time $t$ from scratch is computationally prohibitive.

We propose an efficient event-driven implementation of the continuous tensor model.
Let $\tensorInF{X}=\tensorWindow{D}{t}$ for simplicity.
Our implementation, described in Algorithm~\ref{alg:model}, is based on the observation that each tuple $\pairn$ in the input multi-aspect data stream causes the following events:
\begin{enumerate} [label=\textbf{S.\arabic*}]
    \item \label{enum:event:first} At time $t=t_n$, the value $v_n$ is added to $\entryiW$.
    \item \label{enum:event:second} At time $t=t_n+w\batch$ for each $w\in\{1,\cdots,W-1\}$, the value $v_n$ is subtracted from $\entryiWwo$ and then added to $\entryiWw$.
    \item \label{enum:event:third} At time $t=t_n+\tLen\batch$, the value $v_n$ is subtracted from  $\entryione$.
\end{enumerate}
As formalized in Theorem~\ref{thm:model:complexity:time}, our implementation maintains the modeled tensor $\tensorInF{X}=\tensorWindow{D}{t}$ up-to-date at each time $t$ by performing $O(\order \tLen)$ operations per tuple in the input stream.
Note that $\order$ and $\tLen$ are usually small numbers, and if we regard them as constants, the time complexity becomes $O(1)$, i.e. processing each tuple takes constant time.

\begin{thm}[Time Complexity of the Continuous Tensor Model] \label{thm:model:complexity:time}
In Algorithm~\ref{alg:model}, the time complexity of processing each timestamped $\order$-tuple is $O(\order \tLen)$. 
\end{thm}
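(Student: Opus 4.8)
The statement is essentially a bookkeeping claim about Algorithm~\ref{alg:model}, so the proof is a structural accounting of the work triggered by a single timestamped $\order$-tuple $\pairn$. First I would recall the three event types \ref{enum:event:first}--\ref{enum:event:third} that the implementation maintains. The idea is that a tuple's contribution $v_n$ to the modeled tensor $\tensorWindow{D}{t}$ ``travels'' through the time mode: it enters the slice with time-index $W$, is then shifted one index down once per period $\batch$, and finally leaves the window after $W$ periods. So the \emph{total} number of times $v_n$ must be added to or subtracted from an entry of $\tensorInF{X}$, over the entire lifetime of the tuple, is $O(\tLen)$: one addition at \ref{enum:event:first}, one subtraction plus one addition for each of the $W-1$ intermediate shifts in \ref{enum:event:second}, and one final subtraction in \ref{enum:event:third}.

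The second ingredient is the cost of each individual add/subtract on a tensor entry. Since the modeled tensor is maintained together with whatever auxiliary bookkeeping the algorithm keeps per non-zero (for instance, updating $\degree{m, i_m}$ or mode-wise indices used later by the optimization step), changing one entry $x_{i_1,\dots,i_{\order-1},w}$ by $\pm v_n$ touches $O(\order)$ pieces of information — one per mode — because the entry participates in $\order$ matricizations / per-mode aggregates. I would state this as a constant-per-mode charge and fold it in. Multiplying the $O(\tLen)$ entry updates by the $O(\order)$ cost each gives $O(\order\tLen)$ work attributable to the tuple over all time.

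The one point that needs a careful word is \emph{amortization versus worst case}: events \ref{enum:event:second} and \ref{enum:event:third} for a tuple fire at future times $t_n + w\batch$, not when the tuple arrives, so I would make explicit that we are accounting for the work \emph{induced by} the tuple across the stream, and that at any single wall-clock time only $O(\order\tLen)$ of this work (plus the $O(\order)$ insertion at arrival) can be pending — indeed only finitely many tuples have a scheduled event at a given instant, and each such event is $O(\order)$. Either reading gives the bound; I would phrase the theorem's ``processing each timestamped $\order$-tuple is $O(\order\tLen)$'' as the total charge of $O(\order)+O(\tLen)\cdot O(\order)=O(\order\tLen)$ per tuple, which is what Algorithm~\ref{alg:model} realizes. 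The main (and only real) obstacle is making the bookkeeping-cost-per-entry-update claim precise — i.e., arguing that a single $\pm v_n$ on one entry really is $O(\order)$ and not more — which hinges on the concrete data structures in Algorithm~\ref{alg:model}; once that is pinned down, the rest is the multiplication $O(\tLen)\times O(\order)$ described above.
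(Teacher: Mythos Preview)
Your proposal is correct and follows essentially the same approach as the paper: count the $W+1$ events generated per tuple and charge $O(\order)$ work to each, yielding $O(\order\tLen)$ in total. The paper's argument is even terser---it simply states that each of the $W+1$ events takes $O(\order)$ time---and attributes the $O(\order)$ cost to the event-handling lines themselves (indexing an $\order$-dimensional entry and scheduling the next event) rather than to per-mode aggregates, which are maintained only in the later optimization algorithms, not in Algorithm~\ref{alg:model}.
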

\begin{proof}
For each timestamped $\order$-tuple, $\tLen + 1$ events occur. Processing an event (lines \ref{alg:model:event1:start}-\ref{alg:model:event1:end} or \ref{alg:model:event2:start}-\ref{alg:model:event2:end}) takes $O(\order)$ time.
\end{proof}

\begin{thm}[Space Complexity of the Continuous Tensor Model] \label{thm:model:complexity:space}
In Algorithm~\ref{alg:model}, the space complexity is
$$O\left(\order\cdot \max_{t\in \mathbb{R}} |\{n\in \mathbb{N} : t_n\in (t-\tLen\batch,t] \}|\right).$$
\end{thm}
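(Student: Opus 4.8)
The plan is to account separately for the two parts of the state that Algorithm~\ref{alg:model} keeps in memory at any time $t$: (i) a sparse representation of the current modeled tensor $\tensorWindow{D}{t}$, holding only its non-zero entries, and (ii) an auxiliary structure that remembers, for each already-arrived tuple, which of its scheduled events (S.1--S.3) are still to be processed. I would show that each part is bounded by $O(\order \cdot m^\ast)$, where $m^\ast := \max_{t\in\mathbb{R}} |\{n\in\mathbb{N} : t_n \in (t-\tLen\batch, t]\}|$ denotes the largest number of tuples falling in any half-open window of length $\tLen\batch$; adding the two bounds together with the $O(\order)$ needed for fixed bookkeeping (the dimensions $\modeSize{1},\dots,\modeSize{\order-1}$, the period $\batch$, the window length $\tLen$, and a few counters) gives the stated complexity.

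For part (i), the key observation comes straight from the event list S.1--S.3: a tuple injects $v_n$ into entry $\entryiW$ at time $t_n$ (S.1), this contribution migrates one time-mode slice at a time through the events S.2, and it is deleted from the tensor at time $t_n + \tLen\batch$ by S.3. Hence, at any time $t$, a tuple contributes to the modeled tensor if and only if it has arrived but its event S.3 has not yet fired, i.e., if and only if $t_n \in (t-\tLen\batch, t]$, and in that case it contributes to exactly one entry. It follows that the number of non-zero entries of $\tensorWindow{D}{t}$ is at most $|\{n : t_n \in (t-\tLen\batch, t]\}| \le m^\ast$ (cancellations among tuples landing in the same entry only lower this count), and the same bound holds whether the tensor is kept as one sparse array or split into its $\tLen$ constituent tensor units, whose supporting intervals are disjoint and cover $(t-\tLen\batch,t]$. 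Since storing one non-zero --- an $\order$-tuple of indices together with its value --- costs $O(\order)$, part (i) uses $O(\order m^\ast)$ space.

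For part (ii), observe that the entire future event schedule of a tuple is determined by $e_n$ and $t_n$, so the event-driven implementation need not materialize all $\tLen+1$ events up front; it suffices to keep each arrived tuple once, together with the time of its next pending event, which is $O(\order)$ space per tuple. A tuple still has a pending event at time $t$ exactly when its last event S.3 has not fired, i.e., again when $t_n \in (t-\tLen\batch, t]$, so this structure holds at most $m^\ast$ tuples and uses $O(\order m^\ast)$ space. Combining (i) and (ii) with the fixed bookkeeping gives the claimed bound.

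The only real subtlety --- which I would make explicit rather than leave implicit --- is that the bound rests on two modeling choices: the modeled tensor is stored \emph{sparsely} (in keeping with the sparse-tensor setting of the paper; a dense array would instead cost $\Theta(\tLen\prod_{m=1}^{\order-1}\modeSize{m})$), and events are generated \emph{lazily}, one per tuple at a time, rather than all $\tLen+1$ of a tuple's events being enqueued at once (which would give a looser $O(\order\tLen m^\ast)$). Once these are pinned down, the rest is a routine walk through S.1--S.3 with no further obstacle.
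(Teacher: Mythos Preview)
Your proposal is correct and follows essentially the same approach as the paper: split the state into the sparsely stored tensor window and the scheduled events, bound each by the number of active tuples $\{n: t_n\in(t-\tLen\batch,t]\}$ (one non-zero and one pending event per active tuple), and multiply by $O(\order)$ per item. Your write-up is more detailed than the paper's terse proof, and your explicit remarks on sparse storage and lazy (one-at-a-time) event scheduling are well placed, since the latter is exactly what Algorithm~\ref{alg:model} does in lines~\ref{alg:model:event1:end} and~\ref{alg:model:event2:end}.
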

\begin{proof}
We call $S_{t}\equiv\{n\in \mathbb{N} : t_n\in (t-\tLen\batch,t] \}$ the set of active tuples at time $t$.
The number of non-zeros in $\tensorInF{X}=\tensorWindow{D}{t}$ is upper bounded by $|S_{t}|$, and thus the space required for $\tensorInF{X}$ is $O(\order\cdot |S_{t}|)$. Since at most one event is scheduled for each active tuple, the space required for storing all scheduled events is also $O(\order\cdot |S_{t}|)$. 
\vspace{-1mm}
\end{proof}

\begin{algorithm}[t] 
	\small
	\caption{\label{alg:model} Event-driven Implementation of the Continuous Tensor Model}
	\KwIn{(1) a multi-aspect data stream, (2) period $\batch$, \\
		\qquad\quad (3) number of indices in the time mode $\tLen$
	}
	\KwOut{up-to-date tensor window $\tensorInF{X}=\tensorWindow{D}{t}$}
	
	initialize $\tensorInF{X}$ to a zero tensor $\in \threeDimsBack{\modeSize{1}}{\modeSize{\order-1}}{\tLen}$ \;
	wait until an event $E$ occurs \label{alg:model:wait} \; 
	\If{$E=$ arrival of $\pairn$}
	{
		add $v_n$ to $\entryiW$ \; \label{alg:model:event1:start}
		schedule the $1$-st update for $\pairnshort$ at time $t_n+T$ \label{alg:model:event1:end} \;
	}
	\If{$E=$ $w$-th update for $\pairn$}
	{
		subtract $v_n$ from $\entryiWwo$ \; \label{alg:model:event2:start}
		\If{$w<\tLen$}{
			add $v_n$ to $\entryiWw$ \;
			schedule the $(w+1)$-th update for $\pairnshort$ at time $t_n+(w+1)T$ \label{alg:model:event2:end} \;
		}
	}
	\textbf{goto} Line~\ref{alg:model:wait}
\end{algorithm}

\section{Proposed Optimization Algorithms}
\label{sec:algo}
In this section, we present the other part of \method.
We propose a family of online optimization algorithms for \cpdWord of \blue{sparse tensors} in the continuous tensor model.
As stated in Problem~\ref{defn:problem:cpd}, they aim to update factor matrices fast and accurately in response to each change in the tensor window.  
\begin{problem}[Online CP Decomposition of \blue{Sparse Tensors} in the Continuous Tensor Model] \label{defn:problem:cpd} \textbf{(1) Given:} 
	\begin{itemize}
	    \item the current tensor window $\tensorInF{X}=\tensorWindow{D}{t}$, 
	    \item factor matrices $\fm{\mat{A}}{1}, \fm{\mat{A}}{2}, \cdots , \fm{\mat{A}}{\order}$ for $\tensorInF{X}$,
	    \item an event for $\pairn$ occurring at $t$,
	\end{itemize}
\textbf{(2) Update:} the factor matrices in response to the event, \\
\textbf{(3) To Solve:} the minimization problem in Eq.~\eqref{eqn:obj}.
\end{problem}

Below, we use $\Delta \tensorInF{X}$ to denote the change in $\tensorInF{X}$ due to the given event. By \ref{enum:event:first}-\ref{enum:event:third} of Section \ref{sec:model:impl}, Definition~\ref{defn:delta} follows.
\begin{definition}[Input Change] \label{defn:delta}
	The input change $\Delta \tensorInF{X}\in \threeDimsBack{\modeSize{1}}{\modeSize{\order-1}}{\tLen}$ is defined as the change in $\tensorInF{X}$ due to an event for $\pairn$ occurring at $t$, i.e.,
	
	\begin{itemize}[leftmargin=*]
		\item {[If $t=t_n$]} The $\indexiW$-th entry of $\Delta \tensorInF{X}$ is $v_n$, and the other entries are zero.
		\item {[If $t=t_n + wT$ for  $1\leq w<W$]} The $(\indexiParam{W - w})$-th and $(\indexiParam{W - w + 1})$-th entries of $\Delta \tensorInF{X}$ are $v_n$ and $-v_n$, respectively, and the others are zero.
		\item {[If $t=t_n + WT$]} The $(\indexiParam{1})$-th entry of $\Delta \tensorInF{X}$ is $-v_n$, and the others are zero.
	\end{itemize}

\end{definition}


We first introduce \walslong (\wals), which 
naively applies ALS to Problem~\ref{defn:problem:cpd}.
Then, we present \selalslong (\selals) and \hyalslong (\hyals).
Lastly, we propose our main methods, \selccd and \hyccd. 

\subsection{\walslong (\wals)} \label{sec:algo:wals}
\vspace{-0.5mm}
When we apply ALS to Problem~\ref{defn:problem:cpd}, the factor matrices for the current window $\tensorInF{X}$ are strong initial points. 
Thus, a single iteration of ALS is enough to achieve high \blue{fitness}.
The detailed procedure of \wals is given in Algorithm~\ref{alg:wals}.
In line~\ref{line:wals:normalization}, we normalize\footnotemark\ the columns of each updated factor matrices for balancing the scales of the factor matrices. 

\footnotetext{
	Let the $r$-th entry of $\lambda \in \mathbb{R}^R$ as $\lambda_r$ and the $r$-th column of $\fmInF{A}{m}$ be $\bm{a}^{(m)}_r$. 
	We set $\lambda_r$ to $\ltwosmall{\bm{a}^{(m)}_r}$ and set $\bm{{\bar{a}}}^{(m)}_r$ to $\bm{a}^{(m)}_r / \lambda_r$ for $r = 1, \cdots, R$.
	Then, $\tensorInF{X}$ is approximated as $\sumP{r}{R} \lambda_r \bm{{\bar{a}}}^{\left(1\right)}_{r} \circ \bm{{\bar{a}}}^{\left(2\right)}_{r} \circ \cdots \circ \bm{{\bar{a}}}^{\left(M\right)}_{r}$.
}

\begin{algorithm}[t]
    \small
	\caption{\label{alg:wals} \wals: Naive Extension of ALS.}
	\KwIn{(1) current tensor window $\tensorInF{X}$, (2) change $\Delta \tensorInF{X}$, \\
		\qquad\quad (3) column-normalized factor matrices $\matset{\fmInF{\bar{A}}{m}}{\order}$, \\
		\qquad\quad (4) $\matset{\ata{\mat{{\bar{A}}}}{m}}{\order}$
	}
	\KwOut{Updated $\matset{\fmInF{{\bar{A}}}{m}}{\order}$,  $\matset{\ata{\mat{{\bar{A}}}}{m}}{\order}$, and $\lambda$}
		
	
	\For{$m = 1, \cdots, \order$}
	{
		$\mat{U} \leftarrow \matri{\left(\mat{X} + \Delta \mat{X}\right)}{m} \left( \multiKhaN{{\bar{A}}}{m}{\order} \right)$ \label{line:wals:mttkrp}
		
		$\mat{H} \leftarrow \ast_{n \neq m}^{\order}\ata{\mat{{\bar{A}}}}{n}$  \label{line:wals:hadamard}
		
		$\fmInF{A}{m} \leftarrow \mat{U} \mi{\mat{H}}$ \label{line:wals:initialA}	\tcp{by Eq.~(\ref{eqn:sol_analytic})} 
		
		$\lambda \leftarrow$ $\ltwotext$ norms of the columns of $\fmInF{{A}}{m}$
		\label{line:wals:lambda} \tcp{$\lambda\in\mathbb{R}^R$} 
		
		$\fmInF{{\bar{A}}}{m} \leftarrow$ column normalization of  $\fmInF{{A}}{m}$ \label{line:wals:normalization}

		
		Update $\ata{\mat{{\bar{A}}}}{m}$ \label{line:wals:update}
	}
	
	\Return $\matset{\fmInF{{\bar{A}}}{m}}{\order}$, $\matset{\ata{\mat{{\bar{A}}}}{m}}{\order}$, and $\lambda$
\end{algorithm}

\smallsection{Pros and Cons}
For each event, \wals accesses every entry of the current tensor window and updates every row of the factor matrices. Thus, it suffers from high computational cost, as formalized in Theorem~\ref{thm:time:wals}, while it maintains a high-quality solution (i.e., factor matrices).

\begin{thm}[Time complexity of \wals]
Let $\modeSize{\order}=W$. Then,
the time complexity of \wals is \label{thm:time:wals}
\begin{equation}
    O \bigg( \order^2\rank | \tensorInF{X} + \Delta \tensorInF{X}| + M^2R^2 + 
    \order R^3 + \sum_{m=1}^{\order}\modeSize{m} \rank^2 \bigg).
    \label{eq:complexity:wals}
\end{equation}
\end{thm}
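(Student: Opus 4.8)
The plan is to account separately for the cost of each line inside the main \texttt{for} loop of Algorithm~\ref{alg:wals}, then multiply by the $\order$ iterations of the loop. The four terms in Eq.~\eqref{eq:complexity:wals} correspond, roughly, to (i) the matricized-tensor-times-Khatri-Rao-product (MTTKRP) in line~\ref{line:wals:mttkrp}, (ii) forming the Hadamard product $\mat{H}$ in line~\ref{line:wals:hadamard}, (iii) the pseudoinverse $\mi{\mat{H}}$ and the product $\mat{U}\mi{\mat{H}}$ in line~\ref{line:wals:initialA}, and (iv) the column normalization and the update of $\ata{\mat{\bar A}}{m}$ in lines~\ref{line:wals:lambda}--\ref{line:wals:update}. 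I would state at the outset that we treat $\tensorInF{X}+\Delta\tensorInF{X}$ as a sparse tensor stored in coordinate form, so that iterating over its $|\tensorInF{X}+\Delta\tensorInF{X}|$ nonzeros is the natural primitive.

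First I would bound the MTTKRP in line~\ref{line:wals:mttkrp}. Exploiting sparsity, $\mat{U}=\matri{(\mat X+\Delta\mat X)}{m}\,(\multiKhaN{\bar A}{m}{\order})$ is computed by looping over each nonzero $\entryjshort$ of $\tensorInF{X}+\Delta\tensorInF{X}$ and, for that entry, adding $\entryjshort$ times the Hadamard product of the $\order-1$ relevant factor rows (one from each mode $n\neq m$) to row $j_m$ of $\mat{U}$. Each nonzero thus costs $O(\order R)$ work (computing a running Hadamard product of $\order-1$ length-$R$ vectors and scaling), giving $O(\order R\,|\tensorInF{X}+\Delta\tensorInF{X}|)$ per mode and $O(\order^2 R\,|\tensorInF{X}+\Delta\tensorInF{X}|)$ over all $\order$ modes, which is the first term. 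Next, line~\ref{line:wals:hadamard}: each $\ata{\mat{\bar A}}{n}$ is an $R\times R$ matrix already maintained as input, so forming $\mat{H}=\ast_{n\neq m}^{\order}\ata{\mat{\bar A}}{n}$ is $\order-1$ entrywise products of $R\times R$ matrices, i.e.\ $O(\order R^2)$ per mode and $O(\order^2 R^2)$ in total, the second term. For line~\ref{line:wals:initialA}, the pseudoinverse of the $R\times R$ matrix $\mat H$ costs $O(R^3)$ and the product $\mat U\mi{\mat H}$ costs $O(\modeSize{m}R^2)$; summed over modes this contributes $O(\order R^3)+O(\sum_m \modeSize{m}R^2)$, accounting for the third term and part of the fourth. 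Finally, lines~\ref{line:wals:lambda}--\ref{line:wals:normalization} (computing column $\ell^2$ norms and dividing) cost $O(\modeSize{m}R)$, and recomputing $\ata{\mat{\bar A}}{m}=\mt{\fmInF{\bar A}{m}}\fmInF{\bar A}{m}$ costs $O(\modeSize{m}R^2)$; over all modes these are absorbed into $\sum_m \modeSize{m}R^2$, the last term. Adding the four bounds and observing $\modeSize{\order}=W$ was only needed so that $\sum_m \modeSize{m}$ genuinely covers the time mode, yields Eq.~\eqref{eq:complexity:wals}.

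The main obstacle, and the step I would spend the most care on, is the sparse MTTKRP bound: one must argue that the intermediate Hadamard products can be computed in $O(\order R)$ per nonzero without materializing any dense Khatri-Rao factor (whose size would be exponential or at least $R\prod_{n\neq m}\modeSize{n}$), and that the scatter-add into $\mat U$ does not incur extra cost beyond $O(R)$ per nonzero. A secondary subtlety is making sure no double counting occurs between the $O(\sum_m \modeSize{m}R^2)$ term, which must simultaneously cover $\mat U\mi{\mat H}$, the normalization, and the Gram-matrix refresh — but since these are all $O(\modeSize{m}R^2)$ or cheaper per mode, a single $O(\sum_m \modeSize{m}R^2)$ term suffices. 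Everything else is routine linear-algebra bookkeeping.
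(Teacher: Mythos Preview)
Your proposal is correct and follows essentially the same line-by-line accounting as the paper's proof: the paper likewise bounds the MTTKRP in line~\ref{line:wals:mttkrp} by $O(\order R)$ per nonzero (citing the sparse MTTKRP trick), the Hadamard product in line~\ref{line:wals:hadamard} by $O(\order R^2)$, the pseudoinverse and $\mat{U}\mi{\mat{H}}$ in line~\ref{line:wals:initialA} by $O(R^3)+O(\modeSize{m}R^2)$, and states that lines~\ref{line:wals:lambda}--\ref{line:wals:update} are dominated, then sums over the $\order$ iterations. Your extra remarks about avoiding materialization of the Khatri-Rao factor and absorbing the Gram-matrix refresh into $\sum_m \modeSize{m}R^2$ are a bit more explicit than the paper but do not diverge from its argument.
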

\begin{proof}
See Section II.A of the online appendix \cite{appendix}.
\end{proof}

\begin{algorithm}[t]
	\small
	\caption{\label{alg:total} Common Outline of \selals, \selccd, \hyals, and \hyccd.}
	\KwIn{(1) current tensor window  $\tensorInF{X} = \tensorInF{D}(t, W)$, \\
		\qquad\quad (2) change $\Delta \tensorInF{X}$ due to an event for \\
		\qquad\qquad \ $\pairn$ occurring at $t$, \\
		\qquad\quad (3) factor matrices $\matset{\fmInF{A}{m}}{\order}$ \\
		\qquad\quad (4) $\matset{\ata{\mat{A}}{m}}{\order}$, (5) period $T$
	}	
	\KwOut{updated $\matset{\fmInF{A}{m}}{\order}$ and $\matset{\ata{\mat{A}}{m}}{\order}$}
	
	\SetKwFunction{updateRow}{updateRow}
	

	$\matset{\mt{\fmInF{A}{m}_{prev}}\fmInF{A}{m}}{\order} \leftarrow \matset{\ata{\mat{A}}{m}}{\order}$
	\label{line:total:start} \tcp{used only in \hyals and \hyccd}

	
	$w \leftarrow (t - t_n) / T$ \ \tcp{time-mode index}

	\If{$w > 0$ \label{alg:total:time:start}}{
		
		\updateRow{$\order$, $W - w + 1$, $\cdots$} \tcp{Alg.~\ref{alg:vec} or \ref{alg:ccd}}
	} 
	
	\If{$w < W$}{
		
		
		\updateRow{$\order$, $W - w$, $\cdots$}   \tcp{Alg.~\ref{alg:vec} or \ref{alg:ccd}} \label{alg:total:time:end}
	}
	
	\For{$m \leftarrow 1, \cdots, \order - 1$ \label{alg:total:nontime:start}}
	{
		\updateRow{$m$, $i_m$}  \tcp{Alg.~\ref{alg:vec} or \ref{alg:ccd}}
		\label{line:total:end} \label{alg:total:nontime:end}
	}
	\Return $\matset{\fmInF{{A}}{m}}{\order}$ and $\matset{\ata{\mat{{A}}}{m}}{\order}$
	
	
\end{algorithm}
\begin{figure}
	\vspace{-5mm}
	\centering
	\subfloat{\includegraphics[width=\linewidth]{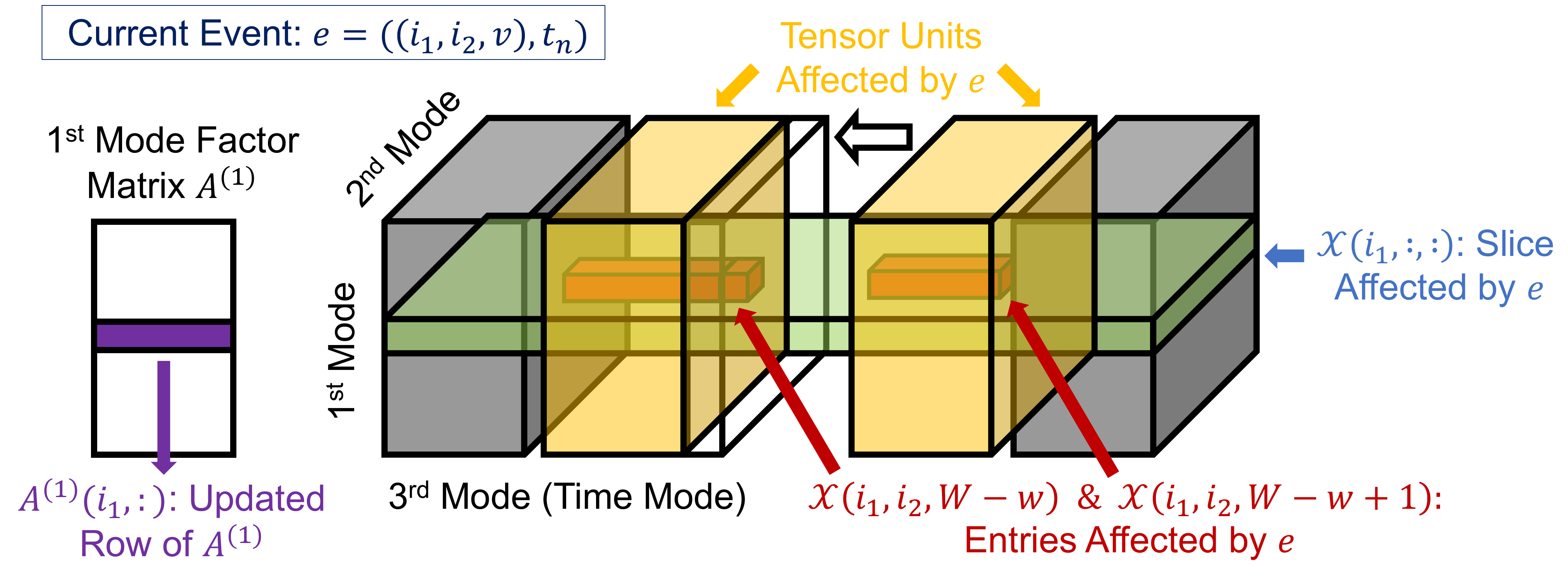}} \\ \vspace{-2mm} 
	\caption{\label{fig:running_example} \blue{\textbf{Example: updating a row of $\fmInF{A}{1}$.} 
			\selals and \hyals update $\fmInF{A}{1}(i_1, :)$ at once.
			\selccd and \hyccd update it entry by entry.
			\hyals and \hyccd sample $\theta$ entries from those in $ \tensorInF{X}(i_1, :, :)$ if more than $\theta$ entries there are non-zero (i.e., if $deg(1,i_1)>\theta$).}}
	\vspace{-1mm}
\end{figure}

\subsection{\selalslong (\selals)} \label{sec:algo:selals}
\vspace{-0.5mm}
We propose \selals, a fast algorithm for Problem~\ref{defn:problem:cpd}.
The outline of \selals is given in Algorithm~\ref{alg:total}, and the update rules are given in Algorithm~\ref{alg:vec} \blue{with a running example in Fig.~\ref{fig:running_example}.}
The key idea of \selals is to update only the rows of factor matrices that approximate changed entries of the tensor window.
Starting from the maintained factor matrices, \selals  updates such rows of the time-mode factor matrix (lines~\ref{alg:total:time:start}-\ref{alg:total:time:end}) and then such rows of the non-time mode factor matrices (lines~\ref{alg:total:nontime:start}-\ref{alg:total:nontime:end}).
Below, we describe the update rules used. 

\smallsection{Time Mode} 
Real-world tensors are typically modeled so that the time mode of $\tensorInF{X}$ has fewer indices than the other modes.
Thus, each tensor unit (see Definition~\ref{defn:unit}) in $\tensorInF{X}$ is likely to contain many non-zeros, and thus even updating only few rows of the time-mode factor matrix (i.e., $\fmInF{A}{\order}$) is likely to incur considerable computational cost. To avoid the cost, \selals employs an approximated update rule.

From Eq.~(\ref{eqn:sol_analytic}), the following update rule for the time-mode factor matrix follows:
\begin{equation} \label{eqn:selals:sol_temporal_origin}
\fmInF{A}{\order} \leftarrow \matri{\left( \mat{X} + \Delta \mat{X} \right)}{\order} \fmInF{K}{\order} \mi{\fmInF{H}{\order}},
\end{equation}
where $\fmInF{K}{\order} = \multiKhaEq{A}{1}{\order-1}$ and $\fmInF{H}{\order} = \multiHadaEq{\mat{A}}{1}{\order-1}$.
If we assume that the approximated tensor $\reconst{\tensorInF{X}}$ in Eq.~\eqref{eq:reconstruct} approximates $\tensorInF{X}$ well, then Eq.~(\ref{eqn:selals:sol_temporal_origin}) is approximated to Eq.~\eqref{eqn:selals:sol_temporal_middle}.
\begin{equation} \label{eqn:selals:sol_temporal_middle}
\fmInF{A}{\order} \leftarrow \fmInF{A}{\order} \mt{\fmInF{K}{\order}} \fmInF{K}{\order} \mi{\fmInF{H}{\order}} + \Delta \matri{\mat{X}}{\order} \fmInF{K}{\order} \mi{\fmInF{H}{\order}}.
\end{equation}
By a property of the Khatri-Rao product \cite{kolda2009tensor}, Eq.~\eqref{eqn:selals:KtoH} holds.
\begin{align} \label{eqn:selals:KtoH}
    \begin{split}
        \mt{\fmInF{K}{\order}} \fmInF{K}{\order} &= \mt{\left(\multiKhaEq{A}{1}{\order-1}\right)} \left(\multiKhaEq{A}{1}{\order-1}\right) \\
	    &= \multiHadaEq{\mat{A}}{1}{\order-1} = \fmInF{H}{\order}.
    \end{split}
\end{align}
By Eq.~\eqref{eqn:selals:KtoH}, Eq.~\eqref{eqn:selals:sol_temporal_middle} is reduced to Eq.~\eqref{eqn:selals:sol_temporal}.
\begin{align}
    \label{eqn:selals:sol_temporal}
	\fmInF{A}{\order} \leftarrow \fmInF{A}{\order} + \Delta \matri{\mat{X}}{\order} \fmInF{K}{\order} \mi{\fmInF{H}{\order}}.
\end{align}
Computing Eq.~(\ref{eqn:selals:sol_temporal}) is much cheaper than computing Eq.~(\ref{eqn:selals:sol_temporal_origin}). Since $\Delta \matri{\mat{X}}{\order}$ contains at most two non-zeros (see Problem~\ref{defn:problem:cpd}), computing
$\Delta \matri{\mat{X}}{\order} \fmInF{K}{\order}$ in Eq.~(\ref{eqn:selals:sol_temporal}) takes $O(\order\rank)$ time.
Due to the same reason, at most two rows of $\fmInF{A}{\order}$ are updated.

\smallsection{Non-time Modes} 
When updating $\fmInF{A}{m}$, while fixing the other factor matrices, the objective Eq.~\eqref{eqn:obj_matricized} becomes  Eq.~\eqref{eqn:selals:nontemporal_origin}.
\begin{equation} \label{eqn:selals:nontemporal_origin}
\min_{\fmInF{A}{m}} \lfro{\matri{\left( \mat{X} + \Delta \mat{X} \right)}{m} - \fmInF{A}{m} \mt{\left(\multiKhaN{A}{m}{\order}\right)}}.
\end{equation}
Note that $\Delta \tensorInF{X}$ contains up to two non-zeros, which are the entries changed in $\tensorInF{X}$, and their $m$-th mode indices are $i_m$ (see Problem~\ref{defn:problem:cpd}).
By Eq.~(\ref{eqn:obj_matricized}), only the $i_m$-th row of $\fmInF{A}{m}$ is used to approximate the changed entries, and thus \selals updates only the row.

If we fix all the other variables except $\fmInF{A}{m}(i_m, :)$, the problem in Eq.~\eqref{eqn:selals:nontemporal_origin} becomes the problem in Eq.~\eqref{eqn:selals:nontemproal_final}.
\begin{multline} \label{eqn:selals:nontemproal_final}
\small
    \min_{\fmInF{A}{m}\left(i_m, :\right)} \lfro{\matri{\left( \mat{X} + \Delta \mat{X} \right)}{m}\left(i_m ,:\right) \\
    - \fmInF{A}{m}\left(i_m ,:\right) \mt{\left(\multiKhaN{A}{m}{\order}\right)}}.
\end{multline}
The problem in Eq.~\eqref{eqn:selals:nontemproal_final} is a least-square problem, and its analytical solution in Eq.~\eqref{eqn:selals:sol_nontemporal} is available.
\begin{equation}
\label{eqn:selals:sol_nontemporal}
	\fmInF{A}{m}\left(i_m, :\right) \leftarrow \matri{\left( \mat{X} + \Delta \mat{X} \right)}{m}\left(i_m, :\right) \fmInF{K}{m} \mi{\fmInF{H}{m}},
\end{equation}
where $\fmInF{K}{m} = \multiKhaN{A}{m}{\order}$ and $\fmInF{H}{m} = \multiHadaN{\mat{A}}{m}{\order}$.

After updating the $i_m$-th row of each $m$-th mode factor matrix $\fmInF{A}{m}$ either by Eq.~\eqref{eqn:selals:sol_temporal} or Eq.~\eqref{eqn:selals:sol_nontemporal}, \selals incrementally maintains $\ata{\mat{A}}{m}$ up to date by Eq.~\eqref{eqn:selals:updateA'A}.
\begin{equation} \label{eqn:selals:updateA'A}
    \ata{\mat{A}}{m} \leftarrow \ata{\mat{A}}{m} - \mt{\mat{p}}\mat{p}
                 + \mt{\fmInF{A}{m} \left(i_m, : \right)} \fmInF{A}{m} \left(i_m, : \right),
\end{equation}
where $\mat{p}$ is the $i_m$-th row of $\fmInF{A}{m}$ before the update.

\begin{algorithm}  
	\small
	\caption{\label{alg:vec} \texttt{updateRow} in \selals and \hyals}
	\tcp{Parenthesized inputs/outputs are for \hyals}
	\KwIn{(1) mode $m$ and index $i_m$ to be updated \\
		\quad\qquad(2) current tensor window  $\tensorInF{X}$, (3) change $\Delta \tensorInF{X}$, \\
		\quad\qquad (4) factor matrices $\matset{\fmInF{A}{m}}{\order}$ for $\tensorInF{X}$, \\
		\quad\qquad (5) $\matset{\ata{\mat{A}}{m}}{\order}$ (and $\matset{\mt{\fmInF{A}{m}_{prev}}\fmInF{A}{m}}{\order}$), \\
		\quad\qquad (6) (threshold $\theta$ for sampling)
		
	}
	\KwOut{updated $\fmInF{A}{m}$, $\ata{\mat{A}}{m}$ (and $\mt{\fmInF{A}{m}_{prev}}\fmInF{A}{m}$)}
	
	\SetKwFunction{updateRowRan}{updateRowRan}
	\SetKwFunction{updateRowVec}{updateRowVec}
	\SetKwFunction{sampleX}{sampleX}
	\SetKwProg{Fn}{Procedure}{:}{}
	
	\SetKwProg{Fn}{Procedure}{:}{}
	\vspace{0.5mm}
	
	\tcp{\texttt{updateRow} implemented in \selals}
	\Fn{\updateRowVec{$m, i_m$, $\cdots$}}{
		
		$\mat{p} \leftarrow \fmInF{A}{m}\left(i_m, :\right)$
		
		\textbf{if} $m = M$ \textbf{then}
			Update $\fmInF{A}{m}(i_m, :)$ by Eq.~(\ref{eqn:selals:sol_temporal}) 
		
		\textbf{else}
			Update $\fmInF{A}{m}(i_m, :)$ by Eq.~(\ref{eqn:selals:sol_nontemporal})  \label{line:vec:selals}
		
		Update $\ata{\mat{A}}{m}$ by Eq.~(\ref{eqn:selals:updateA'A}) 
		
		\Return $\fmInF{A}{m}$ and $\ata{\mat{A}}{m}$
	}
	
	\tcp{\texttt{updateRow} implemented in \hyals}
	\Fn{\updateRowRan{$m$, $i_m$, $\cdots$}}{
		
		$\mat{p} \leftarrow \fmInF{A}{m} \left(i_m, : \right)$ 
		
		\If{$deg(m, i_m) \leq \theta $ \label{line:vec:selals_ran:start}}{
			Update $\fmInF{A}{m}(i_m, :)$ by Eq.~(\ref{eqn:selals:sol_nontemporal})
			\label{line:vec:selals_ran:end}
		}
		\Else{
			$S \leftarrow \theta$ indices of $\tensorInF{X}$ chosen uniformly at random, while fixing the $m$-th mode index to $i_m$ \label{line:vec:selals_ran:sample}
			
			Compute $\tensorInF{\bar{X}}$ from $S$ \label{line:vec:selals_ran:approx}
			
			
				Update $\fmInF{A}{m}(i_m, :)$ by Eq.~(\ref{eqn:hyals:sol}) 
				\label{line:vec:hyals}
			
		}
		
		Update $\ata{\mat{A}}{m}$ by Eq.~(\ref{eqn:selals:updateA'A}) \label{line:updateAA:hyals}
		
			Update $\mt{\fmInF{A}{m}_{prev}}\fmInF{A}{m}$ by Eq.~(\ref{eqn:hyals:updateA'A}) 
			
			\label{line:updateAAprev:hyals}
	
		\Return $\fmInF{A}{m}$, $\ata{\mat{A}}{m}$, and $\mt{\fmInF{A}{m}_{prev}}\fmInF{A}{m}$
	}
\end{algorithm}

\smallsection{Pros and Cons}
By updating only few rows of each factor matrix, \selals significantly reduces the computational cost of \wals, as formalized in Theorem~\ref{thm:time:selals}, without much loss in the quality of the solution. However, \selals slows down if many non-zeros are of the same index (see Eq.~\eqref{eq:time:selals}), and it is often unstable due to numerical errors, as discussed later.

\begin{thm}[Time complexity of \selals]
\label{thm:time:selals}
Let $deg(m,i_m)\equiv|\matri{\left( \mat{X} + \Delta \mat{X} \right)}{m}(i_m, :)|$ be the number of non-zeros of $\mat{X} + \Delta \mat{X}$ whose $m$-th mode index is $i_m$.
Then, the time complexity of \selals is 
\begin{equation}
\label{eq:time:selals}
O\bigg( \order R \sum\nolimits_{m=1}^{\order-1} deg(m, i_m) + (\order R)^2 + \order R^3 \bigg).
\end{equation}
\end{thm}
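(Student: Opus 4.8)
The plan is to walk through Algorithm~\ref{alg:total} with \texttt{updateRow} instantiated as \texttt{updateRowVec} (Algorithm~\ref{alg:vec}), charge every primitive operation, and sum. First I would note that, for a single event, Algorithm~\ref{alg:total} calls \texttt{updateRowVec} at most twice on the time mode (the conditional calls at indices $W-w+1$ and $W-w$) and exactly $\order-1$ times on the non-time modes, so there are $O(\order)$ calls in total and the $O(1)$ bookkeeping of Algorithm~\ref{alg:total} (computing $w$, skipping the \hyals-only line~\ref{line:total:start}) is negligible. Hence it suffices to bound one call of each kind and add them up.

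For a non-time mode $m$, I would decompose \texttt{updateRowVec} as: (i) copying $\mat{p}=\fmInF{A}{m}(i_m,:)$, which is $O(R)$; (ii) forming $\fmInF{H}{m}=\multiHadaN{\mat{A}}{m}{\order}$ from the maintained Gram matrices $\ata{\mat{A}}{n}$ by an $(\order-1)$-fold Hadamard product, which is $O(\order R^2)$, and computing $\mi{\fmInF{H}{m}}$, which is $O(R^3)$ \cite{courrieu2008fast}; (iii) evaluating the one-row MTTKRP $\matri{(\mat{X}+\Delta\mat{X})}{m}(i_m,:)\,\fmInF{K}{m}$ with $\fmInF{K}{m}=\multiKhaN{A}{m}{\order}$: the row has exactly $deg(m,i_m)$ non-zeros, and only the corresponding rows of $\fmInF{K}{m}$ are needed, each obtained as a Hadamard product of $\order-1$ length-$R$ factor rows in $O(\order R)$ time (the sparse-MTTKRP accounting of \cite{bader2008efficient}), so this step is $O(\order R\,deg(m,i_m))$; (iv) multiplying the resulting $1\times R$ vector by $\mi{\fmInF{H}{m}}$, which is $O(R^2)$; and (v) the two rank-one corrections to $\ata{\mat{A}}{m}$ in Eq.~\eqref{eqn:selals:updateA'A}, which is $O(R^2)$. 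Thus one non-time-mode call costs $O(\order R\,deg(m,i_m)+\order R^2+R^3)$. For the time mode the same analysis applies, except that the approximate rule Eq.~\eqref{eqn:selals:sol_temporal} replaces Eq.~\eqref{eqn:selals:sol_nontemporal}: since $\Delta\matri{\mat{X}}{\order}$ has at most two non-zeros, $\Delta\matri{\mat{X}}{\order}\fmInF{K}{\order}$ costs only $O(\order R)$ and at most two rows of $\fmInF{A}{\order}$, hence of $\ata{\mat{A}}{\order}$, change; a time-mode call therefore costs $O(\order R^2+R^3)$, with no dependence on $deg(\order,\cdot)$ --- exactly the benefit of the approximation derived in Eqs.~\eqref{eqn:selals:sol_temporal_origin}--\eqref{eqn:selals:sol_temporal}.

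Summing the $\order-1$ non-time-mode calls and the (at most) two time-mode calls gives $O(\order R\sum_{m=1}^{\order-1} deg(m,i_m)+\order^2 R^2+\order R^3)$, which is the claimed bound $O(\order R\sum_{m=1}^{\order-1} deg(m,i_m)+(\order R)^2+\order R^3)$.

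The step I expect to be the main obstacle is (iii): rigorously justifying that the row-restricted matricized-tensor-times-Khatri--Rao product really costs $O(\order R)$ per non-zero rather than touching all $\prod_{n\neq m}\modeSize{n}$ rows of $\fmInF{K}{m}$. This needs the sparse-access argument of \cite{bader2008efficient} together with the (standard) assumption that the tensor window is stored so that the $deg(m,i_m)$ non-zeros sharing mode-$m$ index $i_m$ can be enumerated in $O(\order\,deg(m,i_m))$ time. A secondary point worth spelling out is the origin of the $(\order R)^2$ term: it comes from re-forming the $(\order-1)$-fold Hadamard product $\fmInF{H}{m}$ afresh for each of the $\order-1$ non-time modes; maintaining one global product of the $\ata{\mat{A}}{n}$ and Hadamard-dividing out $\ata{\mat{A}}{m}$ would shave this to $O(\order R^2)$ but reintroduces the division-by-near-zero hazard flagged in the ``Pros and Cons'' remark, so \selals deliberately avoids it. The remaining items are routine linear-algebra cost counts.
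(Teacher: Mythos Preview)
Your proposal is correct and follows essentially the same approach as the paper's proof: both charge the one-row MTTKRP in Eq.~\eqref{eqn:selals:sol_nontemporal} at $O(\order R\cdot deg(m,i_m))$ via the sparse-access argument of \cite{bader2008efficient}, charge the Hadamard product $\fmInF{H}{m}$ at $O(\order R^2)$ from the maintained Gram matrices, charge $\mi{\fmInF{H}{m}}$ at $O(R^3)$, observe that these dominate the remaining steps, and sum over the $\order-1$ non-time-mode calls to obtain Eq.~\eqref{eq:time:selals}. Your write-up is more granular (you explicitly account for the time-mode calls, the rank-one Gram updates, and the final $1\times R$ by $R\times R$ multiply), and your closing remarks on the sparse-enumeration assumption and on the provenance of the $(\order R)^2$ term are accurate side commentary, but the underlying argument is the same.
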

\begin{proof}
	See Section II.B of the online appendix \cite{appendix}.
\end{proof}

\subsection{\hyalslong (\hyals)} \label{sec:algo:hybals}
\vspace{-0.5mm}

We introduce \hyals, which is even faster than \selals.
The outline of \hyals (see Algorithm~\ref{alg:total}) is the same with that of \selals. 
That is, \hyals also updates only the rows of the factor matrices that approximate the changed entries in the current tensor window $\tensorInF{X}$.
However, when updating such a row, the number of entries accessed by \hyals is upper bounded by a user-specific constant $\thre$, while \selals accesses $deg(m, i_m)$ entries (see Theorem~\ref{thm:time:selals}), which can be as many as all the entries in $\tensorInF{X}$.
Below, we present its update rule. 

%

Assume \hyals updates the $i_m$-th row of $\fmInF{A}{m}$. That is, consider the problem in Eq.~\eqref{eqn:selals:nontemproal_final}.
As described in the procedure \texttt{updateRowRan} in Algorithm \ref{alg:vec},
\hyals uses different approaches depending on a user-specific threshold $\thre$ and $deg(m,i_m)\equiv|\matri{\left( \mat{X} + \Delta \mat{X} \right)}{m}(i_m, :)|$, i.e., the number of non-zeros of $\tensorInF{X} + \Delta \tensorInF{X}$ whose $\order$-th mode index is $i_m$.
If $deg(m,i_m)$ is smaller than or equal to $\thre$, then \hyals uses Eq.~\eqref{eqn:selals:sol_nontemporal} (lines~\ref{line:vec:selals_ran:start}-\ref{line:vec:selals_ran:end}), which is also used in \selals.


However, if $deg(m,i_m)$ is greater than $\thre$, \hyals speeds up the update through approximation. 
First, it samples $\thre$ indices from $\tensorInF{X}$ without replacement, while fixing the $\order$-th mode index to $i_m$ (line~\ref{line:vec:selals_ran:sample}).\footnote{We ignore the indices of non-zeros in $\Delta \tensorInF{X}$ even if they are sampled.} 
Let the set of sampled indices be $S$, and let $\tensorInF{\bar{X}} \in \threeDimsBack{N_1}{N_{\order - 1}}{W}$ be a tensor whose entries are all 0 except those with the sampled indices $S$.
For each sampled index $J=(\indexj)\in S$, $\entryjbarshort = \entryjshort - \entryjtildeshort$.
Note that for any index $J=(\indexj)$ of $\tensorInF{X}$, $\entryjtildeshort + \entryjbarshort=\entryjshort$ if $J \in S$ and $\entryjtildeshort + \entryjbarshort=\entryjtildeshort$ otherwise.
Thus, the more samples \hyals draws with larger $S$, the closer $\reconst{\tensorInF{X}} + \tensorInF{\bar{X}}$ is to $\tensorInF{X}$.
\hyals uses $\reconst{\tensorInF{X}} + \tensorInF{\bar{X}}$ to approximate $\tensorInF{X}$ in the update.
Specifically, it replaces $\tensorInF{X}$ of the objective function in Eq.~\eqref{eqn:selals:nontemproal_final} with $\reconst{\tensorInF{X}} + \tensorInF{\bar{X}}$. 
Then, as in Eq.~\eqref{eqn:selals:sol_nontemporal}, the update rule in Eq.~\eqref{eqn:hyals:origin} follows.
\begin{equation} \label{eqn:hyals:origin}
    \fmInF{A}{m}(i_m, :) \leftarrow \matri{(\reconst{\tensorInF{X}} + \tensorInF{\bar{X}}+\Delta \tensorInF{X})}{m}(i_m, :)\fmInF{K}{m}\mi{\fmInF{H}{m}},
\end{equation}
where $\fmInF{K}{m} = \multiKhaN{A}{m}{\order}$ and $\fmInF{H}{m} = \multiHadaN{\mat{A}}{m}{\order}$.
Let $\fmInF{A}{m}_{prev}$ be the $m$-th mode factor matrix before the update and $\fmInF{H}{m}_{prev}$ be $\multiHadaNCustom{\mt{\fmInF{A}{n}_{prev}}\fmInF{A}{n}}{m}{\order}$.
By Eq.~\eqref{eqn:selals:KtoH}, Eq.~\eqref{eqn:hyals:origin} is equivalent to Eq.~\eqref{eqn:hyals:sol}.
\begin{multline} \label{eqn:hyals:sol} 
    \fmInF{A}{m}(i_m, :) \leftarrow \fmInF{A}{m}(i_m, :)\fmInF{H}{m}_{prev} \mi{\fmInF{H}{m}}+ \\
    \matri{(\mat{\bar{X}} + \Delta \mat{X})}{m}\fmInF{K}{m} \mi{\fmInF{H}{m}}.
\end{multline}
Noteworthy, $\matri{(\mat{\bar{X}} + \Delta \mat{X})}{m}$ has at most $\thre+2=O(\thre)$ non-zeros.
\hyals uses Eq.~\eqref{eqn:hyals:sol} to update the $i_m$-th row of $\fmInF{A}{m}$ (line~\ref{line:vec:hyals} of Algorithm~\ref{alg:vec}).
It incrementally maintains $\ata{\mat{A}}{m}$ up to date by Eq.~\eqref{eqn:selals:updateA'A} (line~\ref{line:updateAA:hyals}), as \selals does.
It also maintains $\mt{\fmInF{A}{m}_{prev}}\fmInF{A}{m}$ up to date by Eq.~\eqref{eqn:hyals:updateA'A}  (line~\ref{line:updateAAprev:hyals}).
\begin{equation} \label{eqn:hyals:updateA'A}
\mt{\fmInF{A}{m}_{prev}}\fmInF{A}{m} \leftarrow \mt{\fmInF{A}{m}_{prev}}\fmInF{A}{m} - \mt{\mat{p}}\mat{p} + \mt{\mat{p}}\fmInF{A}{m}(i_m, :),
\end{equation}
where $\mat{p} = \fmInF{A}{m}_{prev} \left(i_m, : \right)$. 

\smallsection{Pros and Cons}
Through approximation, \hyals upper bounds the number of non-zeros of $\matri{(\mat{\bar{X}} + \Delta \mat{X})}{m}$ in Eq.~\eqref{eqn:hyals:sol} by $O(\thre)$.
As a result, the time complexity of \hyals, given in Theorem~\ref{thm:hyals}, is lower than that of \selals.
Specifically, $deg(m,i_m)$ in Eq.~\eqref{eq:time:selals}, which can be as big as $|\tensorInF{X}+\Delta\tensorInF{X}|$, is replaced with the user-specific constant $\thre$ in Eq.~\eqref{eq:time:hyals}
Noteworthy, if we regard $\order$, $\rank$, and $\thre$ in Eq.~\eqref{eq:time:hyals} as constants, \textbf{the time complexity of \hyals becomes constant.}
This change makes \hyals significantly faster than \selals, at the expense of a slight reduction in the quality of the solution.


\

\begin{thm}[Time complexity of \hyals] \label{thm:hyals}
	If $\thre > 1$, then 
    the time complexity of \hyals is 
    \begin{equation}
    O\bigg( \order^2\rank\, \thre + \order^2\rank^2 + \order\rank^3 \bigg). \label{eq:time:hyals}
    \end{equation}
	If $\order$, $\rank$, and $\thre$ are regarded as constants,
    Eq.~\eqref{eq:time:hyals} is $O(1)$.
\end{thm}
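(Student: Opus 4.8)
The plan is to mirror the proof of Theorem~\ref{thm:time:selals}, since \hyals follows the same outline (Algorithm~\ref{alg:total}) as \selals: the only differences are that each row update is performed by \texttt{updateRowRan} instead of \texttt{updateRowVec}, and line~\ref{line:total:start} additionally copies $\matset{\ata{\mat{A}}{m}}{\order}$ into $\matset{\mt{\fmInF{A}{m}_{prev}}\fmInF{A}{m}}{\order}$ in $O(\order R^2)$ time. Algorithm~\ref{alg:total} invokes \texttt{updateRow} at most twice for the time mode and once for each of the $\order-1$ non-time modes, i.e., $O(\order)$ times in total, so it suffices to bound the cost of a single \texttt{updateRowRan} call by $O(\order R\thre + \order R^2 + R^3)$ and multiply through.

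For one call of \texttt{updateRowRan} on mode $m$ and index $i_m$ (Algorithm~\ref{alg:vec}), I would treat the two branches separately. If $deg(m,i_m)\le\thre$ (lines~\ref{line:vec:selals_ran:start}--\ref{line:vec:selals_ran:end}), the update is Eq.~\eqref{eqn:selals:sol_nontemporal}, exactly as in \selals: computing $\matri{\left( \mat{X} + \Delta \mat{X} \right)}{m}(i_m,:)\fmInF{K}{m}$ by the sparse MTTKRP trick costs $O(\order R)$ per non-zero of that row \cite{bader2008efficient}, hence $O(\order R\cdot deg(m,i_m))=O(\order R\thre)$; forming $\fmInF{H}{m}\in\dims{R}{R}$ from the maintained $\matset{\ata{\mat{A}}{m}}{\order}$ via Eq.~\eqref{eqn:selals:KtoH} costs $O(\order R^2)$; its pseudoinverse $\mi{\fmInF{H}{m}}$ costs $O(R^3)$ \cite{courrieu2008fast}; and multiplying the resulting $1\times R$ vector by $\mi{\fmInF{H}{m}}$ costs $O(R^2)$. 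If $deg(m,i_m)>\thre$, we draw $|S|=\thre$ sampled indices (line~\ref{line:vec:selals_ran:sample}), evaluate $\entryjtildeshort$ for each $J\in S$ in $O(\order R)$ time to build $\tensorInF{\bar{X}}$ (line~\ref{line:vec:selals_ran:approx}), which is $O(\order R\thre)$ in total, and apply Eq.~\eqref{eqn:hyals:sol}; there $\matri{(\mat{\bar{X}} + \Delta \mat{X})}{m}$ has at most $\thre+2=O(\thre)$ non-zeros (using $\thre>1$), so $\matri{(\mat{\bar{X}} + \Delta \mat{X})}{m}\fmInF{K}{m}$ costs $O(\order R\thre)$, while $\fmInF{A}{m}(i_m,:)\fmInF{H}{m}_{prev}$ costs $O(R^2)$, forming $\fmInF{H}{m}$ and $\fmInF{H}{m}_{prev}$ each costs $O(\order R^2)$, and the pseudoinverse costs $O(R^3)$. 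In either branch, maintaining $\ata{\mat{A}}{m}$ via Eq.~\eqref{eqn:selals:updateA'A} (line~\ref{line:updateAA:hyals}) and $\mt{\fmInF{A}{m}_{prev}}\fmInF{A}{m}$ via Eq.~\eqref{eqn:hyals:updateA'A} (line~\ref{line:updateAAprev:hyals}) are rank-one updates of $R\times R$ matrices, i.e., $O(R^2)$. Hence one \texttt{updateRowRan} call costs $O(\order R\thre + \order R^2 + R^3)$.

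Multiplying by the $O(\order)$ calls and adding the $O(\order R^2)$ prologue of line~\ref{line:total:start} (which is absorbed) yields $O(\order^2 R\thre + \order^2 R^2 + \order R^3)$, which is Eq.~\eqref{eq:time:hyals}; treating $\order$, $R$, and $\thre$ as constants collapses this to $O(1)$.

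The step I expect to be the main obstacle is the sampling in line~\ref{line:vec:selals_ran:sample}: to keep the bound free of any $deg(m,i_m)$ term, one must not scan all non-zeros of the slice whose $m$-th mode index is $i_m$. The argument therefore has to rely on maintaining, for each such slice, an auxiliary data structure alongside Algorithm~\ref{alg:model} (e.g., a dynamic array of its non-zeros supporting $O(1)$-time random access and $O(1)$-amortized insertion/deletion) from which $\thre$ distinct indices can be drawn without replacement in $O(\thre)$ time; combined with the $O(\order R)$ cost of evaluating each $\entryjtildeshort$, this confines the sampling-plus-$\tensorInF{\bar{X}}$-construction cost to $O(\order R\thre)$. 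The remaining steps are a routine re-accounting of the \selals analysis, so I would defer the detailed computation to the online appendix \cite{appendix}, as was done for Theorems~\ref{thm:time:wals} and~\ref{thm:time:selals}.
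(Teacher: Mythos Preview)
Your proposal is correct and follows essentially the same approach as the paper's proof: bound one call of \texttt{updateRowRan} by $O(\order R\thre + \order R^2 + R^3)$ via the sparse MTTKRP cost (at most $\thre+2$ non-zeros in either branch), the $O(\order R^2)$ cost of forming $\fmInF{H}{m}$ and $\fmInF{H}{m}_{prev}$ from the maintained Gram matrices, and the $O(R^3)$ pseudoinverse, then multiply by $O(\order)$ calls. Your explicit treatment of the sampling step in line~\ref{line:vec:selals_ran:sample} (requiring an auxiliary per-slice data structure to avoid a hidden $deg(m,i_m)$ term) is more careful than the paper's own argument, which silently assumes this cost is dominated.
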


\begin{proof}
	See Section II.C of the online appendix \cite{appendix}.
\end{proof}

Unlike \wals, \selals and \hyals do not normalize the columns of factor matrices during the update process. This is because normalization requires $O(\rank\sum_{m=1}^{\order}\modeSize{m})$ time, which is proportional to the number of all entries in all factor matrices, and thus significantly increases the time complexity of \selals and \hyals.
However, without normalization, the entries of factor matrices may have extremely large or extremely small absolute values, making \selals and \hyals vulnerable to numerical errors.
In our experiments (see Fig.~\ref{fig:relative_fitness} in Section~\ref{sec:exp:comparison}), the accuracies of \selals and \hyals suddenly drop due to numerical errors in some datasets.

\subsection{\ccdlong(\selccd and \hyccd)} \label{sec:algo:stable}
\vspace{-0.5mm}

In this section, we propose \selccd and \hyccd, which successfully address the aforementioned instability of \selals and \hyals.
The main idea is to clip each entry, (i.e., ensure that each entry is within a predefined range), while at the same time ensuring that the objective function does not increase.
To this end, \selccd and \hyccd employs coordinate descent, where each entry of factor matrices is updated one by one.
The outline of \selccd and \hyccd (see Algorithm~\ref{alg:total}) is the same as that of \selals and \hyals.
Below, we present their update rules, which are used in Algorithm~\ref{alg:ccd}.


Coordinate descent updates one variable (i.e., entry of a factor matrix) at a time while fixing all the other variables.
Assume an entry $a^{(m)}_{i_mk}$ of $\fmInF{A}{m}$ is updated. 
Solving the problem in Eq.~\eqref{eqn:obj} with respect to $a^{(m)}_{i_mk}$ while fixing the other variables is equivalent to solving the problem in Eq.~\eqref{eqn:ccd:objective}.
\begin{equation} \label{eqn:ccd:objective}
    \min_{a^{(m)}_{i_m k}} \sum_{J \in \Omega^{(m)}_{i_m}} (x_J + \Delta x_J - \sum_{r \neq k}^R \prod_{n=1}^M a_{j_n r}^{(n)} - a_{i_m k}^{(m)} \prod_{n \neq m}^M a_{j_n k}^{(n)})^2,
\end{equation}
where $J=(\indexj)$, and $\Omega^{(m)}_{i_m}$ is the set of indices of $\tensorInF{X}$ of which the $m$-th mode index is $i_m$. 
To describe its solution, we first define the following terms:
\begin{align}
        c_{k}^{(m)} &\equiv \prod\nolimits_{n \neq m}^M \big( \sum\nolimits_{j_n=1}^{N_n} (a_{j_n k}^{(n)} )^2 \big), \nonumber \\
        d_{i_m k}^{(m)} &\equiv \sum\nolimits_{r \neq k}^R \big( a_{i_m r}^{(m)}
        \prod\nolimits_{n \neq m}^M \big( \sum\nolimits_{j_n = 1}^{N_n}a_{j_nr}^{(n)}a_{j_nk}^{(n)} \big)\big), \label{eqn:ccd:helper} \\ 
        e_{i_m k}^{(m)} &\equiv \sum\nolimits_{r=1}^R \big( b^{(m)}_{i_m r}
        \prod\nolimits_{n \neq m}^M \big( \sum\nolimits_{j_n = 1}^{N_n}b_{j_nr}^{(n)} a_{j_nk}^{(n)} \big) \big), \nonumber
\end{align}
where $\fmInF{B}{m}\equiv \fmInF{A}{m}_{prev}$ is $\fmInF{A}{m}$ before any update.

\smallsection{Solving the Problem in Eq.~\eqref{eqn:ccd:objective}}
The problem in Eq.~\eqref{eqn:ccd:objective} is a least square problem, and thus there exists the closed-form solution, which is used to update $a_{i_m k}^{(m)}$ in Eq.~\eqref{eqn:ccd:sol_origin}.
\begin{equation} \label{eqn:ccd:sol_origin} 
a_{i_m k}^{(m)} \leftarrow \Big( \sum_{J \in \Omega^{(m)}_{i_m}} \big((x_J + \Delta x_J) \prod_{n \neq m}^M a_{j_n k}^{(n)} \big) - d_{i_m k}^{(m)}
\Big) / c_{k}^{(m)}.
\end{equation}
Eq.~\eqref{eqn:ccd:sol_origin} is used, instead of Eq.~\eqref{eqn:selals:sol_nontemporal}, when updating non-time mode factor matrices (i.e., when $m\neq \order$) in \selccd (line~\ref{line:ccd:vec} of Algorithm~\ref{alg:ccd}).
It is also used, instead of Eq.~\eqref{eqn:selals:sol_nontemporal}, in \hyccd when $deg(m,i_m) \leq \thre$  (line~\ref{line:ccd:ran:lowDeg}).
As in \selals,  when updating the time-mode factor matrix, \selccd approximates $\tensorInF{X}$ by $\reconst{\tensorInF{X}}$, and thus it uses Eq.~\eqref{eqn:ccd:sol_approx:time} (line~\ref{line:ccd:vec:time}).
\begin{equation} \label{eqn:ccd:sol_approx:time}
a_{i_m k}^{(m)} \leftarrow \Big( e_{i_m k}^{(m)}
+ \sum_{J \in \Omega^{(m)}_{i_m}} \big( \Delta x_J \prod_{n \neq m}^M a_{j_n k}^{(n)} \big) 
- d_{i_m k}^{(m)} \Big)
/ c_{k}^{(m)}.
\end{equation}
Similarly, as in \hyals, when $deg(m,i_m) > \thre$, \hyccd approximates $\tensorInF{X}$ by $\reconst{\tensorInF{X}}+\tensorInF{\bar{X}}$, and thus it uses Eq.~\eqref{eqn:ccd:sol_approx} (line~\ref{line:ccd:ran:highDeg}).
\begin{equation} \label{eqn:ccd:sol_approx}
    a_{i_m k}^{(m)} \leftarrow \Big( e_{i_m k}^{(m)}
    + \sum_{J \in \Omega^{(m)}_{i_m}} \big((\bar{x}_J+ \Delta x_J) \prod_{n \neq m}^M a_{j_n k}^{(n)} \big) 
     - d_{i_m k}^{(m)} \Big)
    / c_{k}^{(m)}.
\end{equation}
Note that all Eq.~\eqref{eqn:ccd:sol_origin}, Eq.~\eqref{eqn:ccd:sol_approx:time}, and Eq.~\eqref{eqn:ccd:sol_approx}
are based on Eq.~\eqref{eqn:ccd:helper}.
For the rapid computation of Eq.~\eqref{eqn:ccd:helper}, \selccd and \hyccd incrementally maintain $\sum_{j_m=1}^{N_m} (a_{j_mk}^{(m)})^2$ and $\sum_{j_m=1}^{N_m} a_{j_mr}^{(m)} a_{j_mk}^{(m)}$, which are the $(k, k)$-th and $(r, k)$-th entries of $\ata{\mat{A}}{m}$, by Eq.~\eqref{eqn:ccd:updatea} and Eq.~\eqref{eqn:ccd:updateaa} (lines~\ref{line:ccd:updateAA} and \ref{line:ccd:ran:updateAA}).
\hyccd also incrementally maintains $\sum_{j_m = 1}^{N_m}b_{j_mr}^{(m)} a_{j_mk}^{(m)}$, which is the $(r,k)$-th entry of $\mt{\fmInF{A}{m}_{prev}}\fmInF{A}{m}$, by Eq.~\eqref{eqn:ccd:updateba} (line~\ref{line:ccd:ran:updateAAprev}).
\begin{align} 
q^{(m)}_{k k} &\leftarrow q^{(m)}_{k k} 
- (b_{i_m k}^{(m)})^2 + (a_{i_m k}^{(m)})^2, \label{eqn:ccd:updatea} \\
q^{(m)}_{r k} &\leftarrow q^{(m)}_{r k} - a_{i_m r}^{(m)} b_{i_m k}^{(m)} 
+ a_{i_m r}^{(m)} a_{i_m k}^{(m)}, \label{eqn:ccd:updateaa} \\
u^{(m)}_{r k} &\leftarrow u^{(m)}_{r k} 
- b_{i_m r}^{(m)} b_{i_m k}^{(m)} + b_{i_m r}^{(m)} a_{i_m k}^{(m)}, \label{eqn:ccd:updateba}
\end{align}
where $\fmInF{Q}{m}\equiv\ata{\mat{A}}{m}$ and $\fmInF{U}{m}\equiv\mt{\fmInF{A}{m}_{prev}}\fmInF{A}{m}$.
Proofs of Eqs.~\eqref{eqn:ccd:sol_origin}-\eqref{eqn:ccd:updateba} can be found in an online appendix \cite{appendix}.

\smallsection{Clipping}
In order to prevent the entries of factor matrices from having extremely large or small absolute values, \selccd and \hyccd ensures that its absolute value is at most $\eta$, which is a user-specific threshold.
Specifically, if the updated entry is greater than $\eta$, it is set to $\eta$, and if the updated entry is smaller than $-\eta$, it is set to $-\eta$ (lines~\ref{line:ccd:clipping} and \ref{line:ccd:ran:clipping} in Algorithm~\ref{alg:ccd}).
Eq.~\eqref{eqn:ccd:sol_origin} followed by clipping never increases the objective function in Eq.~\eqref{eqn:ccd:objective}.
\footnote{Let $x$, $y$, and $z$ be $a^{(m)}_{i_mk}$ before update, after being updated by Eq.~\eqref{eqn:ccd:sol_origin}, and after being clipped, respectively.
The objective function in Eq.~\eqref{eqn:ccd:objective} is convex, minimized at $y$, and symmetric around $y$. $|y-z|\leq|y-x|$ holds.}${}^{,}$\footnote{For Eq.~\eqref{eqn:ccd:sol_approx:time} and Eq.~\eqref{eqn:ccd:sol_approx}, this is true only when $\tensorInF{X}$ is well approximated.}


\begin{algorithm}[t]
    \small
	\caption{\label{alg:ccd} \texttt{updateRow} in \selccd and \hyccd}
	\tcp{Parenthesized inputs/outputs are for \hyccd}
	\KwIn{(1) mode $m$ and index $i_m$ to be updated, \\
		\quad\qquad (2) current tensor window  $\tensorInF{X}$, (3) change $\Delta \tensorInF{X}$, \\
		\qquad\quad(4) factor matrices $\matset{\fmInF{A}{m}}{\order}$ for $\tensorInF{X}$\\
		\qquad\quad (5) $\matset{\ata{\mat{A}}{m}}{\order}$ (and $\matset{\mt{\fmInF{A}{m}_{prev}}\fmInF{A}{m}}{\order}$) \\
		\quad\qquad (6) $\eta$ for clipping (and threshold $\theta$ for sampling)
	}
	\KwOut{updated $\fmInF{A}{m}$, $\ata{\mat{A}}{m}$ (and $\mt{\fmInF{A}{m}_{prev}}\fmInF{A}{m}$)}
	\SetKwFunction{vec}{updateRowVec+}
	\SetKwFunction{ran}{updateRowRan+}
	\SetKwFunction{clip}{clipping}
	
    \SetKwProg{Fn}{Procedure}{:}{}
    
    \vspace{0.5mm}
    \tcp{\texttt{updateRow} implemented in \selccd}
    \Fn{\vec{$m$, $i_m$, $\cdots$}}{ 
        
        \For{$k = 1, \cdots, R$}{
            
            \textbf{if} $m = M$ \textbf{then}
                Update $a_{i_m k}^{(m)}$ by Eq.~(\ref{eqn:ccd:sol_approx:time}) \label{line:ccd:vec:time} 
			
			\textbf{else} Update $a_{i_m k}^{(m)}$ by Eq.~(\ref{eqn:ccd:sol_origin}) \label{line:ccd:vec}   
			      

			\textbf{if} $|a_{i_m k}^{(m)}| > \eta$ \textbf{then}
			$a_{i_m k}^{(m)} \leftarrow sign(a_{i_m k}^{(m)}) \cdot \eta$ \label{line:ccd:clipping} 
			
			Update $\mt{\fmInF{A}{m}}\fmInF{A}{m}$ by Eq.~(\ref{eqn:ccd:updatea}) and Eq.~(\ref{eqn:ccd:updateaa}) \label{line:ccd:updateAA}
        }
    
    	\Return  $\fmInF{A}{m}$ and $\ata{\mat{A}}{m}$
    }
    
    \tcp{\texttt{updateRow} implemented in \hyccd}
    \Fn{\ran{$m$, $i_m$, $\cdots$}}{
        
        \If{$deg(m,i_m) > \thre$}{
            $S \leftarrow \theta$ indices of $\tensorInF{X}$ chosen uniformly at random, while fixing the $m$-th mode index to $i_m$ 
	        
	        Compute $\tensorInF{\bar{X}}$ from $S$
        }
        \For{$k = 1, \cdots, R$}{
            
            \textbf{if} $deg(m,i_m) \leq \theta$ \textbf{then} Update $a_{i_m k}^{(m)}$ by Eq.~(\ref{eqn:ccd:sol_origin}) \label{line:ccd:ran:lowDeg}

            \textbf{else}  Update $a_{i_m k}^{(m)}$ by Eq.~(\ref{eqn:ccd:sol_approx})
             \label{line:ccd:ran:highDeg}
            
            
            \textbf{if} $|a_{i_m k}^{(m)}| > \eta$ \textbf{then}
            $a_{i_m k}^{(m)} \leftarrow sign(a_{i_m k}^{(m)}) \cdot \eta$ \label{line:ccd:ran:clipping}   
            
            Update $\mt{\fmInF{A}{m}}\fmInF{A}{m}$ by Eq.~(\ref{eqn:ccd:updatea}) and Eq.~(\ref{eqn:ccd:updateaa}) \label{line:ccd:ran:updateAA}

            Update $\mt{\fmInF{A}{m}_{prev}}\fmInF{A}{m}$ by Eq.~(\ref{eqn:ccd:updateba}) \label{line:ccd:ran:updateAAprev}
        }
    
	    \Return $\fmInF{A}{m}$, $\ata{\mat{A}}{m}$, and $\mt{\fmInF{A}{m}_{prev}}\fmInF{A}{m}$
    }
    
%
%
\end{algorithm}

\smallsection{Pros and Cons} 
\selccd and \hyccd does not suffer from instability due to numerical errors, which \selals and \hyals suffer from. Moreover, as shown in Theorems~\ref{thm:selccd} and \ref{thm:hyccd}, the time complexities of \selccd and \hyccd are lower than those of \selals and \hyals, respectively.
Empirically, however, \selccd and \hyccd are slightly slower and less accurate than \selals and \hyals, respectively (see Section~\ref{sec:exp:comparison}).

\begin{thm}[Time complexity of \selccd] \label{thm:selccd}
    The time complexity of \selccd is 
    \vspace{-1mm}
    \begin{equation}
    O\bigg( \order\rank\sum\nolimits_{m=1}^{\order-1} deg(m,i_m) + \order^2\rank^2 \bigg). \label{eq:time:selccd}
    \end{equation}
    \vspace{-3mm}
\end{thm}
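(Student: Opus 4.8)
The plan is to bound the running time of Algorithm~\ref{alg:total} when \texttt{updateRow} is instantiated by the procedure \texttt{updateRowVec+} of Algorithm~\ref{alg:ccd}, charging separately (i) the at most two invocations that update rows of the time-mode factor matrix $\fmInF{A}{\order}$ and (ii) the $\order-1$ invocations that update the single row $\fmInF{A}{m}(i_m,:)$ of each non-time-mode factor matrix $\fmInF{A}{m}$, $m<\order$. Throughout, I would exploit — exactly as in the proof of Theorem~\ref{thm:time:selals} — that the cross-mode inner products $\sum_{j_n} a^{(n)}_{j_nr}a^{(n)}_{j_nk}$ and $\sum_{j_n}(a^{(n)}_{j_nk})^2$ appearing in Eq.~\eqref{eqn:ccd:helper} are precisely entries of the maintained matrices $\ata{\mat{A}}{n}$, hence each is available in $O(1)$ time; the only quantity whose cost scales with the data is the sum over $J\in\Omega^{(m)}_{i_m}$. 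The essential point, and the reason the bound improves on Eq.~\eqref{eq:time:selals}, is that coordinate descent replaces the least-squares step with $\rank$ scalar updates (Eqs.~\eqref{eqn:ccd:sol_origin} and \eqref{eqn:ccd:sol_approx:time}) that merely divide by the scalar $c^{(m)}_k$, so no $O(\order\rank^3)$ pseudoinverse of $\fmInF{H}{m}$ is ever formed.

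For a single call \texttt{updateRowVec+}$(m,i_m)$ with $m<\order$, I would bound the work in the loop over $k=1,\dots,\rank$: computing $c^{(m)}_k$ is a product of $\order-1$ maintained diagonal entries, costing $O(\order)$; computing $d^{(m)}_{i_mk}$ is a sum over $r\neq k$ of terms each needing $O(\order)$ maintained inner products, costing $O(\order\rank)$; the data term $\sum_{J\in\Omega^{(m)}_{i_m}}\bigl((x_J+\Delta x_J)\prod_{n\neq m}a^{(n)}_{j_nk}\bigr)$ ranges over the $deg(m,i_m)=|\matri{(\mat X+\Delta\mat X)}{m}(i_m,:)|$ nonzeros of that row, each contributing $O(\order)$, for $O(\order\,deg(m,i_m))$; evaluating Eq.~\eqref{eqn:ccd:sol_origin} and the clipping test are $O(1)$. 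After each coordinate is changed, maintaining $\ata{\mat{A}}{m}$ via Eqs.~\eqref{eqn:ccd:updatea}--\eqref{eqn:ccd:updateaa} touches only row/column $k$, hence $O(\rank)$, i.e.\ $O(\rank^2)$ over the whole row. Summing over $k$, one such call costs $O\bigl(\order\rank\,deg(m,i_m)+\order\rank^2\bigr)$, and over $m=1,\dots,\order-1$ this is $O\bigl(\order\rank\sum_{m=1}^{\order-1}deg(m,i_m)+\order^2\rank^2\bigr)$.

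For the (at most two) time-mode calls, $\Delta\matri{\mat X}{\order}$ has at most two nonzeros by Definition~\ref{defn:delta}, so the data term $\sum_{J}\bigl(\Delta x_J\prod_{n\neq\order}a^{(n)}_{j_nk}\bigr)$ in Eq.~\eqref{eqn:ccd:sol_approx:time} costs $O(\order)$ per $k$; the extra term $e^{(\order)}_{i_\order k}$ is, by the same reasoning used for $d^{(\order)}_{i_\order k}$, computable in $O(\order\rank)$ per $k$ from maintained data, since at the point Algorithm~\ref{alg:total} updates the time-mode rows no factor matrix has yet changed, so $\fmInF{B}{n}=\fmInF{A}{n}$ and $\mt{\fmInF{B}{n}}\fmInF{A}{n}=\ata{\mat{A}}{n}$ for $n\neq\order$. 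Hence each time-mode row costs $O(\order\rank^2)$, and the two rows together $O(\order\rank^2)$, which is absorbed into the non-time-mode bound; adding the two contributions yields Eq.~\eqref{eq:time:selccd}. I expect the main obstacle to be precisely this bookkeeping: verifying that none of $c^{(m)}_k$, $d^{(m)}_{i_mk}$, $e^{(m)}_{i_mk}$ forces an $O(\rank^3)$ step and that the incremental maintenance of $\ata{\mat{A}}{m}$ stays within $O(\rank^2)$ per updated row — both of which hinge on the facts that changing one coordinate perturbs only one row and column of $\ata{\mat{A}}{m}$ and that every inner product demanded by Eq.~\eqref{eqn:ccd:helper} is already materialized in the maintained auxiliary matrices. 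The clipping step is a harmless $O(1)$ per coordinate and does not affect the asymptotics.
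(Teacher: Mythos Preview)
Your proposal is correct and follows essentially the same approach as the paper's proof: both charge $O(\order)$ for $c^{(m)}_k$, $O(\order\rank)$ for $d^{(m)}_{i_mk}$ (via maintained entries of $\ata{\mat{A}}{n}$), and $O(\order\,deg(m,i_m))$ for the data sum in Eq.~\eqref{eqn:ccd:sol_origin}, then multiply by $\rank$ coordinates and $\order-1$ non-time modes. Your treatment is in fact more explicit than the paper's, which simply states that the non-time-mode cost ``dominates the time complexity of the rest of \selccd''; you spell out the time-mode case (using that the time-mode rows are updated first so $\fmInF{B}{n}=\fmInF{A}{n}$ for $n\neq\order$ and hence $e^{(\order)}_{i_\order k}$ is readable from the maintained $\ata{\mat{A}}{n}$) and the $O(\rank^2)$ maintenance of $\ata{\mat{A}}{m}$, both of which are absorbed into $O(\order^2\rank^2)$ as you say.
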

\begin{proof}  
	See Section II.E of the online appendix \cite{appendix}.
\vspace{-1mm}
\end{proof}

\begin{table*}[t]
	\vspace{-5mm}
	\centering
	\caption{Summary of real-world \blue{sparse} tensor datasets. All links are at \url{https://github.com/DMLab-Tensor/SliceNStitch\#datasets}.} \label{Tab:data}
	\begin{tabular}{l|c|c|c|c}
		\toprule
		\textbf{Name} & \textbf{Description} & \textbf{Size} & \textbf{\# Non-zeros } & \textbf{Density} \\
		\midrule
		Divvy Bikes & sources $\times$ destinations $\times$ timestamps [minutes] & $673 \times 673 \times 525594$ & $3.82M$ & $1.604\times 10^{-5} $ \\
		Chicago Crime & communities $\times$ crime types $\times$ timestamps [hours] & $77\times32\times148464$ & $5.33M$ & $1.457\times10^{-2}$  \\
		New York Taxi & sources $\times$ destinations $\times$ timestamps [seconds] & $265\times265\times5184000$ & $84.39M$ & $2.318\times10^{-4}$ \\
		Ride Austin & sources $\times$ destinations $\times$ colors $\times$ timestamps [minutes] & $219\times219\times24\times285136$ & $0.89M$ & $2.739\times10^{-6}$ \\
		\bottomrule
	\end{tabular}
\end{table*}

\begin{thm}[Time complexity of \hyccd] \label{thm:hyccd}
	If $\thre > 1$, then 
	the time complexity of \hyccd is 
	\vspace{-1mm}
	\begin{equation}
	O\bigg( M^2R\,\thre + \order^2\rank^2 \bigg). \label{eq:time:hyccd}
	\end{equation}
	\vspace{-1mm}
	If $\order$, $\rank$, and $\thre$ are regarded as constants,
	Eq.~\eqref{eq:time:hyccd} is $O(1)$.
\end{thm}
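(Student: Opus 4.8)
The plan is to reuse the accounting scheme of Theorems~\ref{thm:hyals} and \ref{thm:selccd}: bound (i) the number of calls to \texttt{updateRowRan+} issued by the outline in Algorithm~\ref{alg:total}, (ii) the cost of one such call, then multiply and add the $O(1)$ outline overhead. For (i), inspecting Algorithm~\ref{alg:total} shows that at most two calls are made for the time mode (lines~\ref{alg:total:time:start}--\ref{alg:total:time:end}) and exactly $\order-1$ for the non-time modes (lines~\ref{alg:total:nontime:start}--\ref{alg:total:nontime:end}), so there are $O(\order)$ calls in total; the preparatory copy of the $\order$ matrices $\ata{\mat{A}}{m}$ in line~\ref{line:total:start} costs $O(\order\rank^2)$, which we will see is dominated.

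For (ii), I would split \texttt{updateRowRan+} into its two branches. When $deg(m,i_m)>\thre$, forming $\tensorInF{\bar{X}}$ requires evaluating $\tilde{x}_J=\sum_{r=1}^{\rank}\prod_{n=1}^{\order}a^{(n)}_{j_nr}$ for each of the $\thre$ sampled indices, i.e.\ $O(\order\rank\thre)$ time, and then $\tensorInF{\bar{X}}+\Delta\tensorInF{X}$ has at most $\thre+2$ non-zeros; when $deg(m,i_m)\le\thre$, the relevant slice of $\tensorInF{X}+\Delta\tensorInF{X}$ has at most $deg(m,i_m)\le\thre$ non-zeros. So in both branches the sum over $\Omega^{(m)}_{i_m}$ appearing in Eq.~\eqref{eqn:ccd:sol_origin}, Eq.~\eqref{eqn:ccd:sol_approx:time}, or Eq.~\eqref{eqn:ccd:sol_approx} ranges over $O(\thre)$ terms, each costing $O(\order)$, contributing $O(\order\rank\thre)$ per call across the $\rank$ coordinates. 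The remaining work in the loop over $k=1,\dots,\rank$ is evaluating the helper quantities of Eq.~\eqref{eqn:ccd:helper}: using the maintained $R\times R$ matrices $\ata{\mat{A}}{n}$ and $\mt{\fmInF{A}{n}_{prev}}\fmInF{A}{n}$, each $c^{(m)}_k$ is a product of $O(\order)$ stored diagonal entries, while $d^{(m)}_{i_mk}$ and $e^{(m)}_{i_mk}$ are each a sum of $O(\rank)$ terms, each itself a product of $O(\order)$ stored entries, i.e.\ $O(\order\rank)$ per $k$ and hence $O(\order\rank^2)$ per call; the incremental maintenance of $\ata{\mat{A}}{m}$ and $\mt{\fmInF{A}{m}_{prev}}\fmInF{A}{m}$ via Eqs.~\eqref{eqn:ccd:updatea}--\eqref{eqn:ccd:updateba} touches $O(\rank)$ entries per $k$, i.e.\ $O(\rank^2)$ per call, and clipping is $O(1)$. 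Summing, one call costs $O(\order\rank\thre+\order\rank^2)$.

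Combining the pieces, the total is $O(\order)\cdot O(\order\rank\thre+\order\rank^2)+O(\order\rank^2)=O(\order^2\rank\,\thre+\order^2\rank^2)$, which is Eq.~\eqref{eq:time:hyccd}; treating $\order,\rank,\thre$ as constants makes it $O(1)$. Note that, because \selccd/\hyccd use coordinate descent rather than the least-squares solve of \selals/\hyals, no pseudoinverse is ever formed, so the $O(\order\rank^3)$ term of Theorem~\ref{thm:hyals} does not reappear here.

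I expect the main obstacle to be careful bookkeeping rather than any deep argument: one must (a) argue precisely that the sampled index set lies entirely in $\Omega^{(m)}_{i_m}$ and that $\tensorInF{\bar{X}}+\Delta\tensorInF{X}$ has only $O(\thre)$ non-zeros in that slice --- it is exactly here that the hypothesis $\thre>1$ is used, to absorb the ``$+2$'' contributed by $\Delta\tensorInF{X}$ --- so that every restricted summation stays $O(\thre)$ terms long; and (b) verify that each quantity in Eq.~\eqref{eqn:ccd:helper}, as well as each incremental update, is assembled purely from the maintained $R\times R$ matrices, so that no evaluation secretly costs $\Theta(\order\cdot deg(m,i_m))$ or $\Theta(\sum_m N_m)$. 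Once these two points are pinned down, the bound follows from the summation above; the per-line verification mirrors that of \selccd (Section II.E of \cite{appendix}) with $deg(m,i_m)$ replaced by $O(\thre)$.
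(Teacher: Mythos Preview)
Your proposal is correct and follows essentially the same accounting as the paper's proof: bound each coordinate update by $O(\order\thre+\order\rank)$ using the $O(\thre)$ non-zeros in the sampled/restricted slice and the maintained $R\times R$ matrices for $c^{(m)}_k$, $d^{(m)}_{i_mk}$, $e^{(m)}_{i_mk}$, then multiply by $\rank$ coordinates and $O(\order)$ rows. Your write-up is in fact more explicit than the paper's sketch about the two branches, the cost of forming $\tensorInF{\bar{X}}$, and why the $O(\order\rank^3)$ pseudoinverse term from Theorem~\ref{thm:hyals} disappears.
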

\begin{proof}
	See Section II.F of the online appendix \cite{appendix}.
\end{proof}

\section{Experiments}
\label{sec:exp}
In this section, we design and review experiments to answer the following questions:
\begin{itemize}[leftmargin=*]
	\item{\textbf{Q1. Advantages of Continuous CP Decomposition}}: What are the advantages of continuous CP decomposition over conventional CP decomposition?
    \item{\textbf{Q2. Speed and \blue{Fitness}}}: \blue{How rapidly and precisely does \method fit the input tensor, compared to baselines?}
    \item{\textbf{Q3. Data Scalability}}: How does \method scale with regard to the number of events?
    \item{\textbf{Q4. Effect of Parameters}}: How do user-specific thresholds $\theta$ and $\eta$ affect the performance of \method?
    \item{\textbf{Q5. Practitioner's Guide}}: Which versions of \method do we have to use?    
    \blue{\item{\textbf{Q6. Application to Anomaly Detection}}: Can \method spot abnormal events rapidly and accurately?
    }
\end{itemize}

\subsection{Experiment Specifications}
\label{sec:exp:settings}

\smallsection{Machine}
We ran all experiments on a machine with a 3.7GHz Intel i5-9600K CPU and 64GB memory.

\smallsection{Datasets}
We used four different real-world \blue{sparse} tensor datasets summarized in Table~\ref{Tab:data}.
They are sparse tensors with a time mode, and their densities vary from $10^{-2}$ to $10^{-6}$.

\smallsection{Evaluation Metrics}
We evaluated \method and baselines using the following metrics:
\begin{itemize}[leftmargin=*]

	\item{\textbf{Elapsed Time per Update}}: The average elapsed time for updating the factor matrices in response to each event.
	\item{\textbf{Fitness}} (The higher the better): \textit{Fitness} is a widely-used metric to evaluate the accuracy of tensor decomposition algorithms. It is defined as $1 - ({\lfrosmall{\reconst{\tensorInF{X}} - \tensorInF{X}}}/{\lfrosmall{\tensorInF{X}}}),$
	where $\tensorInF{X}$ is the input tensor, and $\reconst{\tensorInF{X}}$ (Eq.~\eqref{eq:reconstruct}) is its approximation.
	
    \item{\textbf{Relative Fitness} \cite{zhou2018online}} (The higher the better): \textit{Relative fitness}, which is defined as the ratio between the fitness of the target algorithm and the fitness of ALS, i.e.,
    \vspace{-0.5mm}
    \begin{equation*}
    Relative \, Fitness \equiv \frac{Fitness_{target}}{Fitness_{ALS}}.
    \vspace{-0.5mm}
    \end{equation*}
    
    Recall that ALS (see Section~\ref{sec:prelim}) is the standard batch algorithm for tensor decomposition.
    
    
%

\end{itemize}

\smallsection{Baselines}
Since there is no previous algorithm for continuous CPD, we compared \method with ALS, OnlineSCP \cite{zhou2018online}, \blue{CP-stream \cite{smith2018streaming}, and NeCPD ($n$)  with $n$ iterations \cite{anaissi2020necpd}, all of which are for conventional CPD (see Section~\ref{sec:related})}.
All baselines except ALS update factor matrices once per period $T$ (instead of whenever an event occurs).\footnote{\blue{We modified the baselines, which are for decomposing the entire tensor, to decompose the tensor window (see Definition~\ref{defn:tensor_window}), as \method does.}} \blue{We implemented \method and ALS in C++. 
	We used the official implementation of OnlineSCP in MATLAB and that of CP-stream in C++.\footnote{\blue{\url{https://shuozhou.github.io}, \url{https://github.com/ShadenSmith/splatt-stream}}}
	We implemented NeCPD in MATLAB.}

%


\smallsection{Experimental Setup}
We set the hyperparameters as listed in Table~\ref{Tab:data:param} unless otherwise stated.
We set the threshold $\thre$ for sampling to be smaller than half of the average degree of indices (i.e., the average number of non-zeros when fixing an index) in the initial tensor window.
In each experiment, we initialized factor matrices using ALS on the initial tensor window, and we processed the events during $5\tLen \batch$ time units.
We measured relative fitness $5$ times.



\subsection{Q1. Advantages of Continuous CP Decomposition}
\vspace{-0.5mm}
\label{sec:exp:continuous}
We compared the continuous CPD and conventional CPD, in terms of the update interval (i.e., the minimum interval between two consecutive updates), fitness, and the number of parameters, using the New York Taxi dataset.
We used \hyals and fixed the period $T$ to $1$ hour for continuous CPD; and we used \blue{CP-stream, OnlineSCP, and ALS} while varying $T$ (i.e., the granularity of the time mode) from $1$ second to $1$ hour for conventional CPD.
Fig.~\ref{fig:motivation} shows the result,\footnote{\blue{Before measuring the fitness of all baselines, we merged the rows of fine-grained time-mode factor matrices sequentially by adding entries so that one row corresponds to an hour. Without this postprocessing step, the fitness of the baselines was even lower than those reported in Fig.~\ref{subfig:simple_cp:fitness}.}} and we found Observation~\ref{obs:continuous}.
\vspace{-0.5mm}
\begin{obs}[Advantages of Continuous CPD] \label{obs:continuous}
	Continuous CPD achieved (a) near-instant updates, (b) high fitness, and (c) a small number of parameters at the same time, while conventional CPD cannot. When the update interval was the same, continuous CPD achieved \blue{\textbf{$\mathbf{2.26\times}$ higher fitness with $\mathbf{55\times}$ fewer parameters} than conventional CPD}.
	When they showed similar fitness,
	the update interval of continuous CPD was \blue{\textbf{$\mathbf{3600\times}$ shorter}} than that of conventional CPD.
	\vspace{-2mm}
\end{obs}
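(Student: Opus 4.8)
The plan is to substantiate Observation~\ref{obs:continuous} empirically rather than by a formal derivation, since its three quantities --- update interval, fitness, and number of parameters --- are all directly measurable once the dataset and the competing methods are fixed. I would use the New York Taxi dataset (Table~\ref{Tab:data}), whose native time unit is one second, as the testbed, and compare a single configuration of continuous CPD against a family of conventional-CPD configurations indexed by the period $T$. For continuous CPD I would run \hyals with $T$ fixed to one hour (so the window holds $W$ hourly tensor units) and update the factor matrices on every event; by construction its update interval then equals the minimum time unit of the stream, i.e., one second. For conventional CPD I would run CP-stream, OnlineSCP, and ALS, sweeping $T$ over $\{1\text{ s},\dots,1\text{ h}\}$; each of these updates only once per period, so its update interval is exactly $T$, and the time-mode length (hence parameter count $R\sum_{m=1}^{\order}\modeSize{m}$) grows linearly in $1/T$.

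The second step is to make fitness comparable across granularities by evaluating every method on the \emph{same} reference tensor, namely the coarse-grained (hourly) tensor window: for the fine-grained baselines I would postprocess the time-mode factor matrix by summing consecutive rows so that one row again corresponds to one hour, and then compute $\mathrm{Fitness}=1-\lfrosmall{\reconst{\tensorInF{X}}-\tensorInF{X}}/\lfrosmall{\tensorInF{X}}$. With the resulting fitness-versus-$T$ and parameters-versus-$T$ curves (Fig.~\ref{subfig:simple_cp:fitness} and Fig.~\ref{subfig:simple_cp:param}) together with the single continuous-CPD point, the stated ratios are read off directly. At equal update interval (one second) I compare the continuous-CPD point against the strongest conventional-CPD point at $T=1$\,s, which yields the $\approx 2.26\times$ fitness gap and the $\approx 55\times$ parameter gap; to obtain the update-interval gap I locate the coarsest conventional-CPD configuration whose fitness first matches that of continuous CPD --- which turns out to be $T=1$\,h --- so the interval ratio is $3600\times$.

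The main obstacle is guaranteeing fairness of the fitness comparison. An extremely fine time mode makes the baseline's tensor nearly ``diagonal'' in the time mode, so without the row-merging postprocessing the baselines look even worse than reported; I must therefore (i) always compare against the \emph{best} baseline at each $T$, (ii) hold $R$, the window length $W$, and the evaluation tensor fixed across all methods so that time-mode granularity is the only varying factor, and (iii) confirm that \hyals has actually converged on the initial window (initialized by ALS) before streaming, so that its reported point reflects the method rather than a poor warm start. Once these controls are in place, the three numerical claims follow immediately from the plotted curves.
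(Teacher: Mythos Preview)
Your proposal is correct and mirrors the paper's own substantiation almost exactly: the paper also uses the New York Taxi dataset, runs \hyals with $T=1$ hour for continuous CPD, sweeps $T$ from one second to one hour for CP-stream, OnlineSCP, and ALS, applies the same row-merging postprocessing to the fine-grained time-mode factor, and then reads the three ratios directly off Fig.~\ref{fig:motivation}. The fairness controls you add (comparing to the best baseline at each $T$, fixing $R$ and $W$, ALS-initializing the window) are consistent with the paper's setup and do not change the approach.
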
 

\begin{table}[t]
	\vspace{-0.5mm}
	\centering
	\caption{\label{Tab:data:param}Default hyperparameter settings.}
	\begin{tabular}{l|c|c|c|c|c}
		\toprule
		\textbf{Name} & \textbf{$\rank$} & \textbf{$\tLen$} & \textbf{$\batch$} (Period) & \textbf{$\thre$} & \textbf{$\eta$} \\
		\midrule
		Divvy Bikes & $20$ & $10$ & $1440 min \left(1 day\right)$ & $20$ & $1000$ \\
		Chicago Crime & $20$ & $10$ & $720 hour \left(1 month\right)$ & $20$ & $1000$  \\
		New York Taxi & $20$ & $10$ & $3600 sec \left(1 hour\right)$ & $20$ & $1000$ \\
		Ride Austin & $20$ & $10$ & $1440 min \left(1 day\right)$ & $50$ & $1000$ \\
		\bottomrule
	\end{tabular}
\end{table}

\subsection{Q2. Speed and \blue{Fitness}}
\vspace{-0.5mm}
\label{sec:exp:comparison}
\begin{figure*}[t]
	\centering
	\vspace{-4mm}
	\includegraphics[width= 0.88\linewidth]{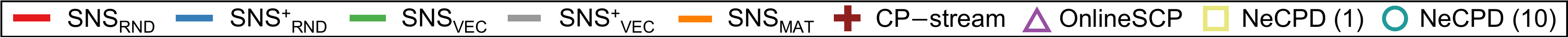}
	\\ \vspace{-1.5mm}
	\subfloat[\label{fig:relative_fitness:divvy}Divvy Bikes]{\includegraphics[width=0.5\linewidth]{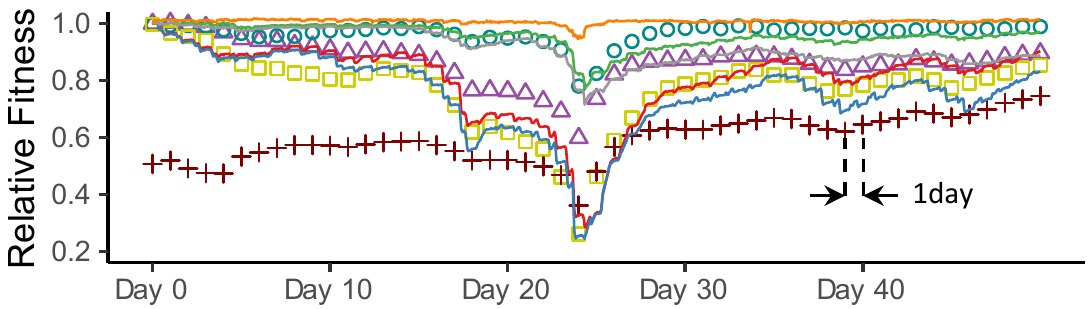}}
	\subfloat[\label{fig:relative_fitness:chicago}Chicago Crime]{\includegraphics[width=0.5\linewidth]{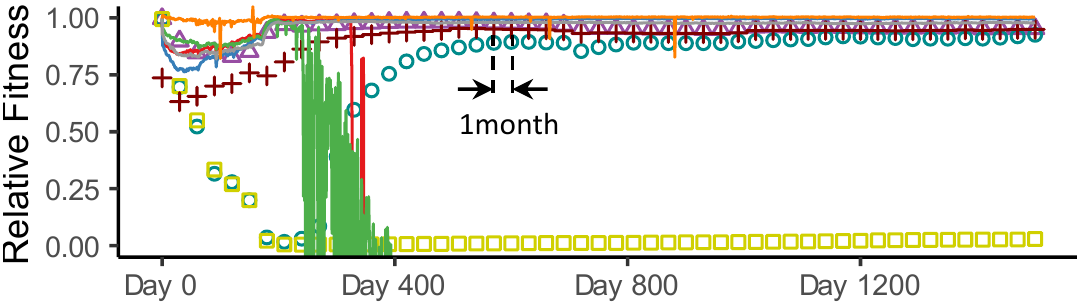}}
	\\ \vspace{-3mm}
	\subfloat[\label{fig:relative_fitness:nyt}New York Taxi]{\includegraphics[width=0.5\linewidth]{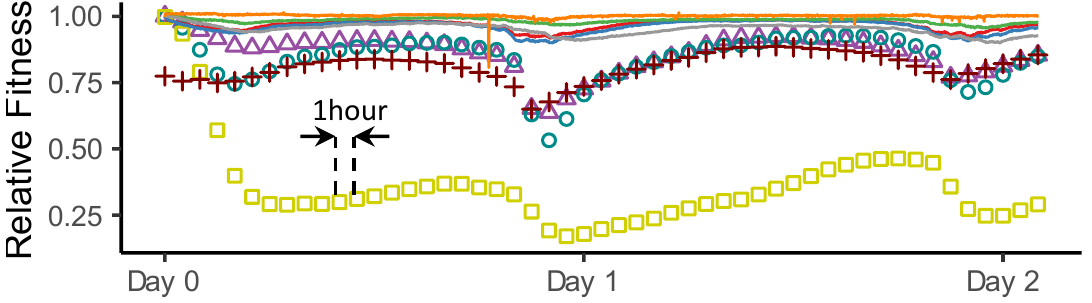}}
	\subfloat[\label{fig:relative_fitness:austin}Ride Austin]{\includegraphics[width=0.5\linewidth]{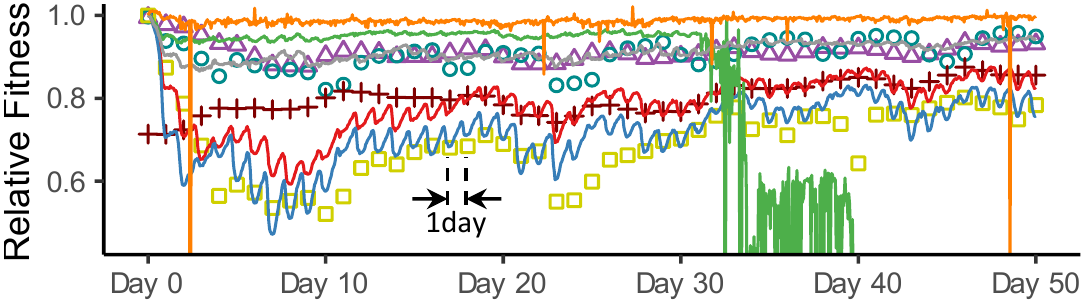}}
	\caption{\label{fig:relative_fitness} 
		\blue{Relative fitness of all versions of \method and baselines over time. 
		All versions of \method (represented as lines) update outputs whenever an event occurs, while baselines (represented as dots) update outputs only once per period $T$.}
	}
\end{figure*}

\begin{figure*} [t]
	\centering
	\vspace{-3mm}
	\includegraphics[width=0.92\linewidth]{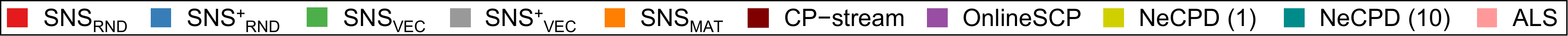} \\
	\vspace{-3.3mm}
	\subfloat[\label{fig:elapsed_time} Runtime per Update]{\includegraphics[width=0.5\linewidth,valign=b]{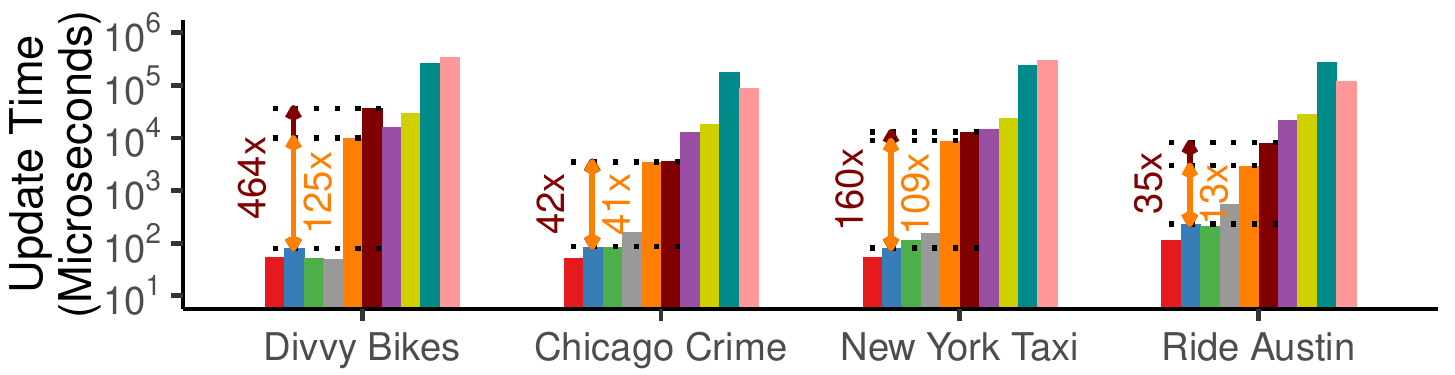}}
	\subfloat[\label{fig:relative_fitness_average}Average Relative Fitness]{\includegraphics[width=0.5\linewidth,valign=b]{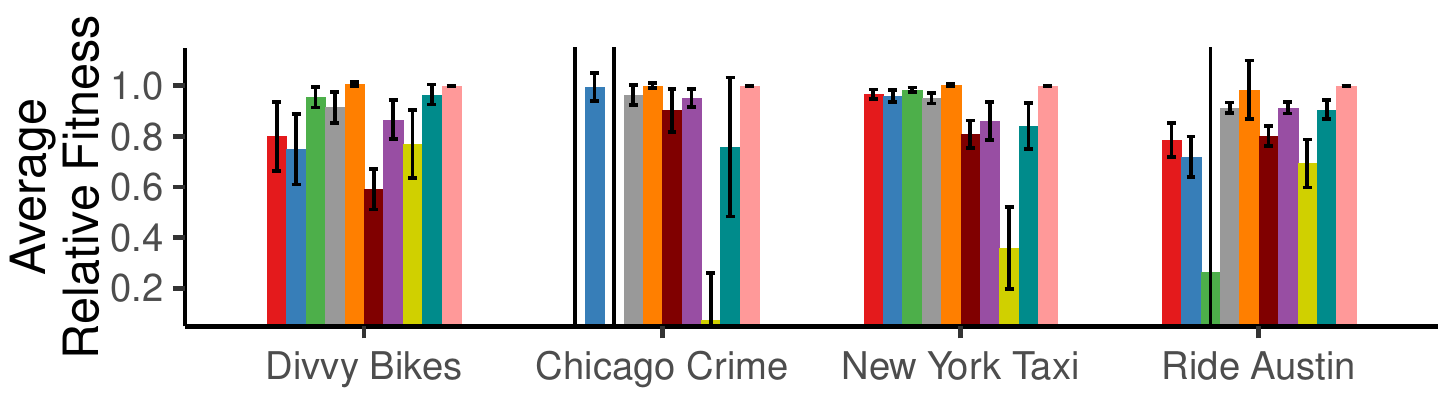}}
	\caption{\label{fig:comp} \blue{All versions of \method update factor matrices much faster with comparable \blue{fitness} than the best baseline.}}
\end{figure*}

We compared the speed and \blue{fitness} of all versions of \method and the baseline methods.
Fig.~\ref{fig:relative_fitness} shows how the relative fitness (i.e., fitness relative to ALS) changed over time, and Fig.~\ref{fig:comp} shows the average relative fitness and the average elapsed time for processing an event. We found Observations~\ref{obs:speed}, \ref{obs:errors}, and \ref{obs:accuracy}.

\begin{obs}[Significant Speed-ups] \label{obs:speed}
	All versions of \method updated factor matrices significantly faster than the fastest baseline.
	\blue{For example, \hyccd and \wals were up to \textbf{$\mathbf{464\times}$} and \textbf{$\mathbf{3.71\times}$ faster} than CP-stream, respectively}.
	\vspace{-1mm}
\end{obs}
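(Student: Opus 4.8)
The claim is empirical, so I would establish it by a controlled timing experiment; the complexity bounds of Theorems~\ref{thm:time:wals}--\ref{thm:hyccd} only predict the \emph{form} of the speed-up, and the measurement pins down the constants. The plan therefore has two parts: (a) measure the per-update cost of every method under a common protocol, and (b) reduce those measurements to the headline ratios while guarding against unfair confounds.

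First I would fix the protocol of Section~\ref{sec:exp:settings}: on each dataset of Table~\ref{Tab:data}, set the hyperparameters to the defaults of Table~\ref{Tab:data:param}, initialize the factor matrices by running ALS on the initial tensor window, and then replay the event stream for $5\tLen\batch$ time units. For every version of \method I would time a single invocation of the per-event update (Algorithm~\ref{alg:total} calling \texttt{updateRow} from Algorithm~\ref{alg:vec} or~\ref{alg:ccd}) and report the wall-clock time averaged over all events in the replay window; for each baseline I would time a single per-period update and average over all periods in the same window. This produces one ``elapsed time per update'' value per (method, dataset) pair, i.e., exactly the quantity plotted in Fig.~\ref{fig:elapsed_time}.

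Second, for each dataset I would divide each baseline's per-update time by each \method variant's per-update time and take the maximum of these ratios over the four datasets to obtain the headline numbers (the largest CP-stream-to-\hyccd ratio is the claimed $464\times$, and analogously $3.71\times$ for the CP-stream-to-\wals ratio). As a consistency check I would verify that the measured ordering matches the theory: \hyccd and \hyals run in $O(1)$ per update when $\order$, $\rank$, and $\thre$ are treated as constants, whereas CP-stream, OnlineSCP, and NeCPD each reprocess a full time slice per update and hence cost at least linearly in that slice's number of non-zeros, so a large gap in favour of \method is anticipated and the experiment merely quantifies it.

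The main obstacle is fairness, not arithmetic, and three confounds must be neutralized. (i) \emph{Granularity}: \method pays one cheap update per arriving event whereas a baseline pays one expensive update per period, so ``per update'' is not ``per event''; this is handled by stating explicitly that the quantity of interest is precisely the cost a real-time system incurs on each update it performs, which is the notion of ``update'' native to each method. (ii) \emph{Implementation language}: OnlineSCP and NeCPD are in MATLAB while \method, ALS, and CP-stream are in C++, so the headline speed-ups should be anchored to the C++-vs-C++ comparison against CP-stream, with the MATLAB baselines reported only as additional context. (iii) \emph{Timing noise and warm-up}: I would discard the initial updates, average over all remaining updates, and repeat the run (five times, as elsewhere in Section~\ref{sec:exp}) on the fixed machine described in Section~\ref{sec:exp:settings}, so that the reported averages are stable.
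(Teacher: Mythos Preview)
Your proposal is correct and follows essentially the same approach as the paper: the observation is supported purely by the experimental protocol of Section~\ref{sec:exp:settings} and the per-update timing measurements summarized in Fig.~\ref{fig:elapsed_time}, from which the headline ratios ($464\times$ and $3.71\times$) are read off as the best-case CP-stream-to-\method ratios across the four datasets. Your explicit treatment of the fairness confounds (per-event versus per-period granularity, implementation language, timing noise) is more careful than what the paper spells out, but it does not change the underlying methodology.
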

\begin{obs}[Effect of Clipping] \label{obs:errors}
	\selals and \hyals failed in some datasets due to numerical errors, as discussed in the last paragraph of Section~\ref{sec:algo:hybals}. \selccd and \hyccd, where clipping is used, successfully addressed this problem.
	\vspace{-1mm}
\end{obs}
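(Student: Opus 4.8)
Since Observation~\ref{obs:errors} is an empirical claim about algorithmic behavior rather than a closed-form theorem, the plan is to establish it through a combination of a mechanistic argument that identifies \emph{why} clipping helps and a controlled experiment that \emph{demonstrates} that it does. First I would pin down the failure mode of \selals and \hyals. As noted at the end of Section~\ref{sec:algo:hybals}, these two methods apply the closed-form least-squares updates (Eqs.~\eqref{eqn:selals:sol_temporal} and~\eqref{eqn:selals:sol_nontemporal}) \emph{without} column normalization, because normalization costs $O(\rank\sum_{m}\modeSize{m})$ per event and would dominate their per-event complexity. I would argue that, over a long stream, nothing then bounds the magnitudes of the factor-matrix entries: each update multiplies by the pseudoinverse $\mi{\fmInF{H}{m}}$, which can be ill-conditioned, so small denominators repeatedly amplify a few rows while the balancing scale $\lambda$ that \wals maintains is absent. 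The entries can therefore drift toward extremely large or extremely small absolute values, at which point finite-precision arithmetic in $\mt{\fmInF{A}{m}}\fmInF{A}{m}$ and in the pseudoinverse produces catastrophic cancellation and the fitness collapses.

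Next I would establish the mechanism by which \selccd and \hyccd avoid this. The key structural difference is that they update one entry at a time by coordinate descent (Eqs.~\eqref{eqn:ccd:sol_origin}, \eqref{eqn:ccd:sol_approx:time}, and~\eqref{eqn:ccd:sol_approx}) and immediately clip each updated entry to $[-\eta,\eta]$. I would invoke the convexity argument already sketched in the footnote to Eq.~\eqref{eqn:ccd:sol_origin}: the per-coordinate objective in Eq.~\eqref{eqn:ccd:objective} is a one-dimensional convex parabola minimized at the unclipped solution $y$ and symmetric about $y$, so projecting $y$ onto $[-\eta,\eta]$ yields a point $z$ with $|z-y|\le|x-y|$ whenever $|x|\le\eta$; hence the clipped update never increases the objective. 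By induction over the sequence of coordinate updates, every entry stays inside $[-\eta,\eta]$ \emph{and} the method retains a monotone-descent character, so magnitudes are bounded a priori and the blow-up that afflicts \selals and \hyals cannot occur. I would flag that, for the approximated time-mode and sampled updates (Eqs.~\eqref{eqn:ccd:sol_approx:time} and~\eqref{eqn:ccd:sol_approx}), the non-increase property is only exact when $\tensorInF{X}$ is well approximated, but the hard bound $|a^{(m)}_{i_mk}|\le\eta$ from clipping holds unconditionally, which is what rules out divergence.

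Finally I would back the claim experimentally, exactly as summarized in Fig.~\ref{fig:relative_fitness}. The protocol is to initialize all four methods with the same ALS factorization of the initial tensor window, process the events over $5\tLen\batch$ time units on each of the four datasets (Divvy Bikes, Chicago Crime, New York Taxi, Ride Austin), and plot relative fitness over time. The telltale signature of a numerical failure is a \emph{sudden} drop in the \selals/\hyals curves on some datasets, as opposed to a gradual degradation that would indicate a modeling limitation; the corresponding \selccd/\hyccd curves remaining stable over the same horizon is the direct evidence for the observation. To confirm that the drops are genuinely numerical and not, say, an artifact of the sampling in \hyals, I would check that they coincide with factor entries reaching extreme magnitudes and that they disappear once clipping is switched on with a finite $\eta$.

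The main obstacle is that this is fundamentally an empirical, not a provable, statement: clipping with a finite $\eta$ guarantees bounded entries but does not \emph{guarantee} high fitness, and in the approximated updates the descent property is only conditional. Consequently the strongest defensible claim is the combination of (i) the unconditional bound $|a^{(m)}_{i_mk}|\le\eta$ that removes the divergence mechanism, and (ii) the experimental demonstration that this suffices to keep fitness stable on all four datasets; I would be careful to word the observation as ``successfully addressed'' (an experimental finding) rather than as a worst-case guarantee.
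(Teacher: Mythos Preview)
Your proposal is correct and aligns with the paper's own support for this observation, which is purely empirical: the paper simply points to Fig.~\ref{fig:relative_fitness} (the sudden fitness drops of \selals/\hyals on certain datasets versus the stable \selccd/\hyccd curves) and refers back to the discussion at the end of Section~\ref{sec:algo:hybals} and the clipping paragraph in Section~\ref{sec:algo:stable}. Your mechanistic elaboration (ill-conditioned $\mi{\fmInF{H}{m}}$, unbounded entries without normalization, per-coordinate convexity ensuring clipping never increases the objective) goes somewhat beyond what the paper spells out, but it is consistent with and strictly strengthens the paper's argument rather than taking a different route.
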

\begin{obs}[Comparable \blue{Fitness}] \label{obs:accuracy}
	All stable versions of \method (i.e., \selccd, \hyccd, and \wals) achieved $72$-$100\%$ fitness relative to the most accurate baseline. 
\end{obs}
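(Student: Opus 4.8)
Since Observation~\ref{obs:accuracy} is an empirical claim about measured reconstruction quality on real data, the plan is to establish it through a controlled experiment rather than a formal derivation. First I would fix the evaluation setting: the four real-world sparse tensors of Table~\ref{Tab:data}, whose densities span $10^{-2}$ to $10^{-6}$; the default hyperparameters of Table~\ref{Tab:data:param} (rank $\rank$, window length $\tLen$, period $\batch$, sampling threshold $\thre$, clipping bound $\clipThre$); initialization of all factor matrices by running ALS on the initial tensor window; and processing the stream of events over $5\tLen\batch$ time units. The quantity of interest is the \emph{relative fitness} $Fitness_{target}/Fitness_{ALS}$, where $Fitness = 1 - \lfrosmall{\reconst{\tensorInF{X}} - \tensorInF{X}}/\lfrosmall{\tensorInF{X}}$ and $\reconst{\tensorInF{X}}$ is the CPD reconstruction of Eq.~\eqref{eq:reconstruct}; ALS on the current tensor window serves as the (near-optimal) batch reference, following prior work.

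Next I would run, on each dataset, the three stable variants (\wals, \selccd, \hyccd) together with the baselines (ALS, OnlineSCP, CP-stream, NeCPD), recording the fitness after each update and averaging it over the processed period to obtain one number per (method, dataset) pair; this is repeated five times to average out the randomness coming from \hyccd's uniform subsampling of $\thre$ indices and from the ALS initialization. For each dataset I would then identify the most accurate baseline (the one with the highest average fitness, read off Fig.~\ref{fig:relative_fitness}) and divide each stable variant's average fitness by it. The claim to verify is that, across all four datasets and all three stable variants, every such ratio lies in $[0.72, 1.00]$; the aggregated numbers are summarized in Fig.~\ref{fig:relative_fitness_average}, and the per-time-step behavior in Fig.~\ref{fig:relative_fitness}.

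The hard part is not any single computation but establishing robustness: one must show that the single default configuration of Table~\ref{Tab:data:param} simultaneously yields the lower bound $72\%$ on four heterogeneous datasets. This hinges on two approximations staying accurate enough. For \hyccd, the subsampled surrogate $\reconst{\tensorInF{X}} + \tensorInF{\bar{X}}$ must remain close to $\tensorInF{X}$ when $\degree{m,i_m} > \thre$, which is why $\thre$ is chosen below half the average index degree of the initial window; too small a $\thre$ would make $\tensorInF{\bar{X}}$ a poor correction and could push the ratio under $72\%$. The clipping bound $\clipThre$ must be large enough that projecting onto $[-\clipThre,\clipThre]$ essentially never binds at a genuine optimum (so fitness is preserved) yet small enough to prevent the numerical blow-up that \selals and \hyals exhibit (Observation~\ref{obs:errors}); this is exactly why the stable variants avoid the sudden accuracy drops discussed in Section~\ref{sec:algo:hybals}. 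Finally, to make the comparison fair, the baselines' fine-grained time-mode factor rows must be merged so that one row again corresponds to one period before fitness is computed; without this postprocessing the baselines would look worse, so it only strengthens the observation. The companion upper bound of $100\%$ is essentially automatic, since the stable variants are approximate online updates measured against (near-)optimal batch or streaming references and are not expected to exceed the best baseline.
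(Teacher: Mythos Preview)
Your proposal is correct and matches the paper's approach: Observation~\ref{obs:accuracy} is an empirical finding, not a theorem, and the paper establishes it exactly as you describe---by running the stable variants and the baselines under the experimental setup of Section~\ref{sec:exp:settings} and reading the $72$--$100\%$ range off Figs.~\ref{fig:relative_fitness} and~\ref{fig:relative_fitness_average}. One small correction: the row-merging postprocessing you mention is used only in the Q1 experiment (Section~\ref{sec:exp:continuous}, Fig.~\ref{fig:motivation}) where baselines are run with varying fine-grained periods; in the Q2 experiment that yields Observation~\ref{obs:accuracy}, all methods use the same period $\batch$ from Table~\ref{Tab:data:param}, so no such merging is needed.
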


\subsection{Q3. Data Scalability} 
\vspace{-0.5mm}
\label{sec:exp:scalability}

We measured how rapidly the total running time of different versions of \method increase with respect to the number of events in Fig.~\ref{fig:scalability}. We found Observation~\ref{obs:scalability}.

\begin{obs}[Linear Scalability] \label{obs:scalability} 
The total runtime of all \method versions was linear in the number of events.
\vspace{-2mm}
\end{obs}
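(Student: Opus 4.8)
The plan is to deduce the linear trend in \emph{total} runtime from the \emph{per-event} time-complexity bounds already proved for every variant (Theorems~\ref{thm:time:wals}, \ref{thm:time:selals}, \ref{thm:hyals}, \ref{thm:selccd}, and \ref{thm:hyccd}) together with the space bound of the continuous tensor model (Theorem~\ref{thm:model:complexity:space}). The crucial point is that the continuous tensor model keeps only the $W$ most recent tensor units, so at any time $t$ the number of non-zeros of $\tensorInF{X}=\tensorWindow{D}{t}$ is upper bounded by the number of active tuples $|S_t|=|\{n : t_n \in (t-W\batch,t]\}|$, a quantity that depends on $W$, $\batch$, and the local arrival density of the stream but \emph{not} on the total number of events processed so far. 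First I would fix the hyperparameters $\order$, $\rank$, $W$, $\batch$, $\thre$, $\clipThre$ and set $\Delta \equiv \max_t |S_t|$, treating $\Delta$ as a constant of the (fixed) dataset.

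Under this reduction, every stream-dependent term in the cited theorems collapses to a constant. For \hyals and \hyccd this is immediate, since their per-event complexities are already $O(1)$ once $\order$, $\rank$, $\thre$ are constants. For \selals and \selccd the only stream-dependent term is $\sum_{m=1}^{\order-1} deg(m,i_m)$, and each summand is at most $|\tensorInF{X}+\Delta\tensorInF{X}| \le \Delta+2 = O(1)$. For \wals the term $|\tensorInF{X}+\Delta\tensorInF{X}|$ is likewise $O(1)$, while $\sum_{m=1}^{\order}\modeSize{m}$ is a fixed property of the data. Next I would account for the per-event bookkeeping: by the events \ref{enum:event:first}--\ref{enum:event:third} of Section~\ref{sec:model:impl} and Theorem~\ref{thm:model:complexity:time}, each incoming tuple triggers $W+1=O(1)$ scheduled updates, each of which invokes \texttt{updateRow} (Algorithm~\ref{alg:total}) a constant number of times and, by the previous sentence, costs $O(1)$. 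Summing over the $N$ events of the stream therefore gives a total runtime of $\sum_{n=1}^{N} O(1) = O(N)$, i.e., linear in the number of events, which I would then corroborate with the measured curves in Fig.~\ref{fig:scalability}.

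The main obstacle --- and the reason the claim is stated as an empirical observation rather than a theorem --- is the constant $\Delta=\max_t |S_t|$: the argument above is only meaningful if the local arrival rate of the stream does not itself grow with $N$. For roughly stationary streams (as in all four datasets) this holds, but a fully rigorous version would need either to make that stationarity assumption explicit or to state the bound in the data-dependent form $O\!\big(\sum_{n}(\order\rank\sum_{m} deg(m,i_m) + \dots)\big)$. A secondary, minor point is that maintaining the tensor window must not introduce a super-linear factor; this follows because Algorithm~\ref{alg:model} processes each event and its $O(W)$ scheduled add/subtract operations in $O(\order W)$ time (Theorem~\ref{thm:model:complexity:time}) using $O(\order\Delta)$ space (Theorem~\ref{thm:model:complexity:space}), so the event-scheduling machinery contributes only an $O(N)$ additive term.
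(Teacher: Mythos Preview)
Your argument is correct, but it takes a genuinely different route from the paper. In the paper, Observation~\ref{obs:scalability} is purely \emph{empirical}: the authors simply measure total runtime as the stream is processed and report that the curves in Fig.~\ref{fig:scalability} are straight lines; no derivation from the per-event complexity theorems is given. You, by contrast, give a theoretical justification: you fix the hyperparameters, bound the stream-dependent quantities $|\tensorInF{X}+\Delta\tensorInF{X}|$ and $\sum_m deg(m,i_m)$ by the maximum number of active tuples $\Delta=\max_t|S_t|$, invoke Theorems~\ref{thm:time:wals}--\ref{thm:hyccd} to get $O(1)$ cost per event, and sum over the $N(W{+}1)$ scheduled events. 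This is sound, and you are right to flag that the argument hinges on $\Delta$ not growing with $N$ (roughly stationary arrival rate), which is exactly why the paper phrases the claim as an observation rather than a theorem. What your approach buys is an explanation of \emph{why} the curves are linear and an explicit identification of the hidden assumption; what the paper's approach buys is that it holds regardless of constant factors and implementation overheads that the asymptotic bounds do not capture.
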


\begin{figure}[t]
	\centering
	\includegraphics[width=\linewidth]{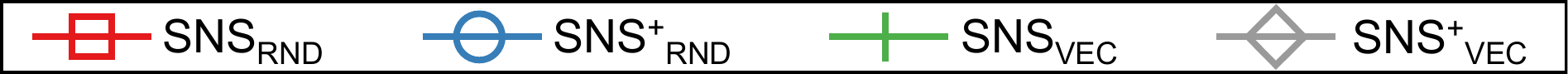} \\ \vspace{1mm}
	\includegraphics[width=\linewidth]{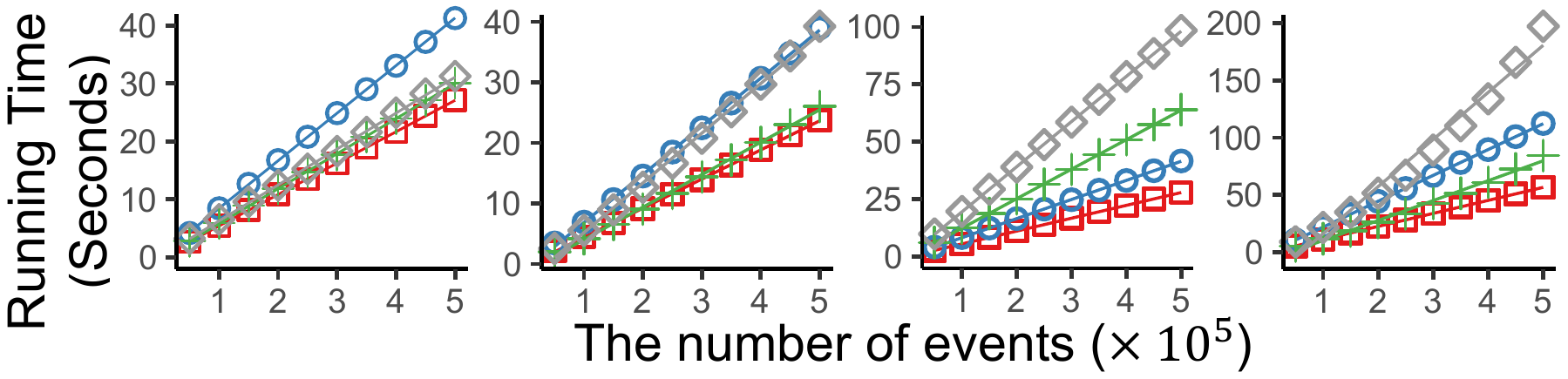}
	{\footnotesize ~~~~~(a) Divvy Bikes (b) Chicago Crime (c) New York Taxi (d) Ride Austin}
	\caption{\label{fig:scalability} The total runtime of \method is linear in the number of events. While we omit \wals due to long execution time, it shows a similar trend.
	}
\end{figure}

\subsection{Q4. Effect of Parameters}
\vspace{-0.5mm}
\label{sec:exp:sampling}
To investigate the effect of the threshold $\thre$ for sampling
on the performance of \hyals and \hyccd, we measured their relative fitness and update time while varying $\thre$ from $25\%$ to $200\%$ of the default value in Table~\ref{Tab:data:param}.\footnote{We set $\clipThre$ to $500$ in the Chicago Crime dataset since setting it to $1000$ led to unstable results.} 
The results are reported in Fig.~\ref{fig:sample}, and we found Observation~\ref{obs:thre}.



\begin{figure}[t]
	\centering
	\includegraphics[width= 0.48\linewidth]{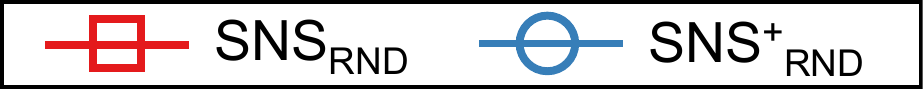} \\ 
	\textbf{\small Effect on Relative Fitness}: \hfill \ ~ \\
	\vspace{1mm}
	\includegraphics[width=\linewidth]{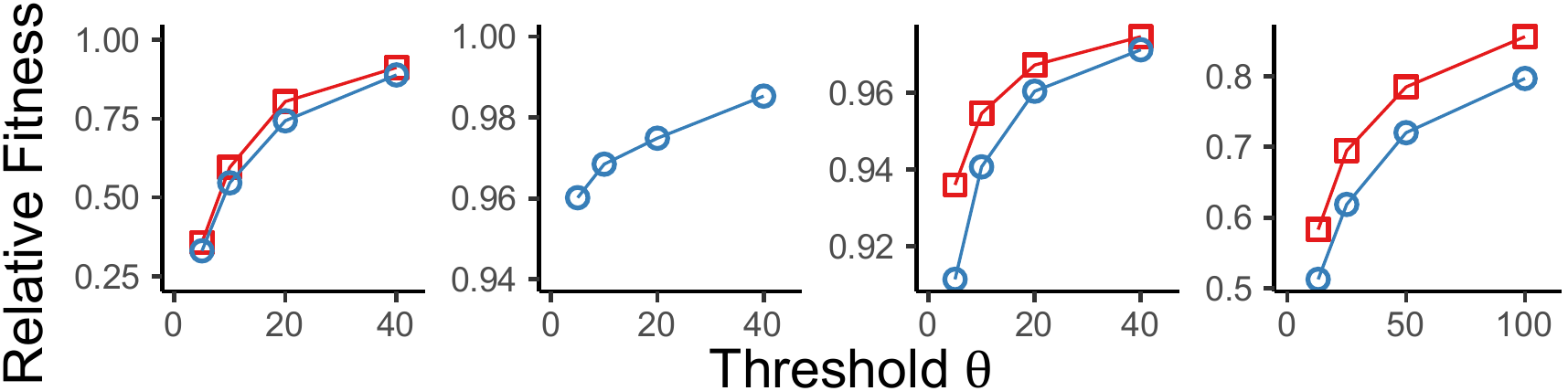} \\
	\vspace{-1mm}
	{\footnotesize ~~~~~(a) Divvy Bikes (b) Chicago Crime (c) New York Taxi (d) Ride Austin}
	\\ \vspace{1mm}
	\textbf{\small Effect on Speed (Elapsed Time per Update)}: \hfill \ ~ \\
	\vspace{1mm}
	\includegraphics[width=\linewidth]{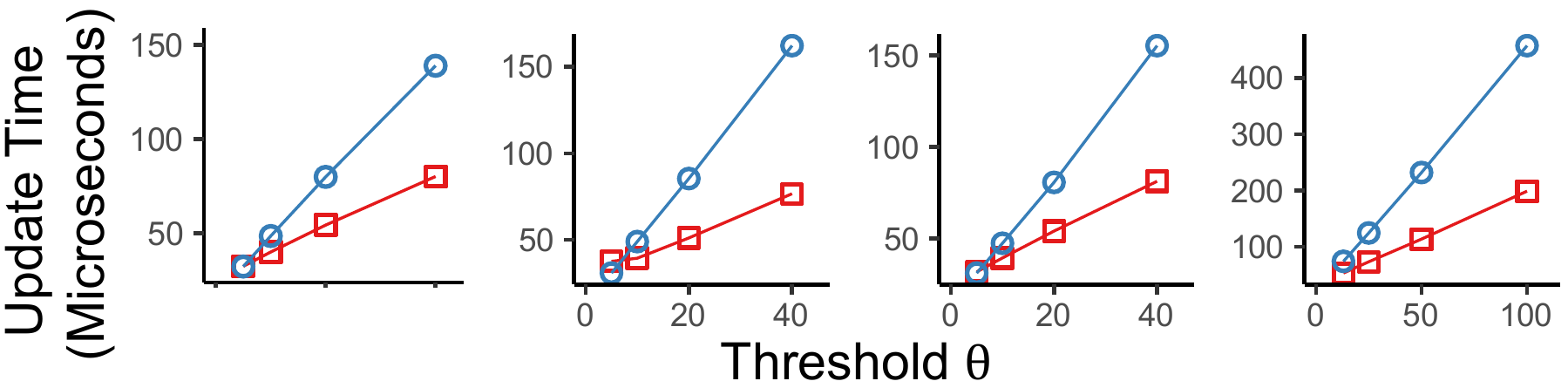} \\
	\vspace{-1mm}
	{\footnotesize ~~~~~(e) Divvy Bikes (f) Chicago Crime (g) New York Taxi (h) Ride Austin}
	\caption{\label{fig:sample}Effect of $\thre$ on the performance of \hyals and \hyccd.
		As $\thre$ increases, the fitness increases with diminishing returns, while the runtime grows linearly.
		\hyals fails in the Chicago Crime dataset due to instability.
	}
\end{figure}

\begin{obs}[Effect of $\thre$] \label{obs:thre}
As $\thre$ increases (i.e., more indices are sampled), the fitness of \hyals and \hyccd increases with diminishing returns, while their runtime grows linearly. 
\vspace{-1mm}
\end{obs}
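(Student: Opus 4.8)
The statement has two parts, which I would establish by different means. \textbf{(i) Runtime grows linearly in $\thre$.} This is an immediate corollary of the complexity results already proved: per update, \hyals costs $O(\order^2\rank\,\thre + \order^2\rank^2 + \order\rank^3)$ by Eq.~\eqref{eq:time:hyals} and \hyccd costs $O(\order^2\rank\,\thre + \order^2\rank^2)$ by Eq.~\eqref{eq:time:hyccd}. For a fixed dataset $\order$ and $\rank$ are constants, so both bounds are affine in $\thre$ with a positive leading coefficient, matching the claimed linear growth; no super-linear term can appear because $\thre$ enters only through (a) the $O(\order\thre)$ sampling step that forms $\tensorInF{\bar{X}}$ and (b) the $O(\thre)$ bound on the number of nonzeros of $\matri{(\mat{\bar{X}} + \Delta \mat{X})}{m}$ used in Eq.~\eqref{eqn:hyals:sol} and Eq.~\eqref{eqn:ccd:sol_approx}. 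To be thorough I would also check that the predicted slope is consistent with the curves in Fig.~\ref{fig:sample}(e)--(h).

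\textbf{(ii) Fitness increases with diminishing returns.} This part is intrinsically empirical --- it is a claim about how well a low-rank surrogate tracks real tensors --- so the plan is to back it with the experiment of Fig.~\ref{fig:sample}(a)--(d) on all four datasets, together with the following heuristic. Recall from Section~\ref{sec:algo:hybals} that \hyals (and likewise \hyccd via Eq.~\eqref{eqn:ccd:sol_approx}) replaces $\tensorInF{X}$ in the row-update problem Eq.~\eqref{eqn:selals:nontemproal_final} by $\reconst{\tensorInF{X}} + \tensorInF{\bar{X}}$, where $\bar{x}_J = x_J - \tilde{x}_J$ on the sampled set $S$ of size $\thre$ and $\bar{x}_J = 0$ elsewhere; thus $\reconst{\tensorInF{X}} + \tensorInF{\bar{X}}$ agrees with $\tensorInF{X}$ on $S$ and with $\reconst{\tensorInF{X}}$ off $S$, and as $\thre$ grows it converges to $\tensorInF{X}$. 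Hence the sub-problem solved by \hyals/\hyccd converges to the exact row-update least-squares problem of \selals/\selccd, whose solution is a genuine step on the true objective Eq.~\eqref{eqn:obj} --- this is why larger $\thre$ tends to raise fitness. Diminishing returns are a sampling-without-replacement phenomenon: each additional drawn index corrects one more entry of the surrogate toward $\tensorInF{X}$, but as $\thre$ approaches $deg(m,i_m)$ the fraction of still-uncorrected mass, and hence the marginal gain, shrinks, giving a concave-in-$\thre$ trend.

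\textbf{Main obstacle.} The hard part is (ii): neither monotonicity of fitness in $\thre$ nor concavity of the gains can be proved in full generality. The surrogate update exactly minimizes a \emph{perturbed} objective, not the true one, so an unlucky sample --- or a tensor far from low rank, where $\reconst{\tensorInF{X}}$ is a poor stand-in for $\tensorInF{X}$ off $S$ --- can momentarily lower fitness; and the optimum of a least-squares fit is not guaranteed to be a monotone concave function of the number of retained observations for arbitrary data. So the honest conclusion is to present (ii) as an empirical regularity supported by Fig.~\ref{fig:sample} and the heuristic above, while (i) is fully rigorous given Theorems~\ref{thm:hyals} and \ref{thm:hyccd}.
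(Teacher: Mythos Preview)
Your proposal is correct and, in fact, goes beyond what the paper does. In the paper, Observation~\ref{obs:thre} is not given any proof or argument at all: it is simply reported as an empirical finding read off from Fig.~\ref{fig:sample}, which plots relative fitness and per-update runtime as $\thre$ is varied from $25\%$ to $200\%$ of its default value on the four datasets. The paper does not invoke Theorems~\ref{thm:hyals} or \ref{thm:hyccd} here, nor does it offer the sampling heuristic you sketch for diminishing returns.

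Your approach differs in that you (a) derive the linear-in-$\thre$ runtime rigorously from the already-established complexity bounds Eq.~\eqref{eq:time:hyals} and Eq.~\eqref{eq:time:hyccd}, and (b) supply a principled heuristic (convergence of $\reconst{\tensorInF{X}}+\tensorInF{\bar{X}}$ to $\tensorInF{X}$ as $\thre\to deg(m,i_m)$, plus a sampling-without-replacement argument for concavity) to explain the fitness trend. This buys you an actual explanation rather than a bare reading of plots; the paper's treatment buys brevity, since the statement is labeled an \emph{Observation} in an experiments section and is not intended to be proved. Your candid acknowledgment that part~(ii) cannot be made fully rigorous is appropriate and matches the empirical status the paper implicitly assigns to it.
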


In Fig.~\ref{fig:clipping}, we measured the effect of the threshold $\eta$ for clipping on the relative fitness of \selccd and \hyccd, while changing $\eta$ from $32$ to $16,000$. Note that $\eta$ does not affect their speed.
We found Observation~\ref{obs:eta}.

\begin{obs}[Effect of $\eta$] \label{obs:eta}
	The fitness of \selccd and \hyccd is insensitive to $\eta$ as long as $\eta$ is small enough.
\end{obs}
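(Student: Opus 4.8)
The plan is to back Observation~\ref{obs:eta} with a short analytical argument for \emph{why} accuracy should be robust to the clipping threshold once it is below a moderate value, and then a controlled sweep over $\eta$ that confirms this on all four datasets. For the analytical part I would start from the footnote attached to the clipping rule in Section~\ref{sec:algo:stable}: with all coordinates fixed except $a^{(m)}_{i_m k}$, the per-coordinate objective in Eq.~\eqref{eqn:ccd:objective} is a one-dimensional convex quadratic, minimized at the unclipped update $y$ produced by Eq.~\eqref{eqn:ccd:sol_origin} (respectively Eq.~\eqref{eqn:ccd:sol_approx:time} and Eq.~\eqref{eqn:ccd:sol_approx} when $\tensorInF{X}$ is well approximated) and symmetric about $y$. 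Since every previously stored entry lies in $[-\eta,\eta]$, the pre-update value $x$ does too, so the clipped value $z$ --- the projection of $y$ onto $[-\eta,\eta]$ --- satisfies $|y-z|\le|y-x|$, and hence the objective at $z$ does not exceed the objective at $x$. Thus clipping preserves the monotone-descent property of coordinate descent (exactly for the non-time modes, and up to the surrogate-approximation error for the time mode), so the \emph{only} channel through which $\eta$ can change the final factor matrices, and therefore the fitness, is whether the coordinate-descent trajectory would otherwise visit entries of magnitude exceeding $\eta$.

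From there I would argue that on a tensor window whose non-zeros have bounded magnitude --- which holds for the aggregated real-world streams in Table~\ref{Tab:data} --- a rank-$R$ least-squares fit has optimal factor entries of bounded magnitude, so there is a data-dependent $\eta^{\star}$ such that for every $\eta\ge\eta^{\star}$ the clipping step is essentially never active and the update trajectory coincides with that of unclipped coordinate descent; hence fitness is flat in $\eta$ throughout this regime. In the opposite direction, when $\eta$ is \emph{large} the box is too loose to catch the occasional ill-conditioned update --- the near-singular $\fmInF{H}{m}$ that arises for an unseen or rarely seen index $i_m$ and destabilizes \selals and \hyals (Section~\ref{sec:algo:hybals}) --- so a single blown-up entry can propagate and fitness degrades and becomes erratic. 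These two regimes together are exactly the content of ``insensitive to $\eta$ as long as $\eta$ is small enough.'' For the empirical confirmation I would run \selccd and \hyccd on Divvy Bikes, Chicago Crime, New York Taxi, and Ride Austin with all other hyperparameters fixed to Table~\ref{Tab:data:param}, initialize factor matrices with ALS on the initial window, process events over $5\tLen\batch$ time units, average the relative fitness over $5$ runs, and plot it against $\eta$ taken over a geometric grid from $32$ to $16{,}000$; a curve that is flat over the lower part of that range on each dataset establishes the observation. By Theorems~\ref{thm:selccd} and~\ref{thm:hyccd} the running time does not depend on $\eta$, so no timing comparison is needed.

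The main obstacle is making ``small enough'' precise: for an arbitrary sparse tensor stream there is no clean a priori bound on the magnitude of the optimal factor entries --- the fit of a rarely seen index can in principle be large --- so $\eta^{\star}$ is genuinely data-dependent and the claim does not reduce to a closed-form inequality. My intended resolution is to keep the analytical component at the level of the monotone-descent argument above (which already explains both the robustness for moderate $\eta$ and the failure mode at large $\eta$) and to let the experiment certify that, for the datasets of interest, the flat regime already covers a wide and practically relevant range of $\eta$.
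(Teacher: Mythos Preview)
The paper supports Observation~\ref{obs:eta} purely empirically: it sweeps $\eta$ over the geometric range $32$ to $16{,}000$ with the remaining hyperparameters fixed to Table~\ref{Tab:data:param}, plots relative fitness for \selccd and \hyccd on all four datasets (Fig.~\ref{fig:clipping}), notes that $\eta$ does not affect runtime, and reads off the flat regime. Your experimental plan reproduces this exactly, so on the evidential side you match the paper.

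Your analytical preamble is your own addition---the paper offers no such argument---and the monotone-descent part is sound, but one passage is internally inconsistent. You first claim that for every $\eta\ge\eta^{\star}$ clipping is essentially never active and the trajectory coincides with unclipped coordinate descent, ``hence fitness is flat in $\eta$ throughout this regime''; you then claim that for large $\eta$ the box fails to catch ill-conditioned updates and fitness degrades. Both cannot hold on the same half-line $[\eta^{\star},\infty)$. The resolution is that the first statement concerns the \emph{well-conditioned} coordinate updates (whose unclipped targets are bounded by roughly $\eta^{\star}$), while the second concerns the rare ill-conditioned ones (whose unclipped targets can be enormous); the insensitive regime is therefore an interval, not a half-line, and ``small enough'' in the observation means $\eta$ lies below the upper end of that interval. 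With that correction your explanation is a useful complement to the paper's purely experimental treatment, and your acknowledgment that $\eta^{\star}$ is only data-dependent (not boundable a priori) is appropriate.
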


\begin{figure}[h]
	\vspace{-4mm}
	\centering
	\includegraphics[width= 0.45\linewidth]{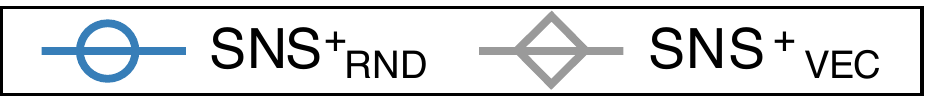} \\
	\vspace{1mm}
	\includegraphics[width=\linewidth]{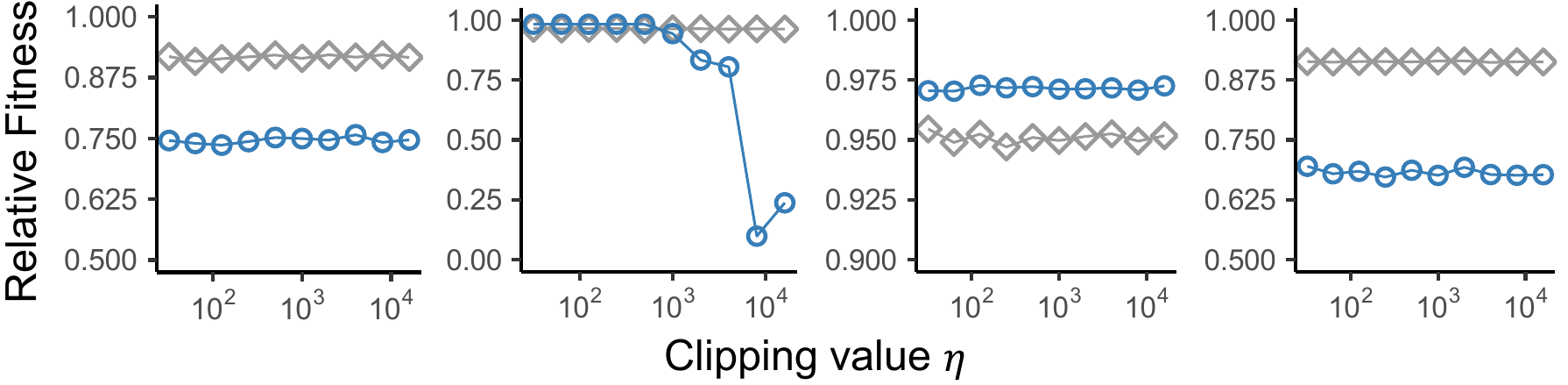} \\
	\vspace{-1mm}
	{\footnotesize ~~~~~(a) Divvy Bikes (b) Chicago Crime (c) New York Taxi (d) Ride Austin}
	\caption{\label{fig:clipping} Effects of $\eta$ on the \blue{fitness} of \selccd and \hyccd. The \blue{fitness} is insensitive to $\eta$, as long as it is small enough.}
\end{figure}



\subsection{Q5. Practitioner's Guide}
\vspace{-0.5mm}
\label{sec:exp:guide}
Based on the above theoretical and empirical results, we provide a practitioner's guide for \method's users. 
\begin{itemize}[leftmargin=*]
	\item We do not recommend \selals and \hyals. They are prone to numerical errors and thus unstable.
	\item We recommend using the version \blue{fitting the input tensor best} within your runtime budget among \wals, \selccd, and \hyccd. There exists a clear trade-off between their speed and \blue{fitness}. In terms of speed, \hyccd is the best followed by \selccd, and \wals is the slowest. In terms of \blue{fitness}, \wals is the best followed by \selccd, and \hyccd is the most inaccurate.
	\item If \hyccd is chosen, we recommend increasing $\thre$ as much as possible, within your runtime budget. 
\end{itemize}

\subsection{\blue{Q6. Application to Anomaly Detection}}
\vspace{-0.5mm}
\label{sec:exp:application}
\blue{We applied \hyccd, OnlineSCP, and CP-stream to an anomaly detection task.
	In the New York Taxi dataset, we injected abnormally large changes (specifically, $15$, which is $5$ times the maximum change in 1 second in the data stream) in $20$ randomly chosen entries. 
	Then, we measured the Z scores of the errors in all entries in the latest tensor unit, where new changes arrive, as each method proceeded.
	After that, we investigated the top-$20$ Z-scores in each method. As summarized in Fig.~\ref{fig:anomaly}, the precision, which is the same as the recall in our setting, was highest in \hyccd and OnlineSCP.
	More importantly, the average time gap between the occurrence and detection of the injected anomalies was \blue{about $0.0015$ seconds} in \hyccd, while that exceeded $1,400$ seconds in the others, which have to wait until  the current period ends to update CPD.	
}


	

\section{Related Work}
\label{sec:related}
In this section, we  review related work on online CP decomposition (CPD) and window-based tensor analysis.
Then, we briefly discuss the relation between CPD and machine learning.
See \cite{kolda2009tensor,papalexakis2016tensors} for more models, solvers, and applications.


\subsection{\blue{Online Tensor Decomposition}}
\vspace{-0.5mm}
\label{sec:related_work:onlinecp}

Nion et al. \cite{nion2009adaptive} proposed Simultaneously Diagonalization Tracking (SDT) and Recursively Least Squares Tracking (RLST) for incremental CP decomposition (CPD) of three-mode dense tensors.
Specifically,
SDT incrementally tracks the SVD of the unfolded tensor, while RLST recursively updates the factor matrices to minimize the weighted squared error.
A limitation of the algorithms is that they are only applicable to three-mode dense tensors.
Gujral et al. \cite{gujral2018sambaten} proposed a sampling-based method called SamBaTen for incremental CPD of three-mode dense and sparse tensors.
Zhou et al. proposed onlineCP \cite{zhou2016accelerating} and onlineSCP \cite{zhou2018online} for incremental CPD of higher-order dense tensors and sparse tensors, respectively.
\blue{
	Smith et al. \cite{smith2018streaming} proposed an online CPD algorithm that can be extended to include non-negativity and sparsity constraints.
	It is suitable for both sparse and dense tensors.
	SGD-based methods \cite{anaissi2020necpd,ye2019online} have also been developed for online CPD.
	Specifically, Ye et al.~\cite{ye2019online} proposed one for Poisson-distributed count data with missing values and Anaissi et al.~\cite{anaissi2020necpd} employed Nesterov's Accelerated Gradient method into SGD.
	Sobral et al.~\cite{sobral2015online} developed an online framework for subtracting background pixels from multispectral video data.}
However, all these algorithms process every new entry with the same time-mode index (e.g., a slice in Fig.~\ref{subfig:coarse}) at once.
They are not applicable to continuous CPD (Problem~\ref{defn:problem:cpd}), where changes in entries need to be processed instantly.

\blue{
BICP~\cite{huang2016bicp} efficiently updates block-based CPD~\cite{phan2011parafac, li20162pcp} when the size of the input tensor is fixed but some existing entries change per update.
It requires partitioned subtensors and their CPDs rather than the CPD of the entire input tensor.}

Moreover, several algorithms have been developed for incrementally updating the outputs of matrix factorization \cite{he2016fast,devooght2015dynamic}, Bayesian probabilistic CP factorization \cite{du2018probabilistic}, and generalized CPD \cite{zhou2017sced}, when previously unknown entries are revealed afterward.
They are also not applicable to continuous CPD (Problem~\ref{defn:problem:cpd}), where even increments and decrements of revealed entries need to be handled (see Definition~\ref{defn:delta}).

\begin{figure}[t]
	\centering
	\vspace{-5mm}
	\includegraphics[width=0.7\linewidth]{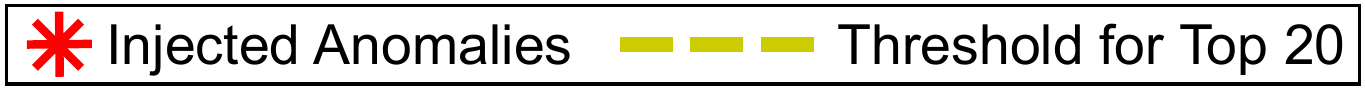} \\ \vspace{-3mm}
	\subfloat[\blue{Z-scores in \hyccd (left), OnlineSCP (middle), and CP-stream (right)}]{\includegraphics[width=1\linewidth]{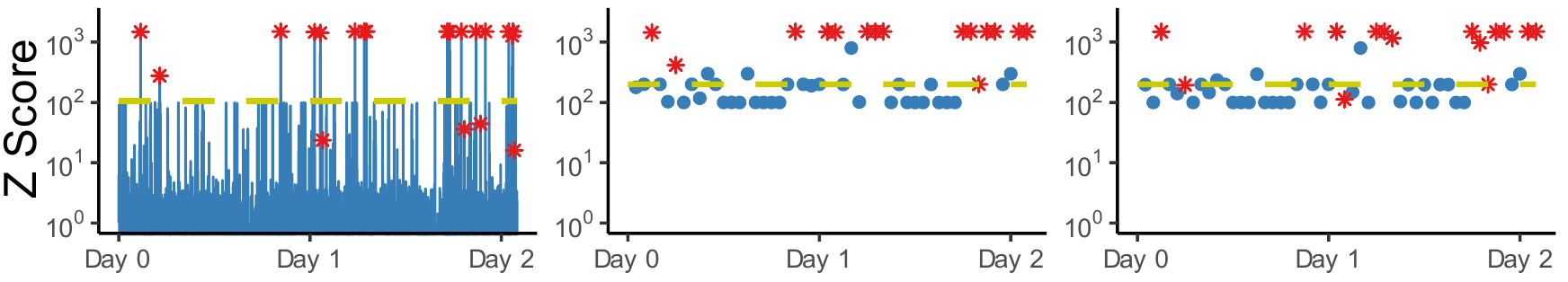}\label{subfig:anomaly:ours}} \\  \vspace{-3mm}
	\subfloat[\blue{Numerical Comparison with Baselines}]{
		\scalebox{0.68}{
			\begin{tabular}{l|c|c}
				\toprule
				& Precision @ Top-20 &  Time Gap between Occurrence and Detection \\ 
				\midrule
				\hyccd & \textbf{0.80} & \textbf{0.0015 seconds} \\
				\midrule
				OnlineSCP & \textbf{0.80} &  1601.00 seconds\\
				\midrule
				CP-stream & 0.70 &  1424.57 seconds\\
				\bottomrule
			\end{tabular}
		}
		\label{Tab:anomaly}
	}   \\
	\caption{\blue{\label{fig:anomaly}\method (spec., \hyccd) detects injected anomalies in the New York Taxi dataset much faster with comparable accuracy than baselines.}
	}
\end{figure}

\blue{
Lastly, it is worth mentioning that there have been several studies on approximation properties of some offline CPD algorithms.
Haupt et al. \cite{haupt2017near} proved a sufficient condition for a sparse random projection technique to solve the low-rank tensor regression problem efficiently with an approximation quality.
Song et al. \cite{song2016sublinear} showed that an importance-sampling based orthogonal tensor decomposition algorithm achieves a sublinear time complexity with provable guarantees. 
To the best of our knowledge, however, there has been limited work on theoretical properties of online CPD of tensor streams.
}


\subsection{Window-based Tensor Analysis}
\vspace{-0.5mm}
Sun et al. \cite{sun2006window, sun2008incremental} first suggested the concept of window-based tensor analysis (WTA). Instead of analyzing the entire tensor at once, they proposed to analyze a temporally adjacent subtensor within a time window at a time, while sliding the window.
Based on the sliding window model, they devised an incremental Tucker decomposition algorithm for tensors growing over time.
Xu et al. \cite{xu2019anomaly} also suggested a Tucker decomposition algorithm for sliding window tensors and used it to detect anomalies in road networks.
Zhang et al. \cite{zhang2018variational} used the sliding window model with exponential weighting for robust Bayesian probabilistic CP factorization and completion.
Note that all these studies assume a time window moves `discretely', while in our continuous tensor model, a time window moves `continuously', as explained in Section~\ref{sec:model}.


\subsection{Relation to Machine Learning}
\vspace{-0.5mm}
\label{sec:related_work:ml}

\blue{CP decomposition (CPD) has been a core building block of numerous machine learning (ML) algorithms, which are designed for classification \cite{rendle2010factorization}, weather forecast \cite{xu2018muscat}, recommendation \cite{yao2015context}, stock price prediction \cite{spelta2017financial},  to name a few. Moreover, CPD has proven useful for outlier removal \cite{najafi2019outlier,lee2021robust}, imputation \cite{shin2016fully,lee2021robust}, and dimensionality reduction \cite{kolda2009tensor}, and thus it can be used as a preprocessing step of ML algorithms, many of which are known to be vulnerable to outliers, missings, and the curse of dimensionality. 
We refer the reader to \cite{sidiropoulos2017tensor} for more roles of tensor decomposition for ML.
By making the core building block ``real-time'', our work represents a step towards real-time ML.
Moreover, \method can directly be used as a preprocessing step of existing streaming ML algorithms.
}


\section{Conclusion}
\label{sec:conclusion}
In this work, we propose \method, 
aiming to make tensor analysis ``real-time'' and applicable to time-critical applications. 
We summarize our contributions as follows:
\begin{itemize}[leftmargin=*]
    \item{\textbf{New data model}}: We propose the continuous tensor model and its efficient event-driven implementation  (Section~\ref{sec:model}).
    With our CPD algorithms, it achieves near real-time updates, high \blue{fitness}, and a small number of parameters (Fig.~\ref{fig:motivation}).
    \item{\textbf{Fast online algorithms}}: 
    We propose a family of online algorithms for CPD in the continuous tensor model (Section~\ref{sec:algo}).
    They update factor matrices in response to changes in an entry up to \blue{$464\times$} faster than online competitors, with \blue{fitness} even comparable (spec., $72$-$100\%$) to offline competitors (Fig.~\ref{fig:comp}).
    We analyze their complexities (Theorems~\ref{thm:time:wals}-\ref{thm:hyccd}).
    \item{\textbf{Extensive experiments}}: We evaluate the speed, \blue{fitness}, and scalability of our algorithms on $4$ real-world \blue{sparse} tensors. We analyze the effects of hyperparameters. The results indicate a clear trade-off between speed and \blue{fitness}, based on which we provide practitioner's guides (Section~\ref{sec:exp}).

\end{itemize}

\textbf{Reproducibility}: The code and datasets used in the paper are available at \url{https://github.com/DMLab-Tensor/SliceNStitch}.

\section*{Acknowledgement}
\textls[-15]{\small This work was supported by Samsung Electronics Co., Ltd., Disaster-Safety Platform Technology Development Program
of the National Research Foundation of Korea (NRF) funded by the Ministry of Science and ICT (No.
2019M3D7A1094364), and Institute of Information \& Communications Technology Planning \& Evaluation (IITP) grant funded by the Korea government (MSIT) (No. 2019-0-00075, Artificial Intelligence Graduate School Program (KAIST)).}

\bibliographystyle{IEEEtran}
\bibliography{BIB/reference}

\end{document}